\newif\ifapxinsamepage
\crefname{section}{Sec.}{Secs.}
\crefname{proposition}{Prop.}{Props.}
\crefname{lemma}{Lem.}{Lems.}
\crefname{model}{Mod.}{Mods.}
\crefname{appendix}{App.}{Apps.}
\crefname{algorithm}{Alg.}{Algs.}
\newtheorem{theorem}{Theorem}[section]
\newtheorem{lemma}[theorem]{Lemma}
\renewcommand{\paragraph}[1]{{\bf#1}~~}
\pgfplotsset{compat=newest} 
\renewcommand{\vec}[1]{\mathbf{#1}}
\definecolor{blue}{rgb}{0.12156862745098039, 0.4666666666666667, 0.7058823529411765}
\definecolor{red}{rgb}{0.8392156862745098, 0.15294117647058825, 0.1568627450980392}
\newlength\figureheight
\newlength\figurewidth
\newcommand{\PreserveBackslash}[1]{\let\temp=\\#1\let\\=\temp}
\newcolumntype{C}[1]{>{\PreserveBackslash\centering}p{#1}}
\pgfplotsset{/pgf/number format/.cd, 1000 sep={}}
\pgfplotsset{every axis/.append style={
		grid style={line width=0.6pt,dotted,gray}}}
\pgfplotsset{every axis/.append style={
		legend style={inner xsep=1pt, inner ysep=0.5pt, nodes={inner sep=1pt, text depth=0.1em},draw=none,fill=none}
}}
\pgfplotsset{ignore legend/.style={every axis legend/.code={\let\addlegendentry\relax}}}
\newcommand\inputpgf[2]{{
\let\pgfimageWithoutPath\pgfimage
\renewcommand{\pgfimage}[2][]{\pgfimageWithoutPath[##1]{#1/##2}}
\input{#1/#2}
}}
\newcommand\thefontsize[1]{{#1 The current font size is: \f@size pt\par}}
\newcommand{\kron}{\raisebox{1pt}{\ensuremath{\:\otimes\:}}} %
\newcommand{\mbf}[1]{\mathbf{#1}}
\newcommand{\KL}{\mathrm{KL}}
\newcommand{\vu}{\mbf{u}}
\newcommand{\vf}{\mbf{f}}
\newcommand{\vq}{\mbf{q}}
\newcommand{\vg}{\mbf{g}}
\newcommand{\vm}{\mbf{m}}
\newcommand{\vs}{\mbf{s}}
\newcommand{\vmu}{\bm{\mu}}
\newcommand{\vbeta}{\bm{\beta}}
\newcommand{\vxi}{\bm{\xi}}
\newcommand{\vtheta}{\bm{\theta}}
\newcommand{\vphi}{\bm{\phi}}
\newcommand{\MA}{\mbf{A}}
\newcommand{\MB}{\mbf{B}}
\newcommand{\MC}{\mbf{C}}
\newcommand{\MD}{\mbf{D}}
\newcommand{\ME}{\mbf{E}}
\newcommand{\MG}{\mbf{G}}
\newcommand{\MQ}{\mbf{Q}}
\newcommand{\MP}{\mbf{P}}
\newcommand{\MV}{\mbf{V}}
\newcommand{\MY}{\mbf{Y}}
\newcommand{\MX}{\mbf{X}}
\newcommand{\MZ}{\mbf{Z}}
\newcommand{\MW}{\mbf{W}}
\newcommand{\MH}{\mbf{H}}
\newcommand{\MS}{\mbf{S}}
\newcommand{\MT}{\mbf{T}}
\newcommand{\MR}{\mbf{R}}
\newcommand{\MK}{\mbf{K}}
\newcommand{\MI}{\mbf{I}}
\newcommand{\MJ}{\mbf{J}}
\newcommand{\MLambda}{\mbf{\Lambda}}
\newcommand{\N}{\mathrm{N}}
\newcommand{\T}{\top}    %
\newcommand{\R}{\mathbb{R}}    %
\newcommand{\BO}{\mathcal{O}}    %
\newcommand{\intd}[1]{\,\mathrm{d} #1 \,}
\newcommand{\GP}{\mathcal{GP}}
\newcommand{\E}{\mathbb{E}}    %
\newcommand{\MPhi}{\mbf{\Phi}}
\newcommand{\MF}{\mbf{F}}
\newcommand{\ML}{\mbf{L}}
\newcommand{\vw}{\mbf{w}}
\newcommand{\LL}{\mathcal{L}}
\newcommand{\RR}{ \mathbb{R} }
\newcommand{\mprod}{\textstyle\prod}
\newcommand{\mint}{\textstyle\int}
\newcommand{\diff}[2]{\mathrm{\frac{\partial\mathit{#1}}{\partial\mathit{#2}}}}
\newcommand{\intg}[4]{\int_{#3}^{#4} #1 \, \mathrm{d}#2}
\newcommand{\iK}[1]{{\MK^{-1}_{#1 #1}}} %
\newcommand{\K}[2]{{\MK_{#1 #2}}} %
\newcommand{\sK}[3]{{\MK^{(#3)}_{#1 #2}}} %
\newcommand{\siK}[2]{{\MK^{-(#2)}_{#1 #1}}} %
\newcommand{\kt}{\Time} %
\newcommand{\ks}{\Space} %
\newcommand{\kx}{\MX} %
\newcommand{\kz}{\MZ} %
\newcommand{\skx}{\SPACE} %
\newcommand{\skz}{\MZ_{\Space}} %
\newcommand{\tkx}{\MX_{\kt}} %
\newcommand{\tkz}{\MZ_{\kt}} %
\newcommand{\Kzz}{\K{\kz}{\kz}}
\newcommand{\iKzz}{\iK{\kz}}
\newcommand{\Kxx}{\K{\kx}{\kx}}
\newcommand{\Kxz}{\K{\kx}{\kz}}
\newcommand{\Kzx}{\K{\kz}{\kx}}
\newcommand{\sKzz}{\sK{\skz}{\skz}{\ks}}
\newcommand{\siKzz}{\siK{\skz}{\ks}}
\newcommand{\tKzz}{\sK{\tkz}{\tkz}{t}}
\newcommand{\sKxx}{\sK{\skx}{\skx}{\ks}}
\newcommand{\tKxx}{\sK{\Time}{\Time}{t}}
\newcommand{\tiKxx}{\siK{\Time}{t}}
\newcommand{\sKxz}{\sK{\skx}{\skz}{\ks}}
\newcommand{\tKxz}{\sK{\tkx}{\tkz}{t}}
\newcommand{\sKzx}{\sK{\skz}{\skx}{\ks}}
\newcommand{\sKzzn}{\sK{\MZ_{\Space}}{\MZ_{\Space}}{\ks}}
\newcommand{\siKzzn}{\siK{\MZ_{\Space}}{\ks}}
\newcommand{\sKxzn}{\sK{\SPACE_{\spaceindex}}{\MZ_{\Space}}{\ks}}
\newcommand{\sKzxn}{\sK{\MZ_{\Space}}{\SPACE_{\spaceindex}}{\ks}}
\newcommand{\sKxxn}{\sK{\gridX_{\dindex}}{\gridX_{\dindex}}{\ks}}
\newcommand{\tKxxn}{\sK{\gridX_{\dindex}}{\gridX_{\dindex}}{t}}
\newcommand{\vectext}{\text{vec}}
\newcommand{\param}{\boldsymbol\xi}
\newcommand{\natp}{\boldsymbol\lambda}
\newcommand{\apxnatp}{\boldsymbol{\widetilde{\lambda}}}
\newcommand{\priornatp}{\boldsymbol\eta}
\newcommand{\meanp}{\boldsymbol\mu}
\newcommand{\gradm}{\vg(\vm)}
\newcommand{\grads}{\vg(\postcov)}
\newcommand{\gradmu}{\vg(\meanp)}
\newcommand{\apxy}{\widetilde{\MY}}
\newcommand{\apxv}{\widetilde{\MV}}
\newcommand{\M}{M}
\newcommand{\Nt}{N_t}
\newcommand{\Ns}{N_\Space}
\newcommand{\Ms}{M_\Space}
\newcommand{\bigO}{ {\cal O}}
\newcommand{\state}{\bar{\vf}}
\newcommand{\SPACE}{{\MS}}
\newcommand{\Space}{{\vs}}
\newcommand{\Time}{{\bm{t}}}
\newcommand{\dd}{\,\mathrm{d}}
\newcommand{\Feedback}{{\MF}}
\newcommand{\postcov}{{\MP}}
\newcommand{\SpaceCov}{\MK^{(\Space)}_{\SPACE\SPACE}}
\newcommand{\timeindex}{n}
\newcommand{\spaceindex}{k}
\newcommand{\dindex}{{\timeindex, \spaceindex}}
\newcommand{\gridX}{\MX}
\renewcommand{\mid}{\,|\,}
\newcommand{\qum}{\vm^{(\vu)}}
\newcommand{\qfm}{\vm}
\newcommand{\qus}{\postcov^{(\vu)}}
\newcommand{\qfs}{\postcov}
\newcommand{\filtmean}{\hat{\vm}}
\newcommand{\smoothmean}{\bar{\vm}}
\newcommand{\filtcov}{\hat{\MP}}
\newcommand{\smoothcov}{\bar{\MP}}
\newcommand{\theo}[1]{{\color{red}Theo: [#1]}}
\newcommand{\acro}[1]{\textsc{#1}\xspace}
\newcommand{\gp}{\acro{GP}}
\newcommand{\gps}{\acro{GPs}}
\newcommand{\CVI}{\acro{CVI}}
\newcommand{\VGP}{\acro{VGP}}
\newcommand{\SVGP}{\acro{SVGP}}
\newcommand{\STCVI}{\STVGP}
\newcommand{\MFSTCVI}{\MFSTVGP}
\newcommand{\STVGP}{\acro{ST-VGP}}
\newcommand{\STSVGP}{\acro{ST-SVGP}}
\newcommand{\MFSTVGP}{\acro{MF-ST-VGP}}
\newcommand{\MFSTSVGP}{\acro{MF-ST-SVGP}}
\newcommand{\SKIP}{\acro{SKIP}}
\newcommand{\SKI}{\acro{SKI}}
\newcommand{\FITC}{\acro{FITC}}
\newcommand{\KLD}{\acro{KLD}}
\newcommand{\ELBO}{\acro{ELBO}}
\newcommand{\ulemma}{Lemma }
\newcommand{\ie}{\textit{i.e.}\xspace}
\newcommand{\etc}{\textit{etc.}\xspace}
\newcommand{\NYC}{\acro{nyc}}
\newcommand{\nyccrime}{\acro{nyc-crime}}
\newcommand{\airquality}{\acro{air-quality}}
\newcommand{\pmten}{$\text{PM}_{10}$\xspace}
\newcommand{\ZZ}{$\mathbf{\MZ}$\xspace}
\newcommand{\nipstitle}[1]{{%
    \def\toptitlebar{\hrule height4pt \vskip .25in \vskip -\parskip} 
    \def\bottomtitlebar{\vskip .29in \vskip -\parskip \hrule height1pt \vskip .09in} 
    \phantomsection\hsize\textwidth\linewidth\hsize%
    \vskip 0.1in%
    \toptitlebar%
    \begin{minipage}{\textwidth}%
        \centering{\LARGE\bf #1\par}%
    \end{minipage}%
    \bottomtitlebar%
    \addcontentsline{toc}{section}{#1}%
}}
\title{Spatio-Temporal Variational Gaussian Processes}
\author{%
  Oliver Hamelijnck\thanks{equal contribution}\\
  The Alan Turing Institute /\\
  University of Warwick\\
  \texttt{ohamelijnck@turing.ac.uk}
  \And
  William J.\ Wilkinson\footnotemark[1]\\
  Aalto University\\
  \texttt{william.wilkinson@aalto.fi}
  \And
  Niki A.\ Loppi\\
  NVIDIA\\
  \texttt{nloppi@nvidia.com}   
  \AND
  Arno Solin\\
  Aalto University\\
  \texttt{arno.solin@aalto.fi}
  \And
  Theodoros Damoulas\\
  The Alan Turing Institute /\\
  University of Warwick\\
  \texttt{tdamoulas@turing.ac.uk}
}
\begin{document}

\maketitle

\begin{abstract}
  We introduce a scalable approach to Gaussian process inference that combines spatio-temporal filtering with natural gradient variational inference, resulting in a non-conjugate \gp method for multivariate data that scales linearly with respect to time. Our natural gradient approach enables application of parallel filtering and smoothing, further reducing the temporal span complexity to be logarithmic in the number of time steps. We derive a sparse approximation that constructs a state-space model over a reduced set of spatial inducing points, and show that for separable Markov kernels the full and sparse cases exactly recover the standard variational \gp, whilst exhibiting favourable computational properties. To further improve the spatial scaling we propose a mean-field assumption of independence between spatial locations which, when coupled with sparsity and parallelisation, leads to an efficient and accurate method for large spatio-temporal problems.
\end{abstract}

\section{Introduction}
\label{sec:introduction}

Most real-world processes occur across space and time, exhibit complex dependencies, and are observed through noisy irregular samples. Take, for example, the task of modelling air pollution across a city. Such a task involves large amounts of noisy, partially-observed data with strong seasonal effects governed by weather, traffic, human movement, \etc This setting motivates a probabilistic perspective, allowing for the incorporation of prior knowledge and the quantification of uncertainty.%
Gaussian processes (\gps, \cite{rasmussen2003gaussian}) provide such a probabilistic modelling paradigm, but their inherent cubic computational scaling in the number of data, $N$, limits their applicability to spatio-temporal tasks. Arguably the most successful methods to address this issue are sparse \gps \cite{unifying_sparse_gps:2005}, which summarise the true \gp posterior through a reduced set of $M$ \emph{inducing points} and have dominant computational scaling $\bigO(NM^2)$, and spatio-temporal \gps \citep{Sarkka+Solin+Hartikainen:2013}, which rewrite the \gp prior as a state-space model and use filtering to perform inference in $\bigO(Nd^3)$, where $d$ is the dimensionality of the state-space. Sparse \gps and spatio-temporal \gps have been combined by constructing a Markovian system in which a set of \emph{spatial} inducing points are tracked over time \cite{hartikainen_sparse_st_gps:2011,wilkinson2020state}.
However, existing methods for spatio-temporal \gps make approximations to the prior conditional model \cite{hartikainen_sparse_st_gps:2011} or do not exploit natural gradients \cite{tebbutt2021combining}, meaning they do not provide the same inference and learning results as state-of-the-art variational \gps \cite{hensman_gp_for_big_data:2013} in the presence of non-conjugate likelihoods or sparsity, which has hindered their widespread adoption. %
We introduce \emph{spatio-temporal variational \gps} (\STVGP), which provide the \emph{exact} same results as standard variational \gps, whilst reducing the computational scaling in the temporal dimension from cubic to linear. %
\STVGP is derived using a natural gradient variational inference approach based on filtering and smoothing. We also derive this method's sparse variant, and demonstrate how it enables the use of significantly more inducing points than the standard approach, leading to improved predictive performance. %

We then show how the spatio-temporal structure can be exploited even further to improve both the temporal and spatial scaling. We demonstrate for the first time how to apply parallel filtering and smoothing \cite{sarkka_temporal2021} to non-conjugate \gps to reduce the temporal (span) complexity to be logarithmic. We then reformulate the model to enable an efficient mean-field approximation across space, improving the complexity with respect to the number of spatial points. We analyse the practical performance and scalability of our proposed methods, demonstrating how they make it possible to apply \gps to large-scale spatio-temporal scenarios without sacrificing inference quality.
\begin{figure}[t!]
  \centering
  \begin{subfigure}{.55\textwidth}
	\centering
	\scriptsize
	\resizebox{\textwidth}{!}{%
		\begin{tikzpicture}[outer sep=0]

			\newcommand{\drawmask}[4]{%
				\node at (#1,#2) {
					\pgfdeclareimage[width=4cm]{themask}{#3}
					\pgflowlevel{\pgftransformcm{.75}{0.5}{0}{1}{\pgfpoint{0}{0}}}
					\tikz\node[inner sep=1pt,fill=white,draw=black!50,thick,rounded corners=2pt,opacity=#4]{\pgfuseimage{themask}};
				};}

			\draw[draw=white,outer sep=0] (-2,-3) rectangle (6.5,4.2);

			\tikzstyle{inducing} = [circle, draw=black!30!red, fill=red, inner sep=1.5pt]

			\def\dates{2015-03-01, 2015-03-03, 2015-03-05, 2015-03-06, 2015-03-09}

			\def\inducing{-0.5701511529340699/1.5, -0.3/2.4, -0.46209069176044193/0.7, 0.024181383520883837/0.8999999999999999, -1.1104534588022097/-0.10000000000000009}

			\drawmask{0}{0}{./fig/basemap}{1}

			\foreach \x [count=\i] in \dates {

				\ifnum\i>1
				\foreach \u / \v [count=\j] in \inducing {
					\draw[red,opacity={.2*\i}] ({.5+\i+\u},\v) -- ({.5+\i+\u-1},\v);
				}
				\fi

				\foreach \u / \v [count=\j] in \inducing {
					\node[inducing,opacity={.2*\i}] at ({.5+\i+\u},\v) {};
				}      

				\drawmask{{.5+\i}}{0}{./fig/map-\i}{{.3+.1*\i}}

				\foreach \u / \v [count=\j] in \inducing {
					\node[inducing,opacity={.2*\i}] (ind-\i-\j) at ({.5+\i+\u},\v) {};
				}

				\draw[black] ({.5+\i-2},-2.2) -- ({.5+\i-2},-2.4);
				\node[rotate=30, align=left] at ({.5+\i-2.3},-2.8) {\x};
				
			}

			\draw[black,-latex] (-1,-2.2) -> (4,-2.2);
			\node at (4.3,-2.5) {Time};

			\node[text width=3cm] (lab) at (8,-1.5) {\footnotesize Inducing points are shared over the \mbox{temporal domain}};
			\node[circle,draw=black,thick,inner sep=3pt] (ind-mark) at (ind-5-5) {};
			\draw[black,thick] (lab.north west) -- (lab.south west);
			\draw[black,thick] (lab.west) to [bend left=20] (ind-mark);

	\end{tikzpicture}}
  \end{subfigure}
  \begin{subfigure}{.35\textwidth}
    \centering
    \scalebox{.7}{%
    \begin{tikzpicture}[outer sep=0]

			\tikzstyle{inducing} = [circle,draw=red, fill=red, inner sep=1pt]

			\draw[draw=red!50,rounded corners=3pt,thick,dashed] (10.25,1) rectangle (14,3.6);
			\draw[draw=red!50,rounded corners=3pt,thick,dashed] (10.25,-2.5) rectangle (14,0.1);

			\node[fill=white] at (12.125,3.6) {\SVGP};
			\node[fill=white] at (12.125,0.1) {\STSVGP};

			\foreach \z [count=\k] in {0,1,2,3,4}{
				\foreach \x [count=\i] in {1,2,3,4,5} {
					\foreach \y [count=\j] in {1,2,3,4,5} {
						\node[inducing,opacity={.2*\k}] at ({10+.6*\k+.1*\i},{1+.3*\j+.15*\i}) {};
					}
				}
			}

			\foreach \x [count=\i] in {1,2,3,4,5} {
				\foreach \y [count=\j] in {1,2,3,4,5} {
					\draw[red,opacity={1-.1*\i}] 
					({10+.6*1+.1*\i},{-2.5+.3*\j+.15*\i}) --
					({10+.6*5+.1*\i},{-2.5+.3*\j+.15*\i});
				}
			}			
			
			\foreach \x [count=\i] in {1,2,3,4,5} {
				\foreach \y [count=\j] in {1,2,3,4,5} {
					\node[inducing,opacity={1-.1*\i}] at ({10+.6*5+.1*\i},{-2.5+.3*\j+.15*\i}) {};
				}
			}

			\node[text width=3.4cm,align=flush left] at (16,.5) {\STSVGP recovers the same spatio-temporal model as \SVGP if the inducing points repeat over time\\[3em]\STSVGP's effective inducing point count scales with the data, but its computational scaling is linear in the number of time steps};

    \end{tikzpicture}}
  \end{subfigure}
   \\[-0.6em]
	\caption{A demonstration of the spatio-temporal sparse variational GP (\STSVGP) applied to crime count data in New York. \STSVGP tracks spatial points over time via spatio-temporal filtering. The colourmap is the posterior mean, and the red dots are spatial inducing points. The diagram shows the difference between how inducing points are treated in \STSVGP and \SVGP.  %
	}
	\label{fig:teaser}
	\vspace*{-0.75em}
\end{figure}

\subsection{Related Work}
\label{sec:related_work}

\gps are commonly used for spatio-temporal modelling in both machine learning and spatial statistics \cite{rasmussen2003gaussian, gelfand_stats+gps:2016, cressie_spati_temporal_stats:2011, banerjee_spatial:2004}. Many approaches to overcome their computational burden have been proposed, from nearest neighbours \cite{datta_nearest_neighbor_gps:2016} to parallel algorithms on GPUs \citep{wang_gp_million_points:2019}. Within machine learning, the sparse GP approach %
 is perhaps the most popular \cite{unifying_sparse_gps:2005, titsias_sparse_gp:2009}, and is typically combined with mini-batching to allow training on massive datasets \cite{hensman_gp_for_big_data:2013}. However, it fails in practical cases where the number of inducing points must grow with the size of the data, such as for time series \cite{wilkinson_end_to_end:2019}. 

When the data lie on a grid, separable kernels exhibit Kronecker structure which can be exploited for efficient inference \citep{saatci_thesis:2011}. This approach has been generalised to the partial grid setting \citep{wilson_gpatt:2014}, and to structured kernel interpolation (\SKI, \cite{wilison_ski_gp:2015}) which requires only that inducing points be on a grid. %
Generally, these approaches are limited to the conjugate case, although Laplace-based extensions exist \citep{flaxman_non_gaussian_kronecker_gp:2015}. \citet{bruinsma2020scalable} present an approach to spatio-temporal modelling that performs an orthogonal projection of the data to enforce independence between the latent processes. %

It has been shown that variational \gps can be computed in linear time either by exploiting sparse precision structure \cite{durrande_banded_gp:2019} or via filtering and smoothing \cite{chang_fast_vi:2020}.
Other inference schemes such as Laplace and expectation propagation have also been proposed \citep{nickisch2018state, wilkinson2020state}.
In the spatio-temporal case, sparsity has been used in the spatial dimension \cite{hartikainen_sparse_st_gps:2011, Sarkka+Solin+Hartikainen:2013}. These methods historically suffered from the fact that \emph{i)} filtering was not amenable to fast automatic differentiation due to its recursive nature, and \emph{ii)} state-of-the-art inference schemes had not been developed to make them directly comparable to other methods. The first is no longer an issue since many machine learning frameworks are now capable of efficiently differentiating recursive models \cite{chang_fast_vi:2020}. We address the second point with this paper. A similar algorithm to ours that is also sparse in the temporal dimension has been developed \cite{wilkinson2021sparse, vincent_doubly_sparse:2020}, and relevant properties of the spatio-temporal model presented here are also analysed in \cite{tebbutt2021combining}. Fourier features \cite{hensman_vff:2017} are an alternative approach to scalable \gps, but are not suited to very long time series with high variability due to the need for impractically many inducing features. %

\section{Background}
\label{sec:background}

We consider data lying on a spatio-temporal grid comprising input--output pairs, $\{\gridX^{(st)} \in \R^{N_t \times N_\Space \times D}, \MY^{(st)} \in \R^{N_t \times N_\Space} \}$, where $N_t$ is the number of temporal points, $N_\Space$ the number of spatial points, and $D = 1 + D_\Space$ the input dimensionality (with $D_\Space$ being the number of spatial dimensions). We use $t$ and $\Space$ to represent time and space respectively. The assumption of the grid structure is relaxed via the introduction of sparse methods in \cref{sec:st_cvi}, and by the natural handling of missing data.
	
For consistency with the \gp literature we let $\MX = \text{vec}(\gridX^{(st)}) \in \R^{N \times D}$, $\MY = \text{vec}(\MY^{(st)}) \in \R ^ {N \times 1}$, where $N=N_t N_\Space$ is the total number of data points. We use the operator $\text{vec}(\cdot)$ to simply convert data from a spatio-temporal grid into vector form, whilst keeping observations ordered by time and then space. For notational convenience we define $\MX_{n,k} = \MX^{(st)}_{n,k}$, $\MY_{n,k} = \MY^{(st)}_{n,k}$, which indexes data at time index $n$ and spatial index $k$. We use $t_\timeindex$ to denote the $\timeindex$'th time point, $\SPACE \in \R^{N_\Space \times D_\Space}$ to denote all spatial grid points and $\SPACE_\spaceindex$ the $\spaceindex$'th one. Let $f: \R^D \rightarrow \R$ to be a random function with a zero-mean \gp prior, then for a given likelihood $p(\MY\mid f(\MX))$ the generative model is,
\begin{equation}\label{eq:gp}
		f(x) \sim \GP(0, \kappa(x, x')), \quad \MY \mid \vf \sim \mprod_{\timeindex=1}^{N_t} \mprod_{\spaceindex=1}^{N_\Space} p(\MY_{\timeindex,\spaceindex} \mid \vf_{\timeindex,\spaceindex}) ,
\end{equation}
where $\vf_{\timeindex,\spaceindex} = f(\gridX_{\timeindex,\spaceindex})$, and we let $\vf_{\timeindex}$ be the function values of all spatial points at time $t_\timeindex$. When the kernel $\kappa$ is evaluated at given inputs we write the corresponding gram matrix as $\MK_{\MX\MX'} = \kappa(\MX, \MX')$. To make it explicit that $f$ takes spatio-temporal inputs we also abuse the notation slightly to write $f(x) = f(t, \Space)$ and  $\kappa(x, x')=\kappa(t, \Space, t', \Space')$. A summary of all notation used is provided in \cref{sec:app:nomencalture}. For Gaussian likelihoods the posterior, $p(\vf \mid \MY)$, is available in closed form, otherwise approximations must be used. In either case, inference typically comes at a cubic cost of $\bigO(\Nt^3 N_\Space^3)$.
\subsection{State Space Spatio-Temporal Gaussian Processes}
\label{sec:state-space-spatio-temporal}

One method for handling the cubic scaling of \gps is to reformulate the prior in \cref{eq:gp} as a state space model, reducing the computational scaling to linear in the number of time points \cite{Sarkka+Solin+Hartikainen:2013}.
The enabling assumption is that the kernel is both Markovian and separable between time and space: $\kappa(t,\Space,t',\Space') = \kappa_t(t,t')\,\kappa_{\Space}(\Space, \Space')$. We use the term \emph{Markovian kernel} to refer to a kernel which can be re-written in state-space form (see \cite{Solin:2016} for an overview).  
First, we write down the \gp prior as a stochastic partial differential equation (SPDE, see \cite{DaPrato+Zabczyk:1992})
$\partial_t\state(t,\Space) = \mathcal{A}_{\Space} \,\state(t, \Space) + \mathcal{L}_{\Space} \,\vw(t,\Space)$,
where $\vw(t,\Space)$ is a (spatio-temporal) white noise process and $\mathcal{A}_{\Space}$ a suitable (pseudo-)differential operator \citep[see][]{sarkka2019applied}. By appropriately defining the model matrices and the white noise spectral density function, SPDEs of this form can represent a large class of separable and non-separable \gp models.

When the kernel is separable, this SPDE can be simplified to a finite-dimensional SDE \cite{hartikainen_sparse_st_gps:2011} by marginalising to a finite set of spatial locations, $\SPACE \in \RR^{N_{\Space} \times D_\Space}$, giving,
$\mathrm{d}\state(t) = \Feedback \, \state(t) \, \mathrm{d} t  + \ML \, \mathrm{d}\vbeta(t)$, %
where $\state(t)$ is the Gaussian distributed state at the spatial points $\SPACE$ at time $t$, with dimensionality $d=N_{\Space} d_t$, where $d_t$ is the dimensionality of the state-space model induced by $\kappa_t(\cdot, \cdot)$. $\mathrm{d}\vbeta(t)$ has spectral density $\MQ_c$, and the matrix $\MH$ extracts the function value from the state: $\vf_\timeindex = \MH \state(t_\timeindex)$. $\Feedback$ and $\ML$ are the feedback and noise effect matrices \cite{sarkka2019applied}.
This simplification to an SDE is possible due to the independence between spatial points at time $t$ and all other time steps, given the current state \cite{tebbutt2021combining}. This follows from the fact that for \emph{any} separable kernel, $f(t,\Space)$ and $f(t',\Space')$ are independent given $f(t',\Space)$ \cite{ohagan1998markov}.
For a step size $\Delta_\timeindex = t_{\timeindex+1} - t_{\timeindex}$, the discrete-time model matrices are,
\begin{equation}
		\MA_{\timeindex} = \MPhi( \Feedback \Delta_\timeindex), \qquad
		\MQ_{\timeindex} =  \mint_{0}^{\Delta_\timeindex} \MPhi(\Delta_\timeindex-\tau)\,\ML\, \MQ_c\, \ML^{\top}\, \MPhi(\Delta_\timeindex-\tau)^{\top} \dd \tau ,
\end{equation}
where $\MPhi(\cdot)$ is the matrix exponential. The resulting discrete model is,
\begin{equation} \label{eq:st-statespace}
  \state(t_{\timeindex+1}) = \MA_{\timeindex} \, \state(t_{\timeindex}) + \vq_n,  \qquad\quad
		\MY_\timeindex \mid \state(t_\timeindex) \sim p(\MY_\timeindex \mid \MH \, \state(t_\timeindex)),
\end{equation}
where $\vq_\timeindex \sim \N(\bm{0}, \MQ_\timeindex)$. %
If $p(\MY_\timeindex \mid \MH \, \state(t_\timeindex))$ is Gaussian then Kalman smoothing algorithms can be employed to perform inference in \cref{eq:st-statespace} in $\bigO(N_td^3)=\bigO(N_t N_{\Space}^3 d_t^3)$.

\paragraph{Markovian GPs with Spatial Sparsity} Sparse \gps re-define the \gp prior over a smaller set of $M$ \emph{inducing points}: let $\vu =f(\MZ) \in \R^{\M \times 1}$ be the inducing variables at inducing locations $\MZ \in \R^{\M \times D}$, then the augmented prior is $p(\vf, \vu) = p(\vf \mid \vu) p(\vu)$, where $p(\vu) = \N(\vu \mid \bm{0}, \Kzz)$,
and with Gaussian conditional $p(\vf \mid \vu)$. 
If the inducing points are placed on a spatio-temporal grid, with $\MZ_\Space \in \R^{M_\Space \times D_\Space}$ being the spatial inducing locations, the conditional $p(\vf \mid \vu)$ can be simplified to (see \cref{sec_app:sparse_kronecker_decomposition}):
\begin{equation}
	p(\vf \mid \vu) = \N\big(\vf \mid \big[ \MI \kron (\sKxx \kron \siKzz) \big]\vu, \tKxx \kron \widetilde{\MQ}_\Space\big),
\label{eqn:kronecker_conditional}
\end{equation}
where $\widetilde{\MQ}_\Space = \sKxx - \sKxz  \siKzz  \sKzx$ (see \cref{sec:app:nomencalture} for notational details). %
The \emph{fully independent training conditional} (FITC) method \citep{unifying_sparse_gps:2005} approximates the full conditional covariance with its diagonal, leading to the following convenient property: %
$q_{\FITC}(\vf \mid \vu) = \prod^{N_t}_{\timeindex=1} q_{\FITC}(\vf_\timeindex \mid \vu) = \prod^{N_t}_{\timeindex=1} q_{\FITC}(\vf_\timeindex \mid \vu_\timeindex)$,
where the last equality holds because $\MI \kron (\sKxx \kron \siKzz)$ is block diagonal. This factorisation across time allows the model to be cast into the state-space form of \cref{eq:st-statespace}, but where the state $\state(t)$ is defined over the reduced set of spatial inducing points \citep{hartikainen_sparse_st_gps:2011}. %
Inference can be performed in $\BO(\Nt \Ms^3 d^3_t)$. %

\subsection{Sparse Variational GPs} \label{sec:variational_gps}

To perform approximate inference in the presence of sparsity or non-Gaussian likelihoods, variational methods cast inference as optimisation through minimisation of the Kullback--Leibler divergence (\KLD) from the true posterior to the approximate posterior \citep{blei_vi_review:2017}. Although direct computation of the \KLD is intractable, it can be rewritten as the maximisation of the evidence lower bound (\ELBO).

Unlike \FITC, the sparse variational GP (\SVGP, \cite{titsias_sparse_gp:2009}) does not approximate the conditional $p(\vf \mid \vu)$, but instead approximates the posterior as $q(\vf, \vu) = p(\vf \mid \vu) \, q(\vu)$, where $q(\vu) = \N(\vu \mid \vm, \postcov)$ is a Gaussian whose parameters are to be optimised. The \SVGP \ELBO is:
\begin{align}
	&\LL_{\SVGP} = \E_{q(\vu)} \left[ \E_{q(\vf \mid \vu)} \left[ \log p(\MY \mid \vf) \right] \right] - \KL \left[ q(\vu)\, \| \,p(\vu) \right] , %
\end{align}
which can be computed in $\BO(NM^2 + M^3)$. \SVGP has many benefits over methods such as \FITC, including: 
non-Gaussian likelihoods can be handled through quadrature or Monte-Carlo approximations \cite{hensman_classification:2015, krauth_autogp:2017}, it is applicable to big data through stochastic VI and mini-batching \cite{hensman_gp_for_big_data:2013}, and the inducing locations are `variationally protected' and hence prevent overfitting \cite{bauer_fitc_vfe:2016}.

\paragraph{Natural Gradients}
Natural gradient descent calculates gradients in \textit{distribution} space rather than \textit{parameter} space, and has been shown to improve inference time and quality for variational \gps \citep{hensman_gp_for_big_data:2013, salimbeni_natural_gradients:2018}. %
A natural descent direction is obtained by scaling the standard gradient by the inverse of the Fisher information matrix, $\E_{q(\cdot)} \left[ \nabla^{2} \log q(\cdot ) \right]$ \citep{amari_natgrad:1996}.
For a Gaussian approximate posterior, the natural gradient of target $\LL$ with respect to the natural parameters $\natp$ can be calculated without directly forming the Hessian, since it can be shown to be equivalent to the gradient with respect to the mean parameters $\meanp=[\vm, \vm \vm^\T + \postcov]$ \cite{hensman_fast_vi:2012}. The natural parameter update, with learning rate $\beta$, becomes,
\begin{equation}
	\natp \leftarrow \natp + \beta \, \diff{\LL}{\meanp} .
	\label{eq:natgrad_hensman}
\end{equation}
To update the approximation posterior, $\natp$ can be simply transformed to the moment parameterisation $[\vm, \postcov]$. A table of mappings between the various parametrisations is given in \cref{sec_app:exp_family}. %

\paragraph{\CVI and the Approximate Likelihood}
\citet{khan_cvi:2017} show that when the prior and approximate posterior are conjugate (as in the \gp case), further elegant properties of exponential family distributions mean that \cref{eq:natgrad_hensman} is equivalent to a two step Bayesian update:
\begin{equation}\label{eq:cvi_update}
	\apxnatp \leftarrow (1-\beta) \, \apxnatp + \beta \, \diff{\,\E_{q(\vf)} [\log p(\MY \mid \vf)]}{\meanp} \, , \qquad \qquad %
	\natp \leftarrow \priornatp + \apxnatp \, , %
\end{equation}
where $\priornatp$ are the natural parameters of the prior and $\apxnatp$ are the natural parameters of the likelihood contribution. Letting $\vg(\cdot) = \diff{\E_q [\log p(\MY \mid \vf)]}{\cdot}$, the gradients can be computed in terms of the mean and covariance via the chain rule: $\gradmu = \big[ \gradm - 2 \, \grads \, \vm, \, \grads \big]$. %
\cref{eq:cvi_update} shows that, since the prior parameters $\priornatp$ are known, natural gradient variational inference is completely characterised by updates to an \emph{approximate likelihood}, which we denote $\N(\apxy \mid \vf, \apxv)$, parameterised by covariance $\apxv=(-2\apxnatp^{(2)})^{-1}$ and mean $\apxy=\apxv \apxnatp^{(1)}$ (see \cref{sec:app:nomencalture}). The approximate posterior then has the form,%
\begin{equation}
	q(\vf) = \frac{\N(\apxy \mid \vf, \apxv) \, p(\vf)}{\intg{\N(\apxy \mid \vf, \apxv) \, p(\vf)}{\vf}{}{}} . 
\label{eq:approx_posterior_after_update}
\end{equation}
Computing $q(\vf)$ amounts to performing conjugate \gp regression with the model prior and the approximation likelihood. This approach is called conjugate-computation variational inference (CVI, \cite{khan_cvi:2017}). To re-emphasise that the \CVI updates compute the exact same quantity as \cref{eq:natgrad_hensman}, we provide an alternative derivation in \cref{sec_app:cvi_natgrad_equal} by directly applying the chain rule to \cref{eq:natgrad_hensman}.%

\section{Spatio-Temporal Variational Gaussian Processes}
\label{sec:st_cvi}

In this section we introduce a spatio-temporal \VGP that has linear complexity with respect to time whilst obtaining the identical variational posterior as the standard \VGP. We will then go on to derive this method's sparse variant, which gives the same posterior as \SVGP when the inducing points are set similarly (\ie, on a spatio-temporal grid), but is capable of scaling to much larger values of $M$. %

\subsection{The Spatio-Temporal VGP ELBO} \label{sec:stvgp-elbo}

We first derive our proposed spatio-temporal VGP ELBO. We do this by exploiting the form of the approximate posterior after a natural gradient step in order to write the ELBO as a sum of three terms, each of which can be efficiently computed through filtering and smoothing.
As shown in \cref{sec:variational_gps}, after a natural gradient step, the approximate posterior $q(\vf) \propto \N(\apxy \mid \vf, \apxv) \, p(\vf)$ decomposes as a Bayesian update applied to the model prior with an approximate likelihood. Following \citet{chang_fast_vi:2020} we substitute \cref{eq:approx_posterior_after_update} into the VGP \ELBO: 
\begin{align}\label{eq:vgp_elbo} 
	\hspace{-0.5em}\LL_\VGP &= \E_{q(\vf)} \left[ \log \frac{p(\MY \mid \vf)\,p(\vf)}{q(\vf)}\right] = \E_{q(\vf)} \left[ \log \frac{p(\MY \mid \vf) \, \cancel{p(\vf)} \int \N(\apxy \mid \vf, \apxv) \, p(\vf) \,\mathrm{d}\vf}{ \N(\apxy \mid \vf, \apxv) \, \cancel{p(\vf)}}\right]  \\ 
	&= \sum_{\timeindex=1}^{N_t} \sum_{\spaceindex=1}^{N_\Space} \E_{q(\vf_{\timeindex, \spaceindex})} \big[ \log p(\MY_{\timeindex, \spaceindex} \mid \vf_{\timeindex, \spaceindex}) \big] - \E_{q(\vf)} \big[ \log \N (\apxy \mid \vf, \apxv) \big] + \log \E_{p(\vf)} \big[\N(\apxy \mid \vf, \apxv) \big] .  \nonumber
\end{align}
The first term is the expected log likelihood, the second is the expected log \emph{approximate likelihood}, and the final term is the log marginal likelihood of the approximation posterior, $\log p(\apxy) = \log \E_{p(\vf)} \big[\N(\apxy \mid \vf, \apxv) \big]$. Na\"ively evaluating $\LL_\VGP$ requires $\BO(N^3)$ computation for both the update to $q(\vf)$ and the marginal likelihood. We now show how to compute this with linear scaling in $N_t$.

 We observe that after a natural gradient update, $\apxv$, the approximate likelihood covariance, has the same form as the gradient $\vg(\postcov)$ because, as seen in \cref{eq:cvi_update}, $\apxnatp$ is only additively updated by $\gradmu$. Since the expected likelihood, $\E_{q(\vf)} [\log p(\MY \mid \vf)]$, factorises across observations, $\vg(\postcov)$ is diagonal and hence so is $\apxv$. The approximate likelihood therefore factorises in the same way as the true one:
\begin{equation} \label{eq:approx-lik-factorise}
	\log \N (\apxy \mid \vf, \apxv) = \sum^{N_t}_{\timeindex=1} \sum^{N_\Space}_{\spaceindex=1} \log \N (\apxy_{\timeindex,\spaceindex} \mid \vf_{\timeindex,\spaceindex}, \apxv_{\timeindex,\spaceindex}).
\end{equation}
We now turn our attention to computing the posterior and the marginal likelihood. Recall that if the kernel is separable between time and space, $\kappa(t,\Space,t',\Space') = \kappa_t(t,t')\,\kappa_{\Space}(\Space, \Space')$, then the \gp prior can be re-written as \cref{eq:st-statespace}. %
This separability property further results in the state-space model matrices having a convenient Kronecker structure, %
\begin{equation} \label{eq:st-model-matrices}
	\state(t_{\timeindex+1}) = \left[\MI_{N_{\Space}}\otimes \MA_\timeindex^{(t)}\right] \, \state(t_{\timeindex}) + \vq_n \, , \qquad \qquad
	\apxy_\timeindex \mid \state(t_\timeindex) \sim p(\apxy_\timeindex \mid \MH \, \state(t_\timeindex)), %
\end{equation}
where $\vq_n \sim \N(\bm{0}, \SpaceCov \otimes \MQ_{\timeindex}^{(t)})$ and $\MH=\MI_{N_{\Space}} \otimes \MH^{(t)}$. Here $\MA_\timeindex^{(t)} \in \RR^{d_t\times d_t}$, $\MQ_{\timeindex}^{(t)} \in \RR^{d_t\times d_t}$, and $\MH^{(t)} \in \RR^{1 \times d_t}$ are the transition matrix, process noise covariance, and measurement model of the SDE (see \cref{sec:state-space-spatio-temporal}) induced by the kernel $\kappa_t(\cdot,\cdot)$, respectively. 

Because the \gp prior is Markov and the approximate likelihood factorises across time, the approximate \gp posterior is also Markov \cite{tebbutt2021combining}. Hence marginals $q(\vf_\timeindex)$ can be computed through linear filtering and smoothing applied to \cref{eq:st-model-matrices}. %
Furthermore, the marginal likelihood of a linear Gaussian state-space model, ${p(\apxy) = p(\apxy_1)\,\prod_{n=2}^{N_t} p(\apxy_\timeindex \mid \apxy_{1:\timeindex-1})}$, can be computed sequentially by running the forward filter, since $p(\apxy_\timeindex \mid \apxy_{1:\timeindex-1}) = \mint p(\apxy_\timeindex \mid \MH\state(t_\timeindex)) \, p(\state(t_\timeindex) \mid \apxy_{1:\timeindex-1}) \dd\state(t_\timeindex)$, where $p(\state(t_\timeindex) \mid \apxy_{1:\timeindex-1})$ is the predictive filtering distribution. By combining all of the above properties we can now write the ELBO as,
\begin{align}\label{eq:st_cvi} 
	\hspace{-0.5em}\LL_\STVGP &= \sum^{N_t}_{\timeindex=1} \sum^{N_\Space}_{\spaceindex=1} \E_{q(\vf_{\timeindex, \spaceindex})} \big[ \log p(\MY_{\timeindex, \spaceindex} \mid \vf_{\timeindex, \spaceindex}) \big] - \sum^{N_t}_{\timeindex=1} \sum^{N_\Space}_{\spaceindex=1} \E_{q(\vf_{\timeindex,\spaceindex})}  \big[ \log \N (\apxy_{\timeindex,\spaceindex} \mid \vf_{\timeindex,\spaceindex}, \apxv_{\timeindex,\spaceindex}) \big] \nonumber \\
	& \quad\quad + \sum_{\timeindex=1}^{N_t} \log \E_{p(\state(t_\timeindex) \mid \apxy_{1:\timeindex-1})} \big[\N(\apxy_\timeindex \mid \MH \state(t_\timeindex), \apxv_\timeindex) \big] .
\end{align}
This ELBO can be computed with linear scaling in $N_t$: $\bigO(N_tN_\Space^3d_t^3)$. We now show that the natural gradient step for updating the parameters of $\N(\apxy \mid \vf, \apxv)$ can be computed with the same complexity. %

\subsection{Efficient Natural Gradient Updates} 
As discussed in \cref{sec:variational_gps}, a \emph{natural gradient} update to the posterior, $q(\vf) \propto p(\vf) \, \N(\apxy \mid \vf, \apxv)$, has superior convergence properties to gradient descent, and is completely characterised by an update to the approximate likelihood, $\N(\apxy \mid \vf, \apxv)$, whose mean and covariance are the free parameters of the model, and implicitly define the same \textit{variational} parameters as \VGP. Since the likelihood factorises across the data points, these updates only require computation of the marginal distribution $q(\vf_{\timeindex,\spaceindex})$ to obtain $\E_{q(\vf_{\timeindex,\spaceindex})} [\log p(\MY_{\timeindex,\spaceindex} \mid \vf_{\timeindex,\spaceindex})]$ and its gradients.

As we have shown, computation of the marginal posterior amounts to smoothing over the state, $\state \sim \N(\state \mid \bar{\bm{m}}, \bar{\MP})$, with the model in \cref{eq:st-model-matrices}. The time marginals are given by applying the measurement model to the state: $q(\vf_\timeindex)=\N(\vf_\timeindex \mid \vm_\timeindex= \MH \bar{\bm{m}}_\timeindex, \postcov_\timeindex= \MH \bar{\MP}_\timeindex \MH^\T)$ after which
$q(\vf_{\timeindex,\spaceindex})=\int\!q(\vf_{\timeindex}) \intd{\vf_{\timeindex, \neq\spaceindex}}$ can then be obtained by integrating out the other spatial points.
Given the marginal, \cref{eq:cvi_update} can be used to give the new likelihood parameters $\apxy$ and $\apxv$. The full learning algorithm iterates this process alternately with a hyperparameter update via gradient descent applied to the ELBO, \cref{eq:st_cvi}, and has computational complexity $\bigO(N_tN_\Space^3d_t^3)$. We call this method the \emph{spatio-temporal variational GP} (\STVGP).

\begin{figure}[!t]
\begin{minipage}[t]{0.49\textwidth}
\begin{algorithm}[H]%
	\footnotesize
	\caption{Spatio-temporal sparse VGP}
	\label{alg:sparse_st_cvi}
	\begin{algorithmic}
		\STATE {\bfseries Input:} Data:$\{\gridX, \MY\}$, Initial params.:$\{\apxy, \apxv\}$, Learning rates:$\{\beta, \rho\} \!\!\!\!$
		\WHILE{\ELBO ($\LL$) not converged}
		\STATE \COMMENT{CVI natural gradient step:}
		\STATE $q(\vu), \, \ell =$ \cref{alg:sparse-filter}$(\apxy, \apxv)$
		\STATE $\mathcal{E} = \E_{q(\vu)} [\E_{p(\vf\mid\vu)} [\log [p(\MY \mid \vf)]]$  %
		\STATE $\apxnatp = (1-\beta) \apxnatp + \beta \frac{\partial \mathcal{E}}{\partial \vmu}$ %
		\STATE $\apxv= ( -2 \apxnatp^{(2)} )^{-1}, \quad \apxy=\apxv \apxnatp^{(1)}$
		\STATE \COMMENT{Hyperparameter gradient step:}  
		\STATE $q(\vu), \, \ell =$ \cref{alg:sparse-filter}$(\apxy, \apxv)$
		\STATE $\mathcal{E} = \E_{q(\vu)} [\E_{p(\vf\mid\vu)} [\log [p(\MY \mid \vf)]]$ 
		\STATE $\LL=\mathcal{E} - \E_{q(\vu)} [ \log \N (\apxy \mid \vu, \apxv) ] + \ell$ \hfill \COMMENT{ELBO} 
		\STATE $\vtheta = \vtheta + \rho \diff{\LL}{\vtheta} $ %
		\ENDWHILE
	\end{algorithmic}
\end{algorithm}
\end{minipage}
\hfill
\begin{minipage}[t]{0.49\textwidth}
\begin{algorithm}[H]%
	\footnotesize
	\caption{Sparse spatio-temporal smoothing}
	\label{alg:sparse-filter}
	\begin{algorithmic}
		\STATE {\bfseries Input:} $\!$Likelihood:$\{\apxy, \apxv\}$, Space prior:$\{\MK_{\MZ_\Space\MZ_\Space}^{(\Space)} \}, $
		\STATE Time prior:$\{\MA^{(t)}, \MQ^{(t)}, \MH^{(t)}\} $
		\STATE \COMMENT{Construct model matrices:} 
		\STATE $\MA_{\timeindex} = \MI_{M_{\Space}}\otimes \MA_\timeindex^{(t)}$,
		\STATE $\MQ_{\timeindex} = \MK^{(\Space)}_{\MZ_\Space\MZ_\Space} \otimes\MQ_{\timeindex}^{(t)}$ , 
		\STATE $\MH = \MI_{M_{\Space}} \otimes \MH^{(t)} \!\!\!$\\
		\STATE \COMMENT{Filtering and smoothing:} 
		\IF{using parallel filter / smoother}
		\STATE $q(\vu), \, \ell =$ \cref{alg:parallel-filter}$(\apxy, \apxv, \MA, \MQ, \MH)$
		\ELSE
		\STATE $q(\vu), \, \ell =$ \cref{alg:sequential-filter}$(\apxy, \apxv, \MA, \MQ, \MH)$
		\ENDIF
		\STATE \COMMENT{Return posterior marginals and log likelihood:}
		\RETURN $q(\vu)$, $\ell$ \\[4pt]
	\end{algorithmic}
\end{algorithm}
\end{minipage}
\end{figure}

\subsection{Spatial Sparsity:  from $\bigO(N_tN_\Space^3d_t^3)$ to $\bigO(N_tM_\Space^3d_t^3)$} \label{sec:sparsity} %

We now introduce spatial inducing points, $\MZ_{\Space}$, in order to reduce the effective dimensionality of the state-space model. Whilst we maintain the same notation for consistency, it should be noted that the sparse model no longer requires the data to be on a spatio-temporal grid, only that the inducing points are. In this case, letting $q(\vu)=\N(\vu \mid \vm^{(\vu)}, \postcov^{(\vu)})$ be the sparse variational posterior, the marginal $q(\vf_\timeindex)=\N(\vf \mid \vm_\timeindex, \postcov_\timeindex)$ only depends on $\vm_\timeindex^{(\vu)}, \postcov_\timeindex^{(\vu)}$ due to the conditional independence property for separable kernels discussed in \cref{sec:state-space-spatio-temporal}. We compute the posterior $q(\vu)$ via filtering and smoothing over the state $\state(t)$ in a similar way to \STVGP by setting,
\begin{equation}
	\MA_{\timeindex} = \MI_{M_{\Space}}\otimes \MA_\timeindex^{(t)}, \quad\quad \MQ_\timeindex = \MK^{(\Space)}_{\MZ_\Space\MZ_\Space} \otimes\MQ_{\timeindex}^{(t)} , \quad\quad \MH = \MI_{M_{\Space}} \otimes \MH^{(t)} .
\end{equation} 
\cref{alg:sparse-filter} gives the smoothing algorithm. However, the natural gradient update, \cref{eq:cvi_update}, now becomes,
\begin{equation}
	\apxnatp \leftarrow (1-\beta) \, \apxnatp + \beta \,  \diff{\, \E_{q(\vu)} \left[ \E_{p(\vf \mid \vu)} \left[ \log p(\MY \mid \vf) \right] \right]}{\meanp^{(\vu)}} ,
\end{equation}
which results in $\apxnatp^{(2)}_\timeindex$, and hence also $\apxv_\timeindex$, being a dense matrix (\ie, $\apxv$ is block-diagonal) due to the conditional mapping, $p(\vf_\timeindex \mid \vu_\timeindex)$. Therefore the approximate likelihood for the sparse model factorises across time, but not space (see \cref{appendix:cvi_block_diagional} for details): $\log \N (\apxy \mid \vu, \apxv) =\sum_{\timeindex=1}^{N_t} \log \N (\apxy_\timeindex \mid \vu_\timeindex, \apxv_\timeindex)$.

\paragraph{The Spatio-Temporal Sparse VGP ELBO} Adding inducing points in space is equivalent to placing the inducing points on a spatio-temporal grid (\ie, inducing points exist at all time steps), and hence the variational objective directly follows from $\LL_\SVGP$ using a similar argument to \cref{sec:stvgp-elbo}:
\begin{align}\label{eqn:sparse_simplfied_elbo}
	\LL_\STSVGP &= \E_{q(\vf, \vu)} \!\bigg[ \!\log \frac{p(\MY \mid \vf) \, \cancel{p(\vf \mid \vu)}\,\cancel{p(\vu)} \int \N(\apxy \mid \vu, \apxv) \, p(\vu) \,\mathrm{d}\vu}{ \N(\apxy \mid \vu, \apxv)\,\cancel{p(\vf \mid \vu)}\,\cancel{p(\vu)}}\bigg] \nonumber \\
	&= 
	\sum^{\N_t}_{\timeindex=1} \sum^{\N_\Space}_{\spaceindex=1} \E_{q(\vu_\timeindex)} \big[ \E_{p(\vf_{\timeindex, \spaceindex} \mid \vu_\timeindex)} \big[ \log p(\MY_{\timeindex, \spaceindex} \mid \vf_{\timeindex, \spaceindex}) \big] \big]
	- \sum^{\N_t}_{\timeindex=1} \E_{q(\vu_\timeindex)} \big[ \log \N(\apxy_\timeindex \mid \vu_\timeindex, \apxv_\timeindex) \big] \nonumber \\
	& \quad\quad + \sum_{\timeindex=1}^{N_t} \log \E_{p(\state(t_\timeindex) \mid \apxy_{1:\timeindex-1})} \big[\N(\apxy_\timeindex \mid \MH \state(t_\timeindex), \apxv_\timeindex) \big] , %
\end{align}
where the final term is given by the forward filter. %

\paragraph{Efficient Natural Gradient Updates} The marginal required to compute the ELBO and natural gradient, $q(\vf_{\timeindex,\spaceindex}) = \int \!\! \int p(\vf \mid \vu) \, q(\vu) \intd{\vu} \intd{\vf_{\neq \timeindex, \spaceindex}}= \int \!\! \int p(\vf \mid \vu_\timeindex) \, q(\vu_\timeindex) \intd{\vu_\timeindex} \intd{\vf_{\neq \timeindex, \spaceindex}}$, is the predictive distribution at input $\gridX_{\timeindex, \spaceindex}$ from the posterior $q(\vu)$.
Because the inducing points have only been placed in space, this can be simplified through the Kronecker structure given by the state-space model. As shown in \cref{sec:appendix_kronecker_svgp_marginal}, 
the marginal mean and covariance are, 
\begin{gather}
	\begin{aligned}
		\vm_{\timeindex, \spaceindex} &=  \sK{\SPACE_{\spaceindex}}{\MZ_{\Space}}{\ks} \siK{\MZ_{\Space}}{\ks} \vm_{\timeindex}^{(\vu)} \, , \\ %
		\postcov_{\timeindex, \spaceindex} &= \sK{\SPACE_{\spaceindex}}{\MZ_{\Space}}{\ks} \siK{\MZ_{\Space}}{\ks} \postcov_{\timeindex}^{(\vu)} \siK{\MZ_{\Space}}{\ks}  \sK{\MZ_{\Space}}{\SPACE_{\spaceindex}}{\ks} +
		\sK{\gridX_{\hspace{-0.05em}\timeindex\hspace{-0.05em},\hspace{-0.05em} \spaceindex}}{\gridX_{\hspace{-0.05em}\timeindex\hspace{-0.05em}, \hspace{-0.05em}\spaceindex}}{t} \big( \sK{\SPACE_{\spaceindex}}{\SPACE_{\spaceindex}}{\ks}  -  \sK{\SPACE_{\spaceindex}}{\MZ_{\Space}}{\ks} \siK{\MZ_{\Space}}{\ks}  \sK{\MZ_{\Space}}{\SPACE_{\spaceindex}}{\ks} \big) , %
	\end{aligned}
\end{gather}
where $\vm_\timeindex^{(\vu)}=\MH\bar{\vm}_\timeindex$, $\postcov_\timeindex^{(\vu)}=\MH \bar{\MP}_\timeindex \MH^\T$ are given by filtering and smoothing.
By combining the above properties we see that all the terms required for the natural gradient updates and hyperparameter learning can be computed efficiently in $\bigO(N_t M_\Space^3 d_t^3)$. We call this approach the \emph{spatio-temporal sparse variational GP} (\STSVGP). The full algorithm is given in \cref{alg:sparse_st_cvi}.

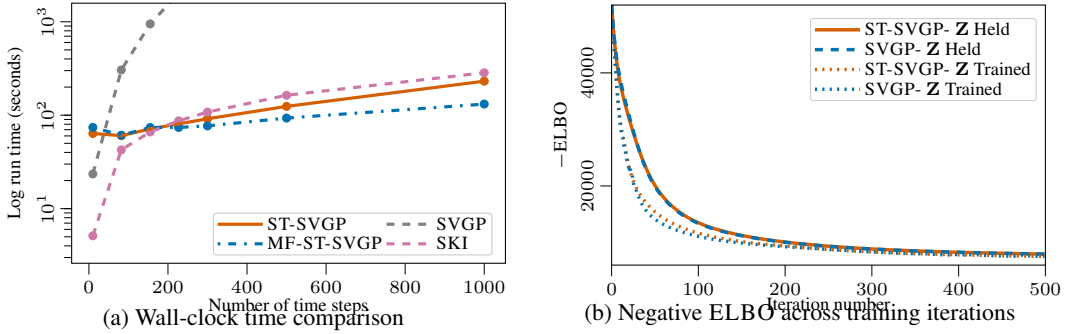
\begin{figure}[t]
	\scriptsize
	\centering
	
	\begin{subfigure}{.48\textwidth}
		\pgfplotsset{yticklabel style={rotate=90}, ylabel style={yshift=0pt}, xlabel style={yshift=3pt},scale only axis,axis on top,clip=true,clip marker paths=true}
		\pgfplotsset{legend style={inner xsep=1pt, inner ysep=1pt, row sep=0pt},legend style={at={(0.98,0.95)},anchor=north east},legend style={rounded corners=1pt}}
		\setlength{\figurewidth}{.86\textwidth}
		\setlength{\figureheight}{.6\figurewidth}
		
		\centering
\begin{tikzpicture}

\definecolor{color0}{rgb}{0.835294117647059,0.368627450980392,0}
\definecolor{color1}{rgb}{0,0.619607843137255,0.450980392156863}
\definecolor{color2}{rgb}{0,0.447058823529412,0.698039215686274}
\definecolor{color3}{rgb}{0.8,0.474509803921569,0.654901960784314}

\begin{axis}[
height=\figureheight,
legend cell align={left},
legend columns=2,
legend style={fill opacity=0.8, draw opacity=1, text opacity=1, at={(0.97,0.03)}, anchor=south east, draw=white!80!black},
log basis y={10},
tick align=outside,
tick pos=left,
width=\figurewidth,
x grid style={white!69.0196078431373!black},
xlabel={Number of time steps},
xmin=-43.6373223322329, xmax=1053.63732233223,
xtick style={color=black},
y grid style={white!69.0196078431373!black},
ylabel={Log run time (seconds)},
ymin=2.60414716846746, ymax=1579.91730472988,
ymode=log,
ytick style={color=black}
]
\addplot [only marks, mark=*, draw=color0, fill=color0, colormap/viridis, forget plot,mark size=1.5pt]
table{%
x                      y
10 63.7115128133446
82 60.4199197488837
155 71.3222890230827
227 80.712445615232
300 91.6146528605372
500 124.215907577239
1000 231.163579195458
};
\addplot [only marks, mark=*, draw=white!50.1960784313725!black, fill=white!50.1960784313725!black, colormap/viridis, forget plot,mark size=1.5pt]
table{%
x                      y
10 23.5228871065192
82 304.993248012569
155 947.628216310032
227 2018.90316602411
300 3504.87859138399
0 0
};
\addplot [only marks, mark=*, draw=color2, fill=color2, colormap/viridis, forget plot,mark size=1.5pt]
table{%
x                      y
10 74.0258787328377
82 61.0914936486632
155 73.8619329852052
227 73.9422508169897
300 76.8499107033014
500 93.1416324221529
1000 131.512665539235
};
\addplot [only marks, mark=*, draw=color3, fill=color3, colormap/viridis, forget plot,mark size=1.5pt]
table{%
x                      y
10 5.12263157707639
82 42.5824939490296
155 66.1947727684863
227 87.0526755646802
300 108.003617039789
500 163.510734979715
1000 284.482517335471
};
\addplot [semithick, color0, line width = 1.2pt]
table {%
10 63.7115128133446
82 60.4199197488837
155 71.3222890230827
227 80.712445615232
300 91.6146528605372
500 124.215907577239
1000 231.163579195458
};
\addlegendentry{\STSVGP}
\addplot [semithick, white!50.1960784313725!black, dashed, line width = 1.2pt]
table {%
10 23.5228871065192
82 304.993248012569
155 947.628216310032
227 2018.90316602411
300 3504.87859138399
500 nan
};
\addlegendentry{\SVGP}
\addplot [semithick, color2, dash pattern=on 1pt off 3pt on 3pt off 3pt, line width = 1.2pt]
table {%
10 74.0258787328377
82 61.0914936486632
155 73.8619329852052
227 73.9422508169897
300 76.8499107033014
500 93.1416324221529
1000 131.512665539235
};
\addlegendentry{\MFSTSVGP}
\addplot [semithick, color3, dashed, line width = 1.2pt]
table {%
10 5.12263157707639
82 42.5824939490296
155 66.1947727684863
227 87.0526755646802
300 108.003617039789
500 163.510734979715
1000 284.482517335471
};
\addlegendentry{\SKI}
\end{axis}

\end{tikzpicture}
 		
		\vspace{-0.4cm}
		\caption{Wall-clock time comparison}
		
		\label{fig:shutters_run_times} 
	\end{subfigure}
	\hfill
	\begin{subfigure}{.48\textwidth}
		\scriptsize
		\centering
		\pgfplotsset{yticklabel style={rotate=90}, ylabel style={yshift=0pt}, xlabel style={yshift=4pt},scale only axis,axis on top,clip=true,clip marker paths=true}
		\pgfplotsset{legend style={inner xsep=1pt, inner ysep=1pt, row sep=0pt},legend style={at={(0.98,0.95)},anchor=north east},legend style={rounded corners=1pt}}
		\pgfplotsset{scaled y ticks=false}
		\setlength{\figurewidth}{.86\textwidth}
		\setlength{\figureheight}{.6\figurewidth}
		
		\centering
\begin{tikzpicture}

\definecolor{color0}{rgb}{0.835294117647059,0.368627450980392,0}
\definecolor{color1}{rgb}{0,0.447058823529412,0.698039215686274}

\begin{axis}[
height=\figureheight,
legend cell align={left},
legend style={fill opacity=0.8, draw opacity=1, text opacity=1, draw=white!80!black},
tick align=outside,
tick pos=left,
width=\figurewidth,
x grid style={white!69.0196078431373!black},
xlabel={Iteration number},
xmin=0, xmax=500,
xtick style={color=black},
y grid style={white!69.0196078431373!black},
ylabel={\(\displaystyle -\)\ELBO},
ymin=6000, ymax=52000,
ytick style={color=black}
]
\addplot [thick, color0, line width = 1.3pt]
table {%
0 52060.3252640152
1 49892.1242668146
2 47850.883609507
3 46010.5004040539
4 44413.5604688586
5 43035.2211722249
6 41820.171823594
7 40738.3300950761
8 39767.9924329383
9 38890.5382458604
10 38089.7880669933
11 37351.1052063402
12 36661.0812024444
13 36008.6844181669
14 35385.6435761267
15 34785.6061221377
16 34203.510037865
17 33635.5453351203
18 33079.1024112424
19 32532.4978879741
20 31994.6856841964
21 31465.0657770071
22 30943.3457587285
23 30429.419678842
24 29923.2864523593
25 29425.0238830719
26 28934.7906730449
27 28452.8188352619
28 27979.3878671031
29 27514.7950979357
30 27059.3353030849
31 26613.2911480138
32 26176.9292767154
33 25750.4972760188
34 25334.2196280774
35 24928.2927204378
36 24532.8796663133
37 24148.1056918791
38 23774.0545923775
39 23410.7664710178
40 23058.2367663564
41 22716.4164656807
42 22385.2133643245
43 22064.4942268821
44 21754.0877114724
45 21453.7879217108
46 21163.3584506726
47 20882.536778545
48 20611.0388836133
49 20348.56392713
50 20094.7988780857
51 19849.4229545813
52 19612.1117741859
53 19382.5411254327
54 19160.3902951246
55 18945.3449097891
56 18737.0992728943
57 18535.3582008881
58 18339.8383796665
59 18150.2692779718
60 17966.3936650996
61 17787.9677871539
62 17614.7612592144
63 17446.5567306471
64 17283.1493780476
65 17124.3462756043
66 16969.9656866964
67 16819.8363138811
68 16673.7965375994
69 16531.6936673298
70 16393.3832228439
71 16258.7282578475
72 16127.5987337196
73 15999.8709473244
74 15875.4270139243
75 15754.154404013
76 15635.9455313123
77 15520.6973881483
78 15408.3112238299
79 15298.6922614074
80 15191.7494482004
81 15087.395235676
82 14985.5453845792
83 14886.1187915993
84 14789.0373342783
85 14694.2257312924
86 14601.6114156436
87 14511.1244186798
88 14422.6972632031
89 14336.26486423
90 14251.7644362261
91 14169.1354058645
92 14088.319329538
93 14009.2598150129
94 13931.9024467344
95 13856.1947143948
96 13782.0859444553
97 13709.5272343726
98 13638.4713893329
99 13568.8728613303
100 13500.6876904552
101 13433.8734482817
102 13368.3891832557
103 13304.1953679991
104 13241.2538484551
105 13179.5277948048
106 13118.9816540907
107 13059.581104488
108 13001.2930111655
109 12944.0853836843
110 12887.9273348792
111 12832.7890411763
112 12778.6417042963
113 12725.4575142994
114 12673.2096139257
115 12621.8720641894
116 12571.4198111843
117 12521.8286540618
118 12473.0752141424
119 12425.1369051225
120 12377.9919043426
121 12331.6191250796
122 12285.998189832
123 12241.1094045632
124 12196.9337338725
125 12153.4527770638
126 12110.6487450797
127 12068.5044382759
128 12027.0032250039
129 11986.1290209779
130 11945.8662693978
131 11906.1999218026
132 11867.1154196303
133 11828.5986764583
134 11790.6360609021
135 11753.2143801487
136 11716.3208641016
137 11679.943150118
138 11644.069268314
139 11608.6876274196
140 11573.7870011635
141 11539.3565151671
142 11505.3856343302
143 11471.8641506902
144 11438.7821717368
145 11406.1301091659
146 11373.8986680555
147 11342.0788364492
148 11310.66187533
149 11279.6393089713
150 11249.0029156503
151 11218.74471871
152 11188.856977956
153 11159.3321813764
154 11130.1630371719
155 11101.342466084
156 11072.8635940101
157 11044.7197448946
158 11016.9044338854
159 10989.411360745
160 10962.2344035077
161 10935.3676123719
162 10908.8052038196
163 10882.5415549536
164 10856.5711980448
165 10830.8888152805
166 10805.4892337075
167 10780.3674203601
168 10755.5184775688
169 10730.9376384408
170 10706.6202625055
171 10682.5618315207
172 10658.7579454311
173 10635.2043184745
174 10611.8967754298
175 10588.8312480015
176 10566.003771335
177 10543.410480658
178 10521.0476080439
179 10498.9114792914
180 10476.9985109161
181 10455.3052072511
182 10433.8281576503
183 10412.5640337928
184 10391.5095870831
185 10370.6616461434
186 10350.0171143957
187 10329.5729677293
188 10309.3262522509
189 10289.2740821139
190 10269.4136374245
191 10249.7421622209
192 10230.2569625235
193 10210.9554044534
194 10191.8349124165
195 10172.8929673505
196 10154.1271050328
197 10135.5349144472
198 10117.1140362067
199 10098.8621610308
200 10080.777028275
201 10062.8564245105
202 10045.0981821527
203 10027.5001781361
204 10010.0603326348
205 9992.77660782528
206 9975.6470066921
207 9958.66957187307
208 9941.84238454354
209 9925.1635633379
210 9908.63126330715
211 9892.24367491116
212 9875.99902304437
213 9859.89556609367
214 9843.93159502749
215 9828.10543251466
216 9812.41543207232
217 9796.85997724157
218 9781.43748079003
219 9766.14638394021
220 9750.98515562301
221 9735.95229175511
222 9721.04631453977
223 9706.2657717899
224 9691.60923627286
225 9677.07530507605
226 9662.66259899267
227 9648.36976192685
228 9634.19546031761
229 9620.13838258081
230 9606.19723856867
231 9592.37075904601
232 9578.65769518293
233 9565.05681806299
234 9551.56691820679
235 9538.18680511002
236 9524.9153067957
237 9511.75126938014
238 9498.69355665201
239 9485.74104966418
240 9472.89264633792
241 9460.14726107891
242 9447.50382440479
243 9434.9612825838
244 9422.51859728418
245 9410.17474523384
246 9397.92871789017
247 9385.77952111947
248 9373.7261748857
249 9361.76771294842
250 9349.90318256932
251 9338.13164422728
252 9326.45217134164
253 9314.86385000329
254 9303.36577871353
255 9291.95706813019
256 9280.63684082101
257 9269.4042310239
258 9258.25838441388
259 9247.19845787653
260 9236.22361928768
261 9225.33304729919
262 9214.52593113054
263 9203.80147036619
264 9193.15887475834
265 9182.59736403507
266 9172.11616771359
267 9161.71452491852
268 9151.39168420495
269 9141.14690338619
270 9130.97944936601
271 9120.88859797537
272 9110.87363381321
273 9100.93385009151
274 9091.06854848427
275 9081.27703898029
276 9071.5586397398
277 9061.91267695458
278 9052.33848471166
279 9042.83540486044
280 9033.402786883
281 9024.03998776768
282 9014.74637188575
283 9005.52131087102
284 8996.3641835024
285 8987.27437558927
286 8978.25127985956
287 8969.2942958505
288 8960.40282980189
289 8951.57629455197
290 8942.81410943551
291 8934.11570018448
292 8925.48049883078
293 8916.9079436113
294 8908.3974788751
295 8899.94855499261
296 8891.56062826689
297 8883.23316084686
298 8874.96562064237
299 8866.75748124115
300 8858.60822182758
301 8850.51732710309
302 8842.48428720837
303 8834.50859764716
304 8826.58975921158
305 8818.72727790915
306 8810.9206648911
307 8803.16943638239
308 8795.47311361291
309 8787.83122275021
310 8780.24329483353
311 8772.70886570911
312 8765.2274759668
313 8757.79867087784
314 8750.42200033394
315 8743.09701878743
316 8735.82328519259
317 8728.60036294804
318 8721.4278198403
319 8714.30522798826
320 8707.23216378878
321 8700.20820786321
322 8693.2329450049
323 8686.30596412764
324 8679.42685821507
325 8672.59522427086
326 8665.81066326982
327 8659.07278010997
328 8652.38118356518
329 8645.73548623888
330 8639.13530451835
331 8632.58025852988
332 8626.06997209467
333 8619.60407268539
334 8613.18219138353
335 8606.80396283736
336 8600.46902522066
337 8594.17702019201
338 8587.92759285478
339 8581.72039171774
340 8575.55506865627
341 8569.43127887417
342 8563.34868086608
343 8557.30693638041
344 8551.3057103829
345 8545.34467102068
346 8539.4234895869
347 8533.5418404858
348 8527.69940119841
349 8521.89585224864
350 8516.13087716993
351 8510.40416247232
352 8504.71539761007
353 8499.06427494962
354 8493.45048973811
355 8487.87374007226
356 8482.33372686771
357 8476.83015382881
358 8471.36272741876
359 8465.93115683018
360 8460.5351539561
361 8455.17443336131
362 8449.84871225409
363 8444.55771045832
364 8439.30115038595
365 8434.07875700985
366 8428.89025783697
367 8423.73538288188
368 8418.61386464066
369 8413.52543806504
370 8408.46984053698
371 8403.44681184351
372 8398.45609415184
373 8393.49743198488
374 8388.57057219701
375 8383.67526395013
376 8378.81125869
377 8373.97831012295
378 8369.17617419277
379 8364.40460905792
380 8359.66337506904
381 8354.95223474669
382 8350.27095275936
383 8345.61929590175
384 8340.99703307333
385 8336.40393525708
386 8331.83977549856
387 8327.30432888517
388 8322.79737252568
389 8318.31868552997
390 8313.86804898901
391 8309.44524595511
392 8305.05006142231
393 8300.68228230709
394 8296.34169742923
395 8292.0280974929
396 8287.74127506799
397 8283.48102457161
398 8279.24714224984
399 8275.03942615963
400 8270.85767615096
401 8266.70169384914
402 8262.57128263735
403 8258.46624763936
404 8254.38639570241
405 8250.33153538033
406 8246.30147691679
407 8242.29603222877
408 8238.31501489023
409 8234.35824011585
410 8230.42552474508
411 8226.51668722629
412 8222.63154760109
413 8218.76992748881
414 8214.93165007125
415 8211.11654007739
416 8207.32442376847
417 8203.5551289231
418 8199.8084848226
419 8196.08432223642
420 8192.38247340778
421 8188.70277203946
422 8185.04505327968
423 8181.40915370818
424 8177.79491132246
425 8174.20216552409
426 8170.63075710524
427 8167.08052823533
428 8163.55132244777
429 8160.04298462693
430 8156.55536099514
431 8153.08829909991
432 8149.64164780127
433 8146.21525725917
434 8142.80897892109
435 8139.42266550976
436 8136.05617101099
437 8132.7093506616
438 8129.38206093755
439 8126.07415954215
440 8122.78550539433
441 8119.51595861715
442 8116.26538052635
443 8113.03363361901
444 8109.82058156239
445 8106.6260891828
446 8103.45002245467
447 8100.29224848964
448 8097.15263552586
449 8094.03105291732
450 8090.92737112332
451 8087.84146169802
452 8084.77319728017
453 8081.72245158287
454 8078.68909938343
455 8075.67301651342
456 8072.67407984867
457 8069.69216729957
458 8066.72715780127
459 8063.77893130411
460 8060.84736876409
461 8057.93235213347
462 8055.03376435143
463 8052.15148933483
464 8049.28541196908
465 8046.43541809911
466 8043.6013945204
467 8040.78322897009
468 8037.98081011826
469 8035.19402755917
470 8032.42277180274
471 8029.66693426594
472 8026.92640726446
473 8024.20108400427
474 8021.4908585734
475 8018.79562593381
476 8016.11528191318
477 8013.44972319698
478 8010.79884732048
479 8008.16255266095
480 8005.54073842981
481 8002.93330466496
482 8000.34015222318
483 7997.76118277252
484 7995.19629878487
485 7992.64540352853
486 7990.10840106091
487 7987.58519622125
488 7985.07569462344
489 7982.57980264892
490 7980.09742743964
491 7977.62847689104
492 7975.17285964523
493 7972.7304850841
494 7970.30126332253
495 7967.88510520176
496 7965.48192228269
497 7963.09162683937
498 7960.71413185244
499 7958.34935100272
};
\addlegendentry{\STSVGP - \ZZ Held}
\addplot [thick, color1, dashed, line width = 1.3pt]
table {%
0 52060.261416598
1 49944.572542786
2 47996.7133964958
3 46270.8395688363
4 44776.4958237336
5 43493.8002647974
6 42364.6224834357
7 41353.3960956757
8 40436.8648694914
9 39596.4269969847
10 38816.9311799845
11 38086.0778707303
12 37393.6304656167
13 36730.9142104327
14 36090.9258475283
15 35468.4248243526
16 34859.602535714
17 34261.6157422528
18 33672.3339027077
19 33090.2902942206
20 32514.6459597852
21 31945.0820354464
22 31381.6624094577
23 30824.7189398851
24 30274.7643243754
25 29732.4146175801
26 29198.3198607998
27 28673.1200629456
28 28157.4345654988
29 27651.8686485738
30 27157.0140800396
31 26673.435072025
32 26201.6470973059
33 25742.099601494
34 25295.1676936242
35 24861.1511385988
36 24440.2764439221
37 24032.6988604779
38 23638.5030584817
39 23257.7026514243
40 22890.2394118348
41 22535.9831297878
42 22194.7328460705
43 21866.2198646883
44 21550.1126513495
45 21246.0235039173
46 20953.5167330527
47 20672.117996873
48 20401.3243767557
49 20140.6147545788
50 19889.4600520878
51 19647.3329199077
52 19413.7165153152
53 19188.1120805215
54 18970.0451208135
55 18759.0700766406
56 18554.7734771261
57 18356.7756466151
58 18164.731104523
59 17978.3278480625
60 17797.2857361763
61 17621.3542022544
62 17450.3095158825
63 17283.9517938468
64 17122.1019322282
65 16964.598598696
66 16811.2953905236
67 16662.0582320154
68 16516.7630567426
69 16375.2937962125
70 16237.5406777055
71 16103.3988198558
72 15972.7671046934
73 15845.5472986744
74 15721.6433920158
75 15600.9611247369
76 15483.4076685788
77 15368.8914358864
78 15257.3219891655
79 15148.6100280289
80 15042.6674333655
81 14939.407351629
82 14838.7443050268
83 14740.5943160263
84 14644.8750369438
85 14551.5058774367
86 14460.4081244828
87 14371.5050509235
88 14284.7220098908
89 14199.98651346
90 14117.2282946973
91 14036.3793529288
92 13957.3739825717
93 13880.1487862594
94 13804.6426732782
95 13730.7968445365
96 13658.5547654147
97 13587.8621279182
98 13518.6668035751
99 13450.9187885088
100 13384.5701420687
101 13319.5749203356
102 13255.8891057309
103 13193.4705338641
104 13132.2788186463
105 13072.2752765883
106 13013.4228510939
107 12955.6860374468
108 12899.0308090883
109 12843.4245456806
110 12788.8359633571
111 12735.2350474765
112 12682.5929881159
113 12630.8821184705
114 12580.0758562628
115 12530.1486482112
116 12481.0759175599
117 12432.8340146342
118 12385.4001703523
119 12338.7524525974
120 12292.8697253364
121 12247.7316103546
122 12203.3184514653
123 12159.6112810482
124 12116.5917887664
125 12074.2422923121
126 12032.5457100344
127 11991.4855353036
128 11951.0458124777
129 11911.2111143375
130 11871.9665208692
131 11833.2975992793
132 11795.1903851343
133 11757.6313645298
134 11720.6074571968
135 11684.1060004664
136 11648.1147340175
137 11612.6217853432
138 11577.6156558755
139 11543.0852077169
140 11509.0196509311
141 11475.4085313525
142 11442.2417188768
143 11409.5093962025
144 11377.202047993
145 11345.3104504358
146 11313.825661177
147 11282.7390096106
148 11252.0420875078
149 11221.7267399691
150 11191.7850566868
151 11162.2093635063
152 11132.9922142744
153 11104.1263829651
154 11075.6048560736
155 11047.4208252709
156 11019.5676803092
157 10992.0390021729
158 10964.8285564657
159 10937.9302870288
160 10911.3383097825
161 10885.0469067844
162 10859.0505204992
163 10833.3437482723
164 10807.9213370018
165 10782.7781780032
166 10757.9093020599
167 10733.3098746542
168 10708.9751913737
169 10684.9006734851
170 10661.0818636728
171 10637.5144219345
172 10614.1941216292
173 10591.1168456722
174 10568.2785828721
175 10545.6754244044
176 10523.3035604165
177 10501.1592767602
178 10479.2389518456
179 10457.5390536129
180 10436.0561366176
181 10414.7868392234
182 10393.7278809011
183 10372.8760596262
184 10352.2282493748
185 10331.7813977111
186 10311.5325234644
187 10291.4787144915
188 10271.6171255218
189 10251.9449760805
190 10232.4595484881
191 10213.1581859328
192 10194.0382906122
193 10175.0973219427
194 10156.3327948325
195 10137.7422780174
196 10119.3233924549
197 10101.0738097766
198 10082.9912507939
199 10065.0734840575
200 10047.3183244665
201 10029.723631927
202 10012.2873100561
203 9995.00730493195
204 9977.88160388584
205 9960.90823433623
206 9944.08526266249
207 9927.41079311664
208 9910.88296677218
209 9894.49996050805
210 9878.25998602679
211 9862.1612889053
212 9846.20214767719
213 9830.38087294538
214 9814.69580652404
215 9799.14532060837
216 9783.72781697169
217 9768.44172618847
218 9753.28550688243
219 9738.25764499895
220 9723.35665310069
221 9708.5810696857
222 9693.92945852717
223 9679.40040803417
224 9664.99253063231
225 9650.70446216402
226 9636.53486130749
227 9622.48240901359
228 9608.54580796044
229 9594.72378202458
230 9581.01507576852
231 9567.41845394392
232 9553.93270100999
233 9540.55662066636
234 9527.28903540018
235 9514.12878604688
236 9501.07473136401
237 9488.12574761788
238 9475.28072818249
239 9462.5385831503
240 9449.89823895459
241 9437.35863800277
242 9424.91873832064
243 9412.57751320681
244 9400.3339508973
245 9388.18705423981
246 9376.13584037735
247 9364.17934044102
248 9352.3165992515
249 9340.54667502905
250 9328.8686391118
251 9317.28157568191
252 9305.78458149949
253 9294.37676564394
254 9283.0572492625
255 9271.82516532581
256 9260.67965839019
257 9249.6198843665
258 9238.64501029531
259 9227.75421412817
260 9216.9466845149
261 9206.22162059658
262 9195.57823180408
263 9185.01573766204
264 9174.53336759805
265 9164.1303607569
266 9153.80596581969
267 9143.55944082775
268 9133.39005301116
269 9123.29707862163
270 9113.27980276983
271 9103.33751926684
272 9093.46953046967
273 9083.67514713065
274 9073.9536882508
275 9064.3044809367
276 9054.72686026105
277 9045.22016912665
278 9035.78375813383
279 9026.41698545108
280 9017.11921668878
281 9007.88982477618
282 8998.72818984117
283 8989.63369909301
284 8980.60574670796
285 8971.64373371743
286 8962.74706789892
287 8953.91516366944
288 8945.14744198148
289 8936.44333022126
290 8927.80226210952
291 8919.22367760432
292 8910.70702280627
293 8902.25174986578
294 8893.85731689239
295 8885.52318786614
296 8877.24883255088
297 8869.0337264095
298 8860.87735052095
299 8852.77919149912
300 8844.73874141339
301 8836.75549771095
302 8828.82896314068
303 8820.95864567867
304 8813.14405845525
305 8805.38471968363
306 8797.68015258984
307 8790.02988534427
308 8782.43345099442
309 8774.89038739915
310 8767.40023716413
311 8759.9625475786
312 8752.57687055339
313 8745.2427625601
314 8737.95978457149
315 8730.72750200297
316 8723.54548465523
317 8716.41330665795
318 8709.33054641452
319 8702.29678654783
320 8695.31161384706
321 8688.37461921533
322 8681.48539761849
323 8674.64354803465
324 8667.84867340469
325 8661.10038058362
326 8654.39828029281
327 8647.74198707307
328 8641.13111923842
329 8634.56529883081
330 8628.0441515755
331 8621.56730683723
332 8615.1343975771
333 8608.74506031017
334 8602.39893506378
335 8596.09566533654
336 8589.8348980579
337 8583.61628354852
338 8577.43947548114
339 8571.30413084212
340 8565.20990989359
341 8559.15647613616
342 8553.14349627223
343 8547.17064016983
344 8541.23758082707
345 8535.34399433703
346 8529.48955985327
347 8523.67395955578
348 8517.89687861749
349 8512.15800517121
350 8506.45703027712
351 8500.79364789067
352 8495.16755483093
353 8489.57845074949
354 8484.02603809967
355 8478.5100221062
356 8473.03011073544
357 8467.58601466581
358 8462.17744725877
359 8456.80412453017
360 8451.46576512196
361 8446.16209027427
362 8440.89282379794
363 8435.65769204729
364 8430.4564238934
365 8425.28875069764
366 8420.1544062855
367 8415.05312692095
368 8409.98465128089
369 8404.94872043014
370 8399.94507779661
371 8394.97346914679
372 8390.0336425617
373 8385.12534841294
374 8380.24833933913
375 8375.40237022269
376 8370.58719816685
377 8365.80258247289
378 8361.0482846178
379 8356.32406823207
380 8351.62969907784
381 8346.96494502725
382 8342.32957604114
383 8337.72336414786
384 8333.1460834225
385 8328.59750996622
386 8324.07742188589
387 8319.58559927403
388 8315.12182418882
389 8310.6858806345
390 8306.27755454194
391 8301.8966337494
392 8297.54290798354
393 8293.21616884067
394 8288.91620976819
395 8284.64282604624
396 8280.39581476953
397 8276.17497482945
398 8271.98010689632
399 8267.81101340184
400 8263.66749852176
401 8259.54936815874
402 8255.45642992538
403 8251.38849312747
404 8247.34536874738
405 8243.32686942768
406 8239.33280945489
407 8235.36300474349
408 8231.41727281999
409 8227.49543280726
410 8223.59730540898
411 8219.72271289432
412 8215.87147908273
413 8212.04342932887
414 8208.23839050781
415 8204.45619100026
416 8200.69666067804
417 8196.9596308897
418 8193.24493444625
419 8189.55240560706
420 8185.88188006595
421 8182.23319493735
422 8178.60618874271
423 8175.00070139692
424 8171.416574195
425 8167.85364979883
426 8164.3117722241
427 8160.79078682736
428 8157.29054029319
429 8153.81088062149
430 8150.35165711499
431 8146.9127203668
432 8143.49392224812
433 8140.09511589607
434 8136.71615570168
435 8133.35689729794
436 8130.01719754805
437 8126.69691453371
438 8123.39590754361
439 8120.11403706197
440 8116.85116475724
441 8113.60715347092
442 8110.38186720643
443 8107.17517111819
444 8103.98693150075
445 8100.81701577801
446 8097.66529249265
447 8094.53163129552
448 8091.41590293528
449 8088.31797924808
450 8085.23773314731
451 8082.1750386135
452 8079.12977068438
453 8076.10180544487
454 8073.09102001738
455 8070.09729255202
456 8067.12050221702
457 8064.16052918928
458 8061.21725464484
459 8058.29056074968
460 8055.3803306504
461 8052.48644846512
462 8049.60879927447
463 8046.74726911259
464 8043.90174495828
465 8041.07211472625
466 8038.25826725843
467 8035.46009231534
468 8032.67748056761
469 8029.91032358754
470 8027.15851384075
471 8024.42194467796
472 8021.70051032674
473 8018.99410588349
474 8016.30262730538
475 8013.62597140241
476 8010.96403582957
477 8008.31671907909
478 8005.68392047268
479 8003.06554015391
480 8000.46147908072
481 7997.8716390179
482 7995.29592252967
483 7992.7342329724
484 7990.18647448735
485 7987.65255199343
486 7985.13237118016
487 7982.62583850058
488 7980.13286116431
489 7977.65334713061
490 7975.18720510156
491 7972.73434451534
492 7970.29467553943
493 7967.86810906407
494 7965.45455669563
495 7963.05393075012
496 7960.66614424677
497 7958.29111090162
498 7955.92874512123
499 7953.5789619964
};
\addlegendentry{\SVGP - \ZZ Held}
\addplot [thick, color0, dotted, line width = 1.3pt]
table {%
0 52060.3252640152
1 49629.3087637904
2 47097.7932472182
3 44553.7496393761
4 42082.3506744105
5 39730.4114672273
6 37546.394733318
7 35553.5094851734
8 33834.4158715124
9 32483.1997781488
10 31409.3082382305
11 30465.3643529303
12 29601.366957393
13 28793.049357206
14 28001.7706818818
15 27208.9081571122
16 26416.8780783678
17 25634.6469049258
18 24869.7497612927
19 24134.577354376
20 23453.7181311799
21 22844.8931515717
22 22300.0907595454
23 21796.2064002811
24 21320.0241462749
25 20874.6212621067
26 20467.9739285881
27 20104.4987882271
28 19782.2237910778
29 19492.5106599778
30 19220.6245547203
31 18953.2930945939
32 18687.7225556889
33 18430.9576934138
34 18190.7825053508
35 17968.2067037852
36 17757.6936768063
37 17553.2882559125
38 17353.4258506871
39 17160.4751192587
40 16977.1867695317
41 16803.8163941019
42 16637.9936558536
43 16476.79225284
44 16317.810581852
45 16159.8696926909
46 16003.3635420518
47 15849.6238397412
48 15700.0257121054
49 15555.3511150429
50 15416.1370432866
51 15282.7789957059
52 15155.0664938213
53 15031.7220509463
54 14911.4743045093
55 14794.1912306008
56 14680.4550656818
57 14571.119366878
58 14466.6900371179
59 14366.7410012938
60 14270.1456166723
61 14175.8915647257
62 14083.4858779777
63 13992.8406220125
64 13904.0438714367
65 13817.2129771599
66 13732.4023192942
67 13649.5622443825
68 13568.55349179
69 13489.1941760669
70 13411.3118333025
71 13334.7828363014
72 13259.5527535817
73 13185.6372486143
74 13113.1072803234
75 13042.053350962
76 12972.5407974986
77 12904.5892246263
78 12838.1607597299
79 12773.1671302672
80 12709.4978965439
81 12647.056651713
82 12585.7852046444
83 12525.6636929149
84 12466.6923804983
85 12408.8707517831
86 12352.1842255157
87 12296.6010878787
88 12242.0774235586
89 12188.5657399717
90 12136.0232790049
91 12084.4182485499
92 12033.7315479301
93 11983.9515622619
94 11935.0678640467
95 11887.0670576793
96 11839.9298180564
97 11793.6318465081
98 11748.1468781111
99 11703.4499961429
100 11659.5199122479
101 11616.3396180594
102 11573.8955464346
103 11532.1755593448
104 11491.1664528406
105 11450.8526975329
106 11411.2161735137
107 11372.2367785627
108 11333.8942833092
109 11296.1704972223
110 11259.0502899537
111 11222.5215449397
112 11186.5744497759
113 11151.200569072
114 11116.392112856
115 11082.1414421163
116 11048.4407788496
117 11015.2820608937
118 10982.6568889972
119 10950.5565308985
120 10918.9719378585
121 10887.8937311009
122 10857.3119084005
123 10827.2143036384
124 10797.588538723
125 10768.4232659136
126 10739.7078426955
127 10711.4321319655
128 10683.5863166312
129 10656.1607619087
130 10629.1459307816
131 10602.5323648473
132 10576.3107215582
133 10550.4718387895
134 10525.0067898638
135 10499.9069059062
136 10475.163771293
137 10450.7692044735
138 10426.7152362399
139 10402.9941114698
140 10379.598314188
141 10356.5205493593
142 10333.753782227
143 10311.2912267735
144 10289.1263229907
145 10267.2527109563
146 10245.6641994279
147 10224.3547411611
148 10203.3184171406
149 10182.549428593
150 10162.0420954725
151 10141.7908580433
152 10121.7902714401
153 10102.035003644
154 10082.519834263
155 10063.2396543433
156 10044.1894677464
157 10025.3643945562
158 10006.7596757803
159 9988.37067906222
160 9970.19290465383
161 9952.22197500648
162 9934.4536337749
163 9916.88377979153
164 9899.50848098437
165 9882.32398533994
166 9865.3267281134
167 9848.51333401322
168 9831.88063278854
169 9815.42566799423
170 9799.14570123254
171 9783.03820698125
172 9767.10088096427
173 9751.33163858161
174 9735.72860656217
175 9720.29010891477
176 9705.01464745349
177 9689.9008708932
178 9674.94753127912
179 9660.15327178274
180 9645.51661241336
181 9631.03606965187
182 9616.71012505053
183 9602.53718411284
184 9588.51553069462
185 9574.6432775622
186 9560.91831668778
187 9547.33826396322
188 9533.90041315785
189 9520.6016923117
190 9507.43861627546
191 9494.40722851037
192 9481.50290677527
193 9468.72020035169
194 9456.05301016785
195 9443.49445485866
196 9431.0366605895
197 9418.67053176126
198 9406.38553982086
199 9394.16959457401
200 9382.00908632391
201 9369.88916393364
202 9357.79307513672
203 9345.69934244092
204 9333.58394235063
205 9321.42525010227
206 9309.20706810365
207 9296.91714986742
208 9284.54990528334
209 9272.11416588643
210 9259.63394450646
211 9247.11941863805
212 9234.57643931634
213 9222.02145592749
214 9209.47146594363
215 9196.95009787818
216 9184.49758693321
217 9172.1264918041
218 9159.83130108111
219 9147.60688672448
220 9135.44713295796
221 9123.33616855307
222 9111.22405336609
223 9099.02753439456
224 9086.70364237981
225 9074.27865529155
226 9061.81077670445
227 9049.3438239995
228 9036.88938348244
229 9024.43636780584
230 9011.9732387808
231 8999.50652929989
232 8987.05698058154
233 8974.63303384717
234 8962.23646675322
235 8949.86072369156
236 8937.49203672448
237 8925.11386759006
238 8912.71387014576
239 8900.28808958019
240 8887.84094830371
241 8875.38345787309
242 8862.93231205103
243 8850.51014895842
244 8838.14801937231
245 8825.8858245891
246 8813.76919687208
247 8801.84349376074
248 8790.14547905504
249 8778.69827027224
250 8767.50794480359
251 8756.55007582836
252 8745.79387016348
253 8735.211920092
254 8724.78071164344
255 8714.47984857237
256 8704.29138390728
257 8694.19953235696
258 8684.19059154957
259 8674.25313965965
260 8664.37851197968
261 8654.56106253653
262 8644.79820565265
263 8635.09009332909
264 8625.43889314625
265 8615.84778125318
266 8606.31989179376
267 8596.85741282198
268 8587.4610261229
269 8578.13008606114
270 8568.86292927635
271 8559.65726682357
272 8550.51056526601
273 8541.42005424187
274 8532.38310670101
275 8523.39800933004
276 8514.46432094374
277 8505.58328133684
278 8496.75844301318
279 8487.99699540181
280 8479.30876453934
281 8470.70492178165
282 8462.19711412601
283 8453.79611488622
284 8445.50546096963
285 8437.32251990343
286 8429.24370375955
287 8421.26499776771
288 8413.38184838601
289 8405.58897974941
290 8397.88046629561
291 8390.25009618404
292 8382.69185416814
293 8375.20033993573
294 8367.77097870371
295 8360.39999818749
296 8353.08431393819
297 8345.8205426985
298 8338.60503277127
299 8331.43428515976
300 8324.30507760734
301 8317.21461375081
302 8310.16068709771
303 8303.14190217653
304 8296.15796216747
305 8289.2100499681
306 8282.30123861873
307 8275.43679524947
308 8268.62418127888
309 8261.87251916284
310 8255.19142160449
311 8248.58878986875
312 8242.06329619649
313 8235.60684066214
314 8229.2115779811
315 8222.87120310514
316 8216.58152765026
317 8210.34050358675
318 8204.14775858544
319 8198.00382770174
320 8191.90933996183
321 8185.86438013324
322 8179.86816812877
323 8173.91908134319
324 8168.0149343659
325 8162.15338014726
326 8156.33229298417
327 8150.55003290667
328 8144.80554953864
329 8139.09833880468
330 8133.42830085341
331 8127.79556315148
332 8122.20031750257
333 8116.64271910779
334 8111.12284083643
335 8105.64066861268
336 8100.19612084691
337 8094.78907076455
338 8089.41935873863
339 8084.08679067166
340 8078.79112791469
341 8073.53207593277
342 8068.30927880122
343 8063.12232441221
344 8057.97076372492
345 8052.85414148319
346 8047.77203572977
347 8042.72410319958
348 8037.71012735596
349 8032.73006563553
350 8027.78407014951
351 8022.87244779086
352 8017.9957492367
353 8013.15475956276
354 8008.3504011453
355 8003.5835778286
356 7998.85494608558
357 7994.16464306635
358 7989.51225286498
359 7984.89688194051
360 7980.31734259266
361 7975.77241974926
362 7971.2611472068
363 7966.78300883453
364 7962.33797552273
365 7957.92644176995
366 7953.54902108228
367 7949.20624875944
368 7944.89833801425
369 7940.62505132159
370 7936.38571956353
371 7932.17937039763
372 7928.00491869131
373 7923.86135708235
374 7919.74789375847
375 7915.66401263783
376 7911.60944634733
377 7907.58409922677
378 7903.58794574339
379 7899.62093583542
380 7895.68293274578
381 7891.77369529059
382 7887.89290111849
383 7884.04019513715
384 7880.21523862474
385 7876.41773799539
386 7872.6474629547
387 7868.90423377575
388 7865.18789005471
389 7861.49825694343
390 7857.83512639295
391 7854.1982458745
392 7850.58732064189
393 7847.00203131334
394 7843.44205487829
395 7839.90708204833
396 7836.39682661537
397 7832.91102571776
398 7829.44943327722
399 7826.01181002915
400 7822.597908448
401 7819.20746794754
402 7815.84021918937
403 7812.49588533797
404 7809.17418538668
405 7805.87483547617
406 7802.59754827127
407 7799.34203154281
408 7796.1079857694
409 7792.89510165865
410 7789.70305858912
411 7786.53152436912
412 7783.38015631083
413 7780.24860322299
414 7777.13650781291
415 7774.04350889271
416 7770.96924214072
417 7767.91333758733
418 7764.87541949801
419 7761.85510705319
420 7758.8520129123
421 7755.86574118002
422 7752.89588564627
423 7749.94202881787
424 7747.00374048756
425 7744.08057627922
426 7741.17207615161
427 7738.27776302837
428 7735.3971410196
429 7732.52969343374
430 7729.67488170801
431 7726.832143926
432 7724.00089368338
433 7721.18052862868
434 7718.37044854759
435 7715.57003965052
436 7712.77867005807
437 7709.99568664579
438 7707.22041066389
439 7704.45213249282
440 7701.69010559662
441 7698.933540261
442 7696.18162247808
443 7693.43353733995
444 7690.68843693507
445 7687.9454316585
446 7685.20358054778
447 7682.4618652312
448 7679.71902206322
449 7676.97349989498
450 7674.22347346824
451 7671.46675201519
452 7668.70064716343
453 7665.92187109655
454 7663.12806297798
455 7660.31868120893
456 7657.49427029785
457 7654.65641244398
458 7651.8075557387
459 7648.95037270033
460 7646.08787751245
461 7643.22365008533
462 7640.36137263636
463 7637.50450187107
464 7634.65613381999
465 7631.818861258
466 7628.99448557934
467 7626.18368130268
468 7623.38558993752
469 7620.59748727755
470 7617.81610257068
471 7615.0382236516
472 7612.26088025409
473 7609.48162483733
474 7606.6987050603
475 7603.91112420531
476 7601.1185856638
477 7598.32134165462
478 7595.51998464551
479 7592.71516819601
480 7589.90714519925
481 7587.09534974995
482 7584.27890002906
483 7581.45703937137
484 7578.62932440187
485 7575.79555063935
486 7572.95573836851
487 7570.11013713507
488 7567.25919581993
489 7564.40356754304
490 7561.54416466722
491 7558.68225388614
492 7555.81955715653
493 7552.95823093497
494 7550.10091907109
495 7547.25075883095
496 7544.41117200183
497 7541.58544498626
498 7538.7756767564
499 7535.98298337203
};
\addlegendentry{\STSVGP - \ZZ Trained}
\addplot [thick, color1, dotted, line width = 1.3pt]
table {%
0 52060.261416598
1 49629.7210756338
2 47121.81507657
3 44600.6309462422
4 42125.8230538813
5 39799.5893818331
6 37642.1744460468
7 35649.3359500134
8 33829.7075976869
9 32228.6519072052
10 30840.6133403446
11 29615.591305289
12 28710.8578873756
13 28484.8423863034
14 27170.0692586612
15 27056.4837673631
16 25979.779070783
17 24981.1244036121
18 24211.1295299113
19 23561.6916224995
20 23242.1682441116
21 23612.7270172048
22 22504.9831425209
23 22388.0024614083
24 21772.4197415882
25 20579.2376005778
26 20041.020677072
27 19916.5475859554
28 19471.3833708485
29 18804.0350283352
30 18369.1866328851
31 18095.1711002551
32 17774.1542901279
33 17403.6825853193
34 17069.9224696712
35 16800.7342161483
36 16558.4173309985
37 16294.9914468719
38 16030.9779761697
39 15810.4205710874
40 15624.3409559696
41 15444.7178328483
42 15263.2208493072
43 15090.9624559923
44 14928.0290411036
45 14773.2563839169
46 14625.830367663
47 14491.4271247433
48 14361.2518120872
49 14234.8127727249
50 14117.9016385338
51 14011.9273070474
52 13905.9602971257
53 13801.0292125064
54 13702.3798286915
55 13606.382878106
56 13514.0733608484
57 13426.6704387703
58 13343.4258482571
59 13260.407070372
60 13177.5967560576
61 13099.5710628825
62 13024.523162856
63 12950.9877797418
64 12879.3825183826
65 12809.5878263593
66 12740.9837066985
67 12675.6578502413
68 12612.1019305198
69 12548.4547001537
70 12485.8540997572
71 12424.6588690036
72 12365.1509570791
73 12307.0028201127
74 12250.1568064727
75 12194.4284220155
76 12139.6857518119
77 12085.7909428705
78 12033.4508240675
79 11981.5927345985
80 11930.7382232829
81 11880.3667096857
82 11831.2025833471
83 11782.7560854482
84 11735.5065844101
85 11688.5730905658
86 11642.9908994814
87 11598.0256971234
88 11554.1345285737
89 11510.7061346966
90 11468.3098789035
91 11426.8984617786
92 11386.1599333554
93 11346.1204256059
94 11306.730881521
95 11268.027863882
96 11229.9908553675
97 11192.6397654437
98 11156.0360457686
99 11119.9248672316
100 11084.5394727541
101 11049.654188045
102 11015.4649582417
103 10981.6656192013
104 10948.6418568877
105 10916.2647953823
106 10884.7576845743
107 10853.778616669
108 10823.6028550582
109 10794.0068192948
110 10765.054978834
111 10736.4170495076
112 10708.2275556439
113 10680.415158905
114 10653.1215404211
115 10626.2988659261
116 10599.9692390755
117 10573.9237476023
118 10548.2880958526
119 10522.9975675291
120 10498.1046342231
121 10473.6221449047
122 10449.5052604639
123 10425.7708112262
124 10402.3495546073
125 10379.2690844193
126 10356.5720997992
127 10334.2174994404
128 10312.1639107085
129 10290.5232116571
130 10269.3036544598
131 10248.4328570866
132 10227.6239247872
133 10207.0145025536
134 10186.858012924
135 10167.2291474768
136 10147.7085662901
137 10128.3117595641
138 10109.1746199364
139 10090.4882215491
140 10072.0826876625
141 10053.7363900257
142 10035.5878634928
143 10017.8927015136
144 10000.3489901638
145 9982.9312157301
146 9965.67958555309
147 9948.82690351736
148 9932.05037232987
149 9915.51218262144
150 9899.04361676051
151 9882.90310467303
152 9866.87944574202
153 9851.20194008206
154 9835.45586348101
155 9820.19140652804
156 9804.72016702807
157 9789.6687489014
158 9774.71859478459
159 9759.90796714859
160 9745.31498029649
161 9730.88979716829
162 9716.54869776768
163 9702.46076888534
164 9688.41970564782
165 9674.43487669478
166 9660.66078349391
167 9646.96969793642
168 9633.42990939727
169 9620.07216092774
170 9606.72483663817
171 9593.516593378
172 9580.44924107518
173 9567.52295956888
174 9554.74802161364
175 9542.00945023352
176 9529.39947091946
177 9516.84920584715
178 9504.39930408791
179 9492.05799661307
180 9479.70179672728
181 9467.58451418331
182 9455.50954271031
183 9443.63644623289
184 9431.73513498858
185 9420.29674395687
186 9408.34344779409
187 9396.89978418951
188 9385.71035679461
189 9374.13011713569
190 9362.96795067068
191 9351.37100053596
192 9339.75444957033
193 9329.71687275815
194 9317.94487377819
195 9306.74722138288
196 9295.95185425685
197 9285.43903841238
198 9274.72946341808
199 9264.55392840701
200 9254.51761061322
201 9244.20623834257
202 9233.9611883845
203 9224.85293626433
204 9213.28167001547
205 9206.86110789125
206 9195.54153067592
207 9184.79092441472
208 9179.255046461
209 9169.82044558319
210 9160.9265908499
211 9150.16961258251
212 9139.24085562941
213 9127.07462821175
214 9119.39673359331
215 9110.92272052836
216 9102.14433048394
217 9090.25047618575
218 9081.02226996583
219 9068.16457138759
220 9061.02233734485
221 9050.10868914683
222 9040.7147823458
223 9032.49672596869
224 9021.51194506124
225 9011.81115005272
226 9002.94355291148
227 8993.02074049441
228 8983.93362776748
229 8974.60920631442
230 8964.80569410992
231 8955.15236572943
232 8945.5946873122
233 8936.24270435278
234 8927.15289051051
235 8916.96997927859
236 8907.42568727453
237 8897.67736364423
238 8887.96298400479
239 8878.56558820816
240 8868.38648891187
241 8858.38539283008
242 8848.6091314288
243 8839.06103427458
244 8829.25240187219
245 8819.29307577889
246 8809.44532947626
247 8800.04837084409
248 8790.38589601154
249 8780.74603974654
250 8771.44556611935
251 8761.69693276883
252 8750.85544847776
253 8741.8954766726
254 8731.0312115869
255 8721.4985033562
256 8711.46772039073
257 8701.30348926702
258 8693.80727880338
259 8681.29298785093
260 8675.42873847741
261 8666.30076922877
262 8657.83891822487
263 8650.35539625346
264 8644.1907305956
265 8630.2560094866
266 8625.94851967416
267 8610.68881104141
268 8607.91349504768
269 8603.74412694273
270 8589.38406400174
271 8586.44902690444
272 8570.12308844745
273 8568.76470149223
274 8556.64630754406
275 8543.14993021673
276 8543.80834079081
277 8533.12582243833
278 8527.27233131884
279 8532.58553519947
280 8518.57957518602
281 8504.71786618721
282 8496.33293179296
283 8488.6109463287
284 8476.54472763295
285 8467.45025706928
286 8456.56733935589
287 8450.21608222922
288 8434.97183073732
289 8426.43083392692
290 8420.03047234822
291 8409.1321471262
292 8399.54556792073
293 8390.97806052533
294 8382.10209984771
295 8372.54173709478
296 8364.56713191074
297 8357.31277060364
298 8364.2619667057
299 8343.83060249766
300 8341.93256767187
301 8338.75866775775
302 8323.73966768668
303 8317.35082119186
304 8302.91777006257
305 8299.00873428903
306 8298.60888847355
307 8281.33602696543
308 8278.72729076058
309 8286.90668369974
310 8269.98696719533
311 8258.74539818037
312 8259.62940498139
313 8261.20245718445
314 8240.77853583292
315 8234.83078941134
316 8221.46927816088
317 8212.60921888348
318 8205.00948747155
319 8202.0758045339
320 8191.56360167244
321 8180.24252490103
322 8172.67180297447
323 8167.36624236599
324 8160.9799403389
325 8150.43198735641
326 8142.34306004853
327 8134.90273936316
328 8127.41440767296
329 8118.21253695447
330 8109.86931509318
331 8103.90607347637
332 8096.17738295541
333 8089.9173796355
334 8092.37684967759
335 8079.62517073402
336 8076.7818533624
337 8070.70963746534
338 8081.65506161184
339 8077.96898946
340 8072.49947762219
341 8066.42359855489
342 8056.07001451878
343 8051.87244338883
344 8041.09054335385
345 8034.38382177103
346 8030.21439685908
347 8022.4720434651
348 8014.92642090781
349 8009.48016069612
350 8001.76638372864
351 7998.21911464323
352 7992.36833161318
353 7994.92312059508
354 8002.72668688543
355 8025.54121717508
356 8000.24514861234
357 8043.17414089695
358 8012.13079768076
359 8021.95213896022
360 7994.61176881234
361 7999.99262419346
362 7992.08700715436
363 7985.41703486378
364 7974.31909965178
365 7962.94964982755
366 7948.10646222359
367 7940.5092651975
368 7928.93988094433
369 7920.78175843367
370 7917.49320535065
371 7908.1266869474
372 7897.60502177335
373 7889.94638282838
374 7882.57727089472
375 7878.51857145342
376 7875.3499314606
377 7868.76843266859
378 7859.35638809632
379 7857.22681819103
380 7850.57850689048
381 7843.63728150066
382 7850.81533615309
383 7833.14403140877
384 7840.09856805255
385 7823.79714959352
386 7818.66533893096
387 7814.22670439848
388 7812.49959885188
389 7804.80519607988
390 7803.8280563308
391 7801.36067104931
392 7838.21856524322
393 7833.34939830848
394 7857.39991140878
395 7889.8074601005
396 7879.4161054138
397 7865.07419279427
398 7854.52073385473
399 7844.31394384479
400 7850.13494410816
401 7840.33434639275
402 7840.12800359027
403 7815.69988104691
404 7830.85547517993
405 7866.27443248573
406 7811.40508138327
407 7884.14148511031
408 7896.19854535622
409 7962.45134007813
410 7900.6658475976
411 7850.92887536739
412 7879.86958544508
413 7845.1894291263
414 7868.27419290547
415 7841.3455827281
416 7841.45346795157
417 7818.96295756895
418 7803.44968004988
419 7817.49003022018
420 7786.22538996151
421 7829.55885593243
422 7808.40289457636
423 7813.3742916502
424 7804.5162075259
425 7790.17209205367
426 7778.10865280794
427 7762.25780239401
428 7749.26549884566
429 7744.69638501244
430 7732.77806557809
431 7718.60515293003
432 7711.27745277812
433 7700.76392860609
434 7693.95895391764
435 7686.57692053446
436 7681.71606054402
437 7674.1099295648
438 7668.75393152998
439 7665.10198239795
440 7658.76494208562
441 7652.11518114523
442 7650.06572680009
443 7651.34772988979
444 7645.19270599913
445 7641.7011727822
446 7634.46607127776
447 7627.63893256668
448 7618.87313052401
449 7619.57704296293
450 7613.89379129957
451 7607.6485227651
452 7602.70161495768
453 7600.64434666297
454 7601.15442108285
455 7599.45658200818
456 7593.71866561202
457 7591.75245514941
458 7590.60762894467
459 7589.9397741466
460 7627.92281566526
461 7610.29251424347
462 7609.56177373113
463 7655.10210969046
464 7615.14209544306
465 7610.31180995747
466 7638.32144276345
467 7620.77300318514
468 7663.01180345572
469 7705.99248666311
470 7715.12399329735
471 7723.22793133189
472 7751.57193944844
473 7726.03897120829
474 7709.0618708737
475 7684.00058989037
476 7681.86525885511
477 7666.84221015525
478 7673.2122176029
479 7657.72048330746
480 7654.13809823598
481 7654.24288897876
482 7637.94556877929
483 7624.60503154841
484 7607.68859724527
485 7597.88236875562
486 7587.69097623894
487 7570.71474047775
488 7563.35579004058
489 7551.44544777904
490 7544.80007007105
491 7535.7077823353
492 7532.82743670629
493 7522.96544043353
494 7520.46757530328
495 7528.94056564491
496 7529.05201688076
497 7518.86285371736
498 7531.97994892181
499 7523.9787284014
};
\addlegendentry{\SVGP - \ZZ Trained}
\end{axis}

\end{tikzpicture}
 		
		\vspace{-0.4cm}
		\caption{Negative \ELBO across training iterations}
		
		\label{fig:nyc_small_run_times} 
	\end{subfigure}
	\caption{(a)~Log wall-clock time, including any startup costs, across 7 synthetic spatio-temporal datasets with an increasing number of time steps (average across 5 runs). (b)~Negative \ELBO during training for the small-scale \nyccrime dataset.} %
\end{figure}

\section{Further Improving the Temporal and Spatial Scaling} \label{sec:scaling}

We now propose two approaches to further improve the computational properties of \STVGP and \STSVGP. First, we show how parallel filtering and smoothing can be used for non-conjugate GP inference, which results in a theoretical span complexity of $\bigO(\log N_t d^3)$. %
We then present a spatial mean-field approximation, which can be used independently, or in combination with sparsity.

\subsection{Parallel Bayesian Filtering and Smoothing} \label{sec:parallel-filtering}
The associative scan algorithm \cite{blelloch1989scans} uses a divide-and-conquer approach combined with parallelisation to convert $N$ sequential \emph{associative} operations into $\log N$ sequential steps (for an operator $*$, associativity implies $(a * b) * c = a * (b * c)$). This algorithm has been made applicable to conjugate Markov \gps by deriving a new form of the Kalman filtering and smoothing operations that are associative \cite{sarkka_temporal2021, corenflos2021gaussian}. We give the form of these associative operators in \cref{sec_app:parallel} and show, for the first time, how these methods can be adapted to the non-conjugate setting. This follows directly from the use of the CVI approach to natural gradient VI, which requires only conjugate computations, \ie, the linear filter and smoother. Consider the \STSVGP ELBO: 
\begin{equation*}
	\LL_\STSVGP = \underbrace{\E_{q(\vu)} \big[ \E_{p(\vf \mid \vu)} \big[ \log p(\MY \mid \vf) \big] \big]}_{\begin{subarray}{l}\text{factorises across time and space,}\\
			\text{compute in parallel}\end{subarray}}
	- \underbrace{\E_{q(\vu)} \big[ \log \N (\apxy \mid \vu, \apxv) \big]}_{\begin{subarray}{l}\text{factorises across time,}\\
			\text{compute in parallel}\end{subarray}} + \underbrace{\log  \E_{p(\vu)} \big[\N(\apxy \mid \vu, \apxv) \big]}_{\text{compute with parallel filter}} . %
\end{equation*}
The first two terms can be computed in parallel, since they decompose across time given the marginals $q(\vu_\timeindex)$. The final term can be computed via the parallel filter, and the required marginals $q(\vu_\timeindex)$ via the parallel smoother, which makes \STSVGP a highly parallelisable algorithm. \cref{alg:parallel-filter} gives the filtering and smoothing algorithm and \cref{alg:sparse-filter} shows how this method can be used in place of the sequential filter when performing inference. One drawback of the parallel filter is that when the state dimension is large, many of the available computational resources may be consumed by the arithmetic operations involved in a single filtering step, and the logarithmic scaling may be lost. Fortunately, the spatial mean-field approximation presented in \cref{sec:mean-field} helps to alleviate this issue. In \cref{sec_app:parallel} we provide more details on the method as well as a detailed examination of its practical properties.

\subsection{Spatial Mean-Field Approximation} \label{sec:mean-field}

We reconsider the state space model for the spatio-temporal GP derived in \cref{sec:state-space-spatio-temporal}, which has process noise $\vq_\timeindex \sim \N(\bm{0},\SpaceCov \otimes\MQ_{\timeindex}^{(t)})$. This Kronecker structure implies $N_\Space$ independent processes are linearly mixed using spatial covariance $\SpaceCov$. The linearity of this operation makes it possible to reformulate the model to include the mixing as part of the measurement, rather than the prior:%
\begin{equation} \label{eq:mf-model}
\state(t_{\timeindex+1}) = \MA_{\timeindex} \, \state(t_{\timeindex}) + \vq_\timeindex, \qquad
\MY_\timeindex \mid \state(t_\timeindex) \sim p(\MY_\timeindex \mid [\MC^{(\Space)}_{\SPACE\SPACE} \otimes \MH^{(t)}] \, \state(t_\timeindex)),
\end{equation}
where $\vq_\timeindex \sim \N(\bm{0}, \MI_{N_{\Space}} \otimes \MQ_{\timeindex}^{(t)})$ and $\MC^{(\Space)}_{\SPACE\SPACE}$ is the Cholesky factor of $\SpaceCov$ (see \cref{sec_app:sde_reformulation} for the derivation).
This has the benefit that now both $\MA_\timeindex$ and $\MQ_\timeindex=\MI_{N_{\Space}} \otimes \MQ_{\timeindex}^{(t)}$, are block diagonal, such that under the prior the latent processes are fully independent. This enables a mean-field assumption between the $N_{\Space}$ latent posterior processes:
$q(\state(t)) \approx \prod_{\spaceindex=1}^{N_\Space} q(\state_\spaceindex(t))$,
where $\state_\spaceindex(t)$ is the $d_t$-dimensional state corresponding to spatial point $\SPACE_\spaceindex$. %
This approximation enforces block-diagonal structure in the state covariance, such that matrix operations acting on the full state can be avoided. Dependence between the latent processes is still captured via the measurement model (likelihood), and our experiments show that this approach still approximates the true posterior well (see \cref{sec:experiments} and \cref{app:approx-comparison}) whilst providing significant computational gains when $N_{\Space}$ is large. %

\begin{figure*}[t]
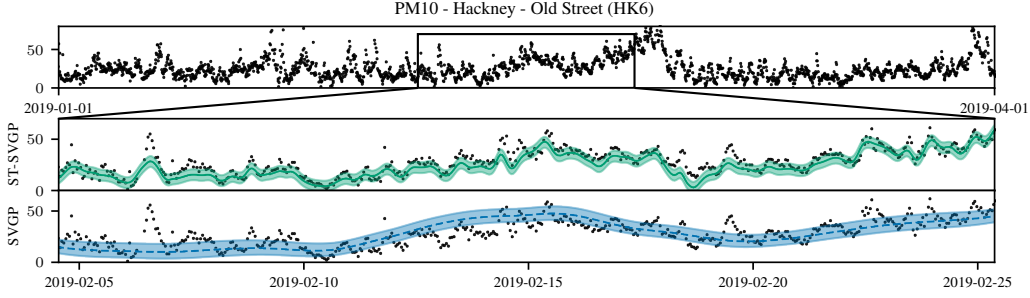

	 \resizebox{\textwidth}{!}{%
	 \inputpgf{pgf_fig/air_quality}{air_quality_timeseries.pgf}}
	 \caption{Observations of \pmten\ at site HK6, showing rich short-scale structure (\textbf{top}). Mean and 95\% confidence of \STSVGP trained with 30 spatial inducing points (totalling $64{,}770$ inducing points) and \SVGP with 2000 inducing points (minibatch size 100). Both models have similar training times. \STSVGP captures the complex structure of the time series whereas \SVGP smooths the data. }
	\label{fig:air_quality_timeseries}
\end{figure*}

\section{Experiments}
\label{sec:experiments}

We examine the scalability and performance of \STVGP and its variants. Throughout, we use a Mat\'ern-$\nicefrac{3}{2}$ kernel and optimise the hyperparameters by maximising the ELBO using Adam \citep{kingma_adam:2014}. We use learning rates of $\rho=0.01$, $\beta=1$ in the conjugate case, and $\rho=0.01$, $\beta=0.1$ in the non-conjugate case.
We use 5-fold cross-validation (\ie, 80--20 train-test split), train for 500 iterations (except for \airquality where we train for 300) and report RMSE, negative log predictive density (NLPD, see \cref{sec:appendix_metrics}) and average per-iteration training times on CPU and GPU. When using a GPU, the parallel filter and smoother are used. %

\paragraph{Synthetic Experiment} We construct 7 toy datasets with rich temporal structure but smooth spatial structure (see \cref{sec:app_pseudo_periodic_function}) and varying size: $N_t=10, 82, 155, 227, 300, 500, 1000$, and construct a $500 {\times} 100$ grid that serves as a test set for all cases. As the dataset size increases we expect the predictive performance of all methods to improve at the expense of run time. We compare against \SKI and \SVGP (see \cref{sec:related_work}). \cref{fig:shutters_run_times} shows that whilst \SVGP becomes infeasible for more than 300 time steps, the \STSVGP variants scale linearly with time and are faster than \SKI (except for the very small datasets, in which the model compile time in JAX dominates). In \cref{sec:app_further_experiment_details} we show the test performance of all models.%

\begin{table}
  \centering
  \begin{minipage}[t]{.38\textwidth}
	\scriptsize
	\setlength{\tabcolsep}{2pt}
	\renewcommand{\arraystretch}{.75}
	\caption{\nyccrime (small) results. \STSVGP = \SVGP when $\MZ$ is fixed.}
	\label{table:nyc_small_results}
	\vskip 0.05in
	\begin{sc}
	\begin{tabularx}{\textwidth}{clcc}
		\toprule
		Train $\MZ$  & Model & RMSE &  NLPD \\
		\midrule
		$\times$ & \STSVGP &  3.02 $\pm$ 0.13 & 1.72 $\pm$ 0.04  \\
		$\times$ &\SVGP &  3.02 $\pm$ 0.13 & 1.72 $\pm$ 0.04  \\
		\checkmark & \STSVGP & 2.79 $\pm$ 0.15 & 1.64 $\pm$ 0.04  \\
		\checkmark &\SVGP &  2.94 $\pm$ 0.12 & 1.65 $\pm$ 0.05  \\
		\bottomrule
	\end{tabularx}
	\end{sc}
  \end{minipage}
  \hfill
\begin{minipage}[t]{.58\textwidth}
	\caption{Test performance on matching average run time in seconds for the \nyccrime (large) count dataset.}
	\label{table:nyc_crime}
	\vskip 0.05in
	\setlength{\tabcolsep}{3pt}
	\renewcommand{\arraystretch}{.75}
	\scriptsize
	\begin{sc}
		\begin{tabularx}{\textwidth}{lcccc}
			\toprule
			Model & Time (CPU) & Time (GPU) &  RMSE & NLPD \\
			\midrule
			\STSVGP &  20.86 $\pm$ 0.46   & 0.61 $\pm$ 0.00  & 2.77 $\pm$ 0.06        & 1.66 $\pm$ 0.02  \\
			\MFSTSVGP & 20.69 $\pm$ 0.86   &  0.32 $\pm$ 0.00  & 2.75 $\pm$ 0.04        & 1.63 $\pm$ 0.02  \\
			\SVGP-1500 &  12.67 $\pm$ 0.11 &  0.13 $\pm$ 0.00  &  3.20 $\pm$ 0.14  & 1.82 $\pm$ 0.05  \\
			\SVGP-3000 &  80.80 $\pm$ 3.42 &  0.45 $\pm$ 0.01  &  3.02 $\pm$ 0.18  & 1.76 $\pm$ 0.05  \\
			\bottomrule
		\end{tabularx}
	\end{sc}
\end{minipage}  	
  \vspace*{-1em}
\end{table}

\paragraph{\nyccrime\ -- Count Dataset}
We model crime numbers across New York City, USA (\NYC), using daily complaint data from \citep{crime_data}. Crime data has seasonal trends and is spatially dependent. Accurate modelling can lead to more efficient allocation of police resources \cite{flaxman2019scalable, algietti_multi_task_lgcp:2019}. %
We first consider a small subset of the data to highlight when our methods exactly recover \SVGP. We bin the data from 1\textsuperscript{st} to 10\textsuperscript{th} of January 2014 ($N_t=10$) into a spatial grid of size $30 {\times} 30$ and drop cells that do not intersect with land ($N_\Space=447$, $N=4470$). We run \STSVGP and \SVGP with inducing points initialised to the same locations. %
We plot the training \ELBO in \cref{fig:nyc_small_run_times} and performance in \cref{table:nyc_small_results}. For fixed inducing points, both models have the same training curve and provide the same predictions (up to numerical differences). Optimising the inducing points improves both methods. A comparable inference method for non-conjugate likelihoods has not yet been developed for \SKI.

We next consider observations from January to July 2014, with daily binning ($N_t=182$) and the same spatial grid ($N_\Space=447$, $N=81{,}354$). We run \STSVGP and its mean-field variant (\MFSTSVGP) with 30 spatial inducing points (equivalent to \SVGP with $M=30 {\times} 182=5460$). \cref{table:nyc_crime} shows that our methods outperform \SVGP (with $M=1500$, $M=3000$ and batch sizes $1500$, $3000$ respectively) because they can afford more inducing points for the same computational budget.

\paragraph{Regression: \airquality}
Finally, we model \pmten\ ($\mu\textrm{g}/\textrm{m}^3$) air quality across London, UK. The measurements exhibit periodic fluctuations and highly irregular behaviour due to events like weather and traffic jams. Using hourly data from the London air quality network \cite{data_laqn} between January 2019 and April 2019 ($N_t=2159$), we drop sensors that are not within the London boundaries or have more than 40\% of missing data ($N_\Space=72$, $N=155{,}448$).
We run \STSVGP and \MFSTSVGP with $30$ inducing points in space (equivalent to \SVGP with $M=30 {\times} 2159 = 64{,}770$ inducing points). To ensure the run times are comparable on both CPU and GPU, we run \SVGP with 2000, 2500, 5000, and 8000 inducing points with mini-batch sizes of 600, 800, 2000, and 3000 respectively. We run \SKI with $N_t$ temporal inducing points and 6 inducing points in each spatial dimension. %

\begin{table}[h]
	\vspace{-1em}
\caption{\airquality regression. \STSVGP fits the fast-varying structure well, whereas \SVGP smooths the data. Average run time and standard deviation in seconds shown for a single training step. \STSVGP and \MFSTSVGP use $30$ spatial inducing points, equivalent to \SVGP with $30 {\times} 2159 = 64{,}770$ inducing points. Number of inducing points chosen to make run time comparable.}
\label{table:air_quality}
\scriptsize
\setlength{\tabcolsep}{7pt}
\renewcommand{\arraystretch}{.75}
\begin{center}
\begin{tabularx}{\textwidth}{lccccl}
\toprule
	\sc Model (batch size) & \sc Time (CPU) & \sc Time (GPU) & \sc RMSE & \sc NLPD & \\
\midrule
\STSVGP & 16.79 $\pm$ 0.63 & \phantom{1}4.47 $\pm$ 0.01  & \bf \phantom{1}9.96 $\pm$ 0.56 & \bf\phantom{1}8.29 $\pm$ 0.80 & ~ $\leftarrow$ full spatio-temporal model  \\
\MFSTSVGP & 13.74 $\pm$ 0.49 & \phantom{1}0.85 $\pm$ 0.01 & 10.42 $\pm$ 0.63 & \phantom{1}8.52 $\pm$ 0.91 & ~ $\leftarrow$ with mean-field assumption \\
\SVGP-2000 (600) & 20.21 $\pm$ 0.28 &  \phantom{1}0.17 $\pm$ 0.00 & 15.46 $\pm$ 0.44 & 12.93 $\pm$ 0.95 &  \multirow{ 5}{*}{$\begin{rcases} ~\\[3em] \end{rcases} \text{ baselines }$}  \\
\SVGP-2500 (800) &  40.90 $\pm$ 1.11  &  \phantom{1}0.25 $\pm$ 0.00 &  15.53 $\pm$ 1.09  &  13.48 $\pm$ 1.85  & \\
\SVGP-5000 (2000) &  \phantom{1}---  &  \phantom{1}1.19 $\pm$ 0.00 &  14.20 $\pm$ 0.44  &  12.73 $\pm$ 0.73  & \\
\SVGP-8000 (3000) &  \phantom{1}---  & \phantom{1}4.09 $\pm$ 0.01 & 13.83 $\pm$ 0.47 & 12.40 $\pm$ 0.75  & \\
SKI &  23.36 $\pm$ 1.01  & \phantom{1}3.61 $\pm$ 0.01 & 12.01 $\pm$ 0.55 & 10.32 $\pm$ 0.79  & \\
\bottomrule
\end{tabularx}
\end{center}
\end{table}
Our methods significantly outperform the \SVGP baselines because they can afford considerably more inducing points. As shown in \cref{fig:air_quality_timeseries} the \SVGP drastically smooths the data, whereas \STSVGP fits the short-term structure well. The mean-field approach is significantly more efficient, especially when using the parallel algorithm, but we do observe a slight reduction in prediction quality. %

\section{Conclusions and Discussion}
\label{sec:conclusions}

We have shown that variational inference and spatio-temporal filtering can be combined in a principled manner, introducing an approach to GP inference for spatio-temporal data that maintains the benefits of variational \gps, whilst exhibiting favourable scaling properties. Our experiments confirm that \STSVGP outperforms baseline methods because the effective number of inducing points grows with the temporal horizon, without introducing a significant computational burden. Crucially, this leads to improved predictive performance, because fast varying temporal information is captured by the model. We demonstrated how to apply parallel filtering and smoothing in the non-conjugate \gp case, but our empirical analysis identified a maximum state dimension of around $d\approx50$, after which the sub-linear temporal scaling is lost. However, our proposed spatial mean-field approach alleviates this issue somewhat, making the combined algorithm extremely efficient even when both the number of time steps and spatial points are large.
The resemblance of our framework to the \VGP approach suggests many potential extensions, such as multi-task models \citep{alvarez_multi_task:2011} and deep GPs \citep{daminaou_dgp:2013}.
We provide JAX code for all methods at \url{https://github.com/AaltoML/spatio-temporal-GPs}.

\subsection{Limitations and Societal Impact}
We believe our work takes an important step towards allowing sophisticated GP models to be run on both resource constrained CPU machines and powerful GPUs, greatly expanding the usability of such models whilst also reducing unnecessary consumption of resources. %
However, when using predictions from such methods, the limitations of the model assumptions and potential inaccuracies when using approximate inference should be kept in mind, especially in cases such as crime rate monitoring where actions based on biased or incorrect predictions can have harmful societal consequences. 

\begin{ack}
This work was supported by the Academy of Finland Flagship programme: Finnish Center for Artificial Intelligence (FCAI). Parts of this project were done while OH was visiting Finland as part of the FCAI--Turing initiative, supported by HIIT/FCAI.
OH acknowledges funding from The Alan Turing Institute PhD fellowship programme.
AS and WJW acknowledge funding from the Academy of Finland (grant numbers 324345 and 339730).
NAL contributed under the NVIDIA AI Technology Center (NVAITC) Finland program.
TD acknowledges support from EPSRC (EP/T004134/1), UKRI Turing AI Fellowship (EP/V02678X/1), and the Lloyd's Register Foundation programme on Data Centric Engineering. 
We acknowledge the computational resources provided by the Aalto Science-IT project and CSC -- IT Center for Science, Finland.
\end{ack}

\phantomsection%
\addcontentsline{toc}{section}{References}
\begingroup
\small
\bibliographystyle{abbrvnat}
\bibliography{references}

\begin{thebibliography}{53}
\providecommand{\natexlab}[1]{#1}
\providecommand{\url}[1]{\texttt{#1}}
\expandafter\ifx\csname urlstyle\endcsname\relax
  \providecommand{\doi}[1]{doi: #1}\else
  \providecommand{\doi}{doi: \begingroup \urlstyle{rm}\Url}\fi

\bibitem[cri()]{crime_data}
2014--2015 crimes reported in all 5 boroughs of {N}ew {Y}ork {C}ity.
\newblock \url{https://www.kaggle.com/adamschroeder/crimes-new-york-city}.

\bibitem[Adam et~al.(2020)Adam, Eleftheriadis, Artemev, Durrande, and
  Hensman]{vincent_doubly_sparse:2020}
V.~Adam, S.~Eleftheriadis, A.~Artemev, N.~Durrande, and J.~Hensman.
\newblock Doubly sparse variational {G}aussian processes.
\newblock In \emph{Proceedings of the 23rd International Conference on
  Artificial Intelligence and Statistics (AISTATS)}, volume 108 of
  \emph{Proceedings of Machine Learning Research}, pages 2874--2884. {PMLR},
  2020.

\bibitem[Aglietti et~al.(2019)Aglietti, Damoulas, and
  Bonilla]{algietti_multi_task_lgcp:2019}
V.~Aglietti, T.~Damoulas, and E.~V. Bonilla.
\newblock Efficient inference in multi-task {C}ox process models.
\newblock In \emph{Proceedings of the 22nd International Conference on
  Artificial Intelligence and Statistics (AISTATS)}, volume~89 of
  \emph{Proceedings of Machine Learning Research}, pages 537--546. {PMLR},
  2019.

\bibitem[\'{A}lvarez and Lawrence(2011)]{alvarez_multi_task:2011}
M.~A. \'{A}lvarez and N.~D. Lawrence.
\newblock Computationally efficient convolved multiple output {G}aussian
  processes.
\newblock \emph{Journal of Machine Learning Research}, 12:\penalty0 1459--1500,
  July 2011.

\bibitem[Amari(1998)]{amari_natgrad:1996}
S.-i. Amari.
\newblock Natural gradient works efficiently in learning.
\newblock \emph{Neural Computation}, 10\penalty0 (2):\penalty0 251--276, 1998.

\bibitem[Banerjee(2004)]{banerjee_spatial:2004}
S.~Banerjee.
\newblock \emph{Hierarchical Modeling and Analysis for Spatial Data}.
\newblock Chapman \& Hall/CRC, Boca Raton, FL, 2004.

\bibitem[Bauer et~al.(2016)Bauer, {van~der~Wilk}, and
  Rasmussen]{bauer_fitc_vfe:2016}
M.~Bauer, M.~{van~der~Wilk}, and C.~E. Rasmussen.
\newblock Understanding probabilistic sparse {G}aussian process approximations.
\newblock In \emph{Advances in Neural Information Processing Systems 29
  (NIPS)}, pages 1533--1541. Curran Associates, Inc., 2016.

\bibitem[Blei et~al.(2017)Blei, Kucukelbir, and McAuliffe]{blei_vi_review:2017}
D.~M. Blei, A.~Kucukelbir, and J.~D. McAuliffe.
\newblock Variational inference: A review for statisticians.
\newblock \emph{Journal of the American Statistical Association}, 112\penalty0
  (518):\penalty0 859--877, 2017.

\bibitem[Blelloch(1989)]{blelloch1989scans}
G.~E. Blelloch.
\newblock Scans as primitive parallel operations.
\newblock \emph{IEEE Transactions on Computers}, 38\penalty0 (11):\penalty0
  1526--1538, 1989.

\bibitem[Bruinsma et~al.(2020)Bruinsma, Perim, Tebbutt, Hosking, Solin, and
  Turner]{bruinsma2020scalable}
W.~Bruinsma, E.~Perim, W.~Tebbutt, S.~Hosking, A.~Solin, and R.~Turner.
\newblock Scalable exact inference in multi-output {G}aussian processes.
\newblock In \emph{Proceedings of the 37th International Conference on Machine
  Learning (ICML)}, volume 119 of \emph{Proceedings of Machine Learning
  Research}, pages 1190--1201. PMLR, 2020.

\bibitem[Chang et~al.(2020)Chang, Wilkinson, Khan, and
  Solin]{chang_fast_vi:2020}
P.~E. Chang, W.~J. Wilkinson, M.~E. Khan, and A.~Solin.
\newblock Fast variational learning in state-space {G}aussian process models.
\newblock In \emph{30th {IEEE} International Workshop on Machine Learning for
  Signal Processing (MLSP)}, pages 1--6, Espoo, Finland, 2020. {IEEE}.

\bibitem[Condit(1998)]{Condit:2005}
R.~Condit.
\newblock \emph{Tropical Forest Census Plots}.
\newblock Springer-Verlag and R. G. Landes Company, Berlin, Germany, and
  Georgetown, Texas, 1998.

\bibitem[Corenflos et~al.(2021)Corenflos, Zhao, and
  S{\"a}rkk{\"a}]{corenflos2021gaussian}
A.~Corenflos, Z.~Zhao, and S.~S{\"a}rkk{\"a}.
\newblock Gaussian process regression in logarithmic time.
\newblock \emph{arXiv preprint arXiv:2102.09964}, 2021.

\bibitem[Cressie(2011)]{cressie_spati_temporal_stats:2011}
N.~Cressie.
\newblock \emph{Statistics for Spatio-Temporal Data}.
\newblock Wiley, Hoboken, N.J, 2011.

\bibitem[Da~Prato and Zabczyk(1992)]{DaPrato+Zabczyk:1992}
G.~Da~Prato and J.~Zabczyk.
\newblock \emph{Stochastic Equations in Infinite Dimensions}, volume~45 of
  \emph{Encyclopedia of Mathematics and its Applications}.
\newblock Cambridge University Press, Cambridge, 1992.

\bibitem[Damianou and Lawrence(2013)]{daminaou_dgp:2013}
A.~Damianou and N.~Lawrence.
\newblock Deep {G}aussian processes.
\newblock In \emph{Proceedings of the Sixteenth International Conference on
  Artificial Intelligence and Statistics (AISTATS)}, volume~31 of
  \emph{Proceedings of Machine Learning Research}, pages 207--215. PMLR, 2013.

\bibitem[Datta et~al.(2016)Datta, Banerjee, Finley, and
  Gelfand]{datta_nearest_neighbor_gps:2016}
A.~Datta, S.~Banerjee, A.~O. Finley, and A.~E. Gelfand.
\newblock Hierarchical nearest-neighbor {G}aussian process models for large
  geostatistical datasets.
\newblock \emph{Journal of the American Statistical Association}, 111\penalty0
  (514):\penalty0 800--812, 2016.

\bibitem[Durrande et~al.(2019)Durrande, Adam, Bordeaux, Eleftheriadis, and
  Hensman]{durrande_banded_gp:2019}
N.~Durrande, V.~Adam, L.~Bordeaux, S.~Eleftheriadis, and J.~Hensman.
\newblock Banded matrix operators for {G}aussian {M}arkov models in the
  automatic differentiation era.
\newblock In \emph{Proceedings of the 22nd International Conference on
  Artificial Intelligence and Statistics (AISTATS)}, volume~89 of
  \emph{Proceedings of Machine Learning Research}, pages 2780--2789. PMLR,
  2019.

\bibitem[Flaxman et~al.(2015)Flaxman, Wilson, Neill, Nickisch, and
  Smola]{flaxman_non_gaussian_kronecker_gp:2015}
S.~Flaxman, A.~Wilson, D.~Neill, H.~Nickisch, and A.~Smola.
\newblock Fast kronecker inference in {G}aussian processes with non-{G}aussian
  likelihoods.
\newblock In \emph{Proceedings of the 32nd International Conference on Machine
  Learning (ICML)}, volume~37 of \emph{Proceedings of Machine Learning
  Research}, pages 607--616. PMLR, 2015.

\bibitem[Flaxman et~al.(2019)Flaxman, Chirico, Pereira, and
  Loeffler]{flaxman2019scalable}
S.~Flaxman, M.~Chirico, P.~Pereira, and C.~Loeffler.
\newblock Scalable high-resolution forecasting of sparse spatiotemporal events
  with kernel methods: a winning solution to the {NIJ} ``{R}eal-time crime
  forecasting challenge''.
\newblock \emph{arXiv preprnt arXiv:1801.02858}, 2019.

\bibitem[Gardner et~al.(2018)Gardner, Pleiss, Weinberger, Bindel, and
  Wilson]{gardner_gpytorch:2018}
J.~Gardner, G.~Pleiss, K.~Q. Weinberger, D.~Bindel, and A.~G. Wilson.
\newblock {GPyTorch}: {B}lackbox matrix-matrix {G}aussian process inference
  with {GPU} acceleration.
\newblock In \emph{Advances in Neural Information Processing Systems 31
  (NeurIPS)}, pages 7576--7586. Curran Associates, Inc., 2018.

\bibitem[Gelfand and Schliep(2016)]{gelfand_stats+gps:2016}
A.~E. Gelfand and E.~M. Schliep.
\newblock Spatial statistics and {G}aussian processes: {A} beautiful marriage.
\newblock \emph{Spatial Statistics}, 18:\penalty0 86 -- 104, 2016.
\newblock Spatial Statistics Avignon: Emerging Patterns.

\bibitem[Hamelijnck et~al.(2021)Hamelijnck, Wilkinson, Loppi, Solin, and
  Damoulas]{hosted_data}
O.~Hamelijnck, W.~J. Wilkinson, N.~A. Loppi, A.~Solin, and T.~Damoulas.
\newblock Data for {NeurIPS 2021 submission: Spatio-Temporal Variational
  Gaussian Processes}, 2021.
\newblock URL \url{https://doi.org/10.5281/zenodo.4531304}.

\bibitem[Hartikainen et~al.(2011)Hartikainen, Riihim{\"a}ki, and
  S{\"a}rkk{\"a}]{hartikainen_sparse_st_gps:2011}
J.~Hartikainen, J.~Riihim{\"a}ki, and S.~S{\"a}rkk{\"a}.
\newblock Sparse spatio-temporal {G}aussian processes with general likelihoods.
\newblock In \emph{Artificial Neural Networks and Machine Learning -- ICANN
  2011}, pages 193--200, Berlin, Heidelberg, 2011. Springer.

\bibitem[Hensman et~al.(2012)Hensman, Rattray, and
  Lawrence]{hensman_fast_vi:2012}
J.~Hensman, M.~Rattray, and N.~D. Lawrence.
\newblock Fast variational inference in the conjugate exponential family.
\newblock In \emph{Advances in Neural Information Processing Systems 25
  (NIPS)}, pages 2888--2896. Curran Associates Inc., 2012.

\bibitem[Hensman et~al.(2013)Hensman, Fusi, and
  Lawrence]{hensman_gp_for_big_data:2013}
J.~Hensman, N.~Fusi, and N.~D. Lawrence.
\newblock Gaussian processes for big data.
\newblock In \emph{Proceedings of the Twenty-Ninth Conference on Uncertainty in
  Artificial Intelligence (UAI)}, pages 282--290. AUAI Press, 2013.

\bibitem[Hensman et~al.(2015)Hensman, Matthews, and
  Ghahramani]{hensman_classification:2015}
J.~Hensman, A.~Matthews, and Z.~Ghahramani.
\newblock Scalable variational {G}aussian process classification.
\newblock In \emph{Proceedings of the Eighteenth International Conference on
  Artificial Intelligence and Statistics (AISTATS)}, volume~38 of
  \emph{Proceedings of Machine Learning Research}, pages 351--360. PMLR, 2015.

\bibitem[Hensman et~al.(2017)Hensman, Durrande, and Solin]{hensman_vff:2017}
J.~Hensman, N.~Durrande, and A.~Solin.
\newblock Variational {F}ourier features for {G}aussian processes.
\newblock \emph{Journal of Machine Learning Research}, 18\penalty0 (1), 2017.

\bibitem[{Imperial College London}(2020)]{data_laqn}
{Imperial College London}.
\newblock {Londonair} -- {L}ondon air quality network ({LAQN}).
\newblock \url{https://www.londonair.org.uk}, 2020.

\bibitem[Keogh and Pazzani(1999)]{pseudo_periodic_data}
E.~J. Keogh and M.~J. Pazzani.
\newblock An indexing scheme for fast similarity search in large time series
  databases.
\newblock In \emph{Proceedings of the 11th International Conference on
  Scientific and Statistical Database Management}, SSDBM '99, page~56, USA,
  1999. IEEE Computer Society.

\bibitem[Khan and Lin(2017)]{khan_cvi:2017}
M.~E. Khan and W.~Lin.
\newblock Conjugate-computation variational inference: {C}onverting variational
  inference in non-conjugate models to inferences in conjugate models.
\newblock In \emph{Proceedings of the 20th International Conference on
  Artificial Intelligence and Statistics (AISTATS)}, volume~54 of
  \emph{Proceedings of Machine Learning Research}, pages 878--887. PMLR, 2017.

\bibitem[Kingma and Ba(2014)]{kingma_adam:2014}
D.~P. Kingma and J.~Ba.
\newblock Adam: {A} method for stochastic optimization.
\newblock \emph{arXv preprint arXiv:1412.6980}, 2014.

\bibitem[Krauth et~al.(2017)Krauth, Bonilla, Cutajar, and
  Filippone]{krauth_autogp:2017}
K.~Krauth, E.~V. Bonilla, K.~Cutajar, and M.~Filippone.
\newblock {AutoGP}: {E}xploring the capabilities and limitations of {G}aussian
  process models.
\newblock In \emph{Proceedings of the Thirty-Third Conference on Uncertainty in
  Artificial Intelligence (UAI)}. AUAI Press, 2017.

\bibitem[Matthews et~al.(2017)Matthews, {van der Wilk}, Nickson, Fujii,
  {Boukouvalas}, {Le{\'o}n-Villagr{\'a}}, Ghahramani, and Hensman]{gpflow:2017}
A.~G. d.~G. Matthews, M.~{van der Wilk}, T.~Nickson, K.~Fujii,
  A.~{Boukouvalas}, P.~{Le{\'o}n-Villagr{\'a}}, Z.~Ghahramani, and J.~Hensman.
\newblock {{GP}flow: A {G}aussian process library using {T}ensor{F}low}.
\newblock \emph{Journal of Machine Learning Research}, 18\penalty0
  (40):\penalty0 1--6, 2017.

\bibitem[Nickisch et~al.(2018)Nickisch, Solin, and
  Grigorievskiy]{nickisch2018state}
H.~Nickisch, A.~Solin, and A.~Grigorievskiy.
\newblock State space {G}aussian processes with non-{G}aussian likelihood.
\newblock In \emph{Proceedings of the 35th International Conference on Machine
  Learning (ICML)}, volume~80 of \emph{Proceedings of Machine Learning
  Research}, pages 3789--3798. PMLR, 2018.

\bibitem[O’Hagan(1998)]{ohagan1998markov}
A.~O’Hagan.
\newblock A {M}arkov property for covariance structures.
\newblock \emph{Statistics Research Report}, 98:\penalty0 13, 1998.

\bibitem[Qui\~{n}onero Candela and Rasmussen(2005)]{unifying_sparse_gps:2005}
J.~Qui\~{n}onero Candela and C.~E. Rasmussen.
\newblock A unifying view of sparse approximate {G}aussian process regression.
\newblock \emph{Journal of Machine Learning Research}, 6:\penalty0 1939–1959,
  Dec. 2005.

\bibitem[Rasmussen and Williams(2006)]{rasmussen2003gaussian}
C.~Rasmussen and C.~Williams.
\newblock \emph{Gaussian Processes for Machine Learning}.
\newblock MIT Press, Cambridge, MA, USA, 2006.

\bibitem[Saatçi(2011)]{saatci_thesis:2011}
Y.~Saatçi.
\newblock \emph{Scalable Inference for Structured {G}aussian Process Models}.
\newblock PhD thesis, University of Cambridge, 2011.

\bibitem[Salimbeni et~al.(2018)Salimbeni, Eleftheriadis, and
  Hensman]{salimbeni_natural_gradients:2018}
H.~Salimbeni, S.~Eleftheriadis, and J.~Hensman.
\newblock Natural gradients in practice: {N}on-conjugate variational inference
  in {G}aussian process models.
\newblock In \emph{Proceedings of the International Conference on Artificial
  Intelligence and Statistics ({AISTATS})}, volume~84 of \emph{Proceedings of
  Machine Learning Research}, pages 689--697. {PMLR}, 2018.

\bibitem[S\"arkk\"a and Garc\'ia-Fern\'andez(2021)]{sarkka_temporal2021}
S.~S\"arkk\"a and A.~F. Garc\'ia-Fern\'andez.
\newblock Temporal parallelization of bayesian smoothers.
\newblock \emph{IEEE Transactions on Automatic Control}, 66\penalty0
  (1):\penalty0 299--306, 2021.

\bibitem[S{\"a}rkk{\"a} and Solin(2019)]{sarkka2019applied}
S.~S{\"a}rkk{\"a} and A.~Solin.
\newblock \emph{Applied Stochastic Differential Equations}.
\newblock Cambridge University Press, 2019.

\bibitem[S\"arkk\"a et~al.(2013)S\"arkk\"a, Solin, and
  Hartikainen]{Sarkka+Solin+Hartikainen:2013}
S.~S\"arkk\"a, A.~Solin, and J.~Hartikainen.
\newblock Spatiotemporal learning via infinite-dimensional {B}ayesian filtering
  and smoothing.
\newblock \emph{IEEE Signal Processing Magazine}, 30\penalty0 (4):\penalty0
  51--61, 2013.

\bibitem[Solin(2016)]{Solin:2016}
A.~Solin.
\newblock \emph{Stochastic Differential Equation Methods for Spatio-Temporal
  {G}aussian Process Regression}.
\newblock Doctoral dissertation, Aalto University, Helsinki, Finland, 2016.

\bibitem[Tebbutt et~al.(2021)Tebbutt, Solin, and Turner]{tebbutt2021combining}
W.~Tebbutt, A.~Solin, and R.~E. Turner.
\newblock Combining pseudo-point and state space approximations for
  sum-separable {G}aussian processes.
\newblock In \emph{Proceedings of the 37th Conference on Uncertainty in
  Artificial Intelligence (UAI)}, Proceedings of Machine Learning Research.
  PMLR, 2021.

\bibitem[Titsias(2009)]{titsias_sparse_gp:2009}
M.~Titsias.
\newblock Variational learning of inducing variables in sparse {G}aussian
  processes.
\newblock In \emph{Proceedings of the Twelth International Conference on
  Artificial Intelligence and Statistics (AISTATS)}, volume~5 of
  \emph{Proceedings of Machine Learning Research}, pages 567--574. PMLR, 2009.

\bibitem[Wainwright and Jordan(2008)]{wainwright_exp_family:2008}
M.~J. Wainwright and M.~I. Jordan.
\newblock \emph{Graphical Models, Exponential Families, and Variational
  Inference}.
\newblock Now Publishers Inc., Hanover, MA, USA, 2008.

\bibitem[Wang et~al.(2019)Wang, Pleiss, Gardner, Tyree, Weinberger, and
  Wilson]{wang_gp_million_points:2019}
K.~Wang, G.~Pleiss, J.~Gardner, S.~Tyree, K.~Q. Weinberger, and A.~G. Wilson.
\newblock Exact {G}aussian processes on a million data points.
\newblock In \emph{Advances in Neural Information Processing Systems 32
  (NeurIPS)}, pages 14648--14659. Curran Associates, Inc., 2019.

\bibitem[Wilkinson et~al.(2019)Wilkinson, Andersen, Reiss, Stowell, and
  Solin]{wilkinson_end_to_end:2019}
W.~Wilkinson, M.~Andersen, J.~D. Reiss, D.~Stowell, and A.~Solin.
\newblock End-to-end probabilistic inference for nonstationary audio analysis.
\newblock In \emph{Proceedings of the 36th International Conference on Machine
  Learning (ICML)}, volume~97 of \emph{Proceedings of Machine Learning
  Research}, pages 6776--6785. PMLR, 2019.

\bibitem[Wilkinson et~al.(2021)Wilkinson, Solin, and Adam]{wilkinson2021sparse}
W.~Wilkinson, A.~Solin, and V.~Adam.
\newblock Sparse algorithms for {M}arkovian {G}aussian processes.
\newblock In \emph{Proceedings of the 24th International Conference on
  Artificial Intelligence and Statistics (AISTATS)}, Proceedings of Machine
  Learning Research, pages 1747--1755. PMLR, 2021.

\bibitem[Wilkinson et~al.(2020)Wilkinson, Chang, Andersen, and
  Solin]{wilkinson2020state}
W.~J. Wilkinson, P.~E. Chang, M.~R. Andersen, and A.~Solin.
\newblock State space expectation propagation: {E}fficient inference schemes
  for temporal {G}aussian processes.
\newblock In \emph{Proceedings of the 37th International Conference on Machine
  Learning (ICML)}, volume 119 of \emph{Proceedings of Machine Learning
  Research}. PMLR, 2020.

\bibitem[Wilson and Nickisch(2015)]{wilison_ski_gp:2015}
A.~Wilson and H.~Nickisch.
\newblock Kernel interpolation for scalable structured {G}aussian processes
  ({KISS-GP}).
\newblock In \emph{Proceedings of the 32nd International Conference on Machine
  Learning (ICML)}, volume~37 of \emph{Proceedings of Machine Learning
  Research}, pages 1775--1784. PMLR, 2015.

\bibitem[Wilson et~al.(2014)Wilson, Gilboa, Nehorai, and
  Cunningham]{wilson_gpatt:2014}
A.~G. Wilson, E.~Gilboa, A.~Nehorai, and J.~P. Cunningham.
\newblock Fast kernel learning for multidimensional pattern extrapolation.
\newblock In \emph{Advances in Neural Information Processing Systems 27
  (NIPS)}, pages 3626--3634. Curran Associates, Inc., 2014.

\end{thebibliography}
\endgroup

\clearpage

\ifapxinsamepage {
	\clearpage

\nipstitle{{\Large Supplementary Material for} \\ 
    Spatio-Temporal Variational Gaussian Processes}

\appendix

\section{Nomenclature}\label{sec:app:nomencalture}
\vspace{-0.5cm}
\begin{table}[h]
\caption{Overview of notation. Vectors: bold lowercase. Matrices: bold uppercase.}
\label{table:app_notation}
\footnotesize
\begin{center}
\begin{tabularx}{\textwidth}{cc p{.6\textwidth}}
\toprule
	Symbol & Size & Description \\
\midrule
	$\gridX^{(st)}$ & $N_t \times N_\Space \times D$ & Data input locations \\
	$\MY^{(st)}$ & $N_t \times N_\Space$ & Observations \\
	$\gridX$ & $N \times D$ & Data input locations in matrix form. $N=N_tN_\Space$\\
	$\MY$ & $N \times 1$  &  Observations in vector form\\
	$\bm{t}$ & $N_t \times 1$ & Vector of time steps \\
	$\SPACE$ & $N_\Space \times D_\Space$ & Spatial input locations. $D_\Space=D-1$ \\
	$\MZ_{\bm{t}}$ & $M_t \times 1$ & Temporal inducing inputs (we set $\MZ_{\bm{t}} = \bm{t}$) \\
	$\MZ_\SPACE$ & $M_\Space \times D_\Space$ & Spatial inducing inputs \\
	$\tKxx$ & $N_t \times N_t$ & Temporal kernel evaluated at temporal data points \\
	$\tKzz$ & $M_t \times M_t$ & Temporal kernel evaluated at temporal inducing points (we set $M_t=N_t$) \\
	$\sKxx$ & $N_\Space \times N_\Space$ & Spatial kernel evaluated at spatial data points \\
	$\sKzz$ & $M_\Space \times M_\Space$ & Spatial kernel evaluated at spatial inducing points \\
	$\siKzzn$ & $M_\Space \times M_\Space$ & $(\sKzz)^{-1}$ \\
	$\sKxzn$ & $N_\Space \times M_\Space$ & Prior covariance between the spatial data points and spatial inducing points \\
	$\vm_\timeindex$ & $N_\Space \times 1$ & Posterior mean at time $t_\timeindex$ \\
	$\postcov_\timeindex$ & $N_\Space \times N_\Space$ & Posterior covariance at time $t_\timeindex$ \\
	$\state(t)$ & -- & Gaussian distributed state, $\state(t_\timeindex)\sim \N(\vs(t_\timeindex) \mid \bar{\vm}_\timeindex, \bar{\MP}_\timeindex)$\\
	$\bar{\vm}_\timeindex$ & $d \times 1$ & State mean at time $t_\timeindex$ \\
	$\bar{\MP}_\timeindex$ & $d \times d$ & State covariance at time $t_\timeindex$ \\
	$\MA_\timeindex$ & $d \times d$ & Discrete state transition model \\
	$\MQ_\timeindex$ & $d \times d$ & Discrete state process noise covariance \\
	$\MH$ & $1 \times d$ & State measurement model, $f(t)=\MH \vs(t)$ \\
	$\MA_\timeindex^{(t)}$ & $d_t \times d_t$ & Discrete state transition model for a single latent component (determined by $\kappa_t$) \\
	$\MQ_\timeindex^{(t)}$ & $d_t \times d_t$ & Discrete state process noise covariance for a single latent component (determined by $\kappa_t$) \\
	$\MH^{(t)}$ & $1 \times d_t$ & State measurement model for a single latent component (determined by $\kappa_t$) \\
	$\natp=\{\natp^{(1)},\natp^{(2)}\}$ & -- & Natural parameters of the approximate posterior \\
	$\apxnatp=\{\apxnatp^{(1)},\apxnatp^{(2)}\}$ & -- & Natural parameters of the approximate likelihood \\
	$\priornatp$ & -- & Natural parameters of the prior \\ 
	$\apxy$ & $ N \times 1$ & Approximate likelihood mean  \\
	$\apxv$ & $ N \times N$ & Approximate likelihood covariance \\
\bottomrule
\end{tabularx}
\end{center}
\vspace*{-1em}
\end{table}

\section{Kronecker Identities}\label{app:kronecker}

Assuming that the matrices conform and are invertible when required, the following properties hold:
\begin{alignat}{2}
	(\MA \kron \MB)^{-1} &= \MA^{-1} \kron \MB^{-1} \label{eqn:kron_inv}\\
	\MA \kron (\MB + \MC) &= \MA \kron \MB + \MA \kron \MC \label{eqn:kron_addition_lhs}\\
	 (\MB + \MC) \kron \MA  &= \MB \kron \MA + \MC \kron \MA \label{eqn:kron_addition_rhs}\\
	 (\MA \kron \MB) (\MC \kron \MD) &= (\MA \MC) \kron (\MB \MD) \label{eqn:kron_mixed_property}
\end{alignat}
For further properties of kronecker products in the context of \gp regression see Ch.~5 of \citet{saatci_thesis:2011}.

\section{Filtering and Smoothing Algorithms}\label{app:filter-smoother}

\begin{minipage}[t]{0.51\textwidth}
\begin{algorithm}[H]%
	\footnotesize
	\caption{Sequential filtering and smoothing}
	\label{alg:sequential-filter}
	\begin{algorithmic}
		\STATE {\bfseries Input:} $\!$Likelihood:$\{\apxy, \apxv\}$, Initial state:$\{\bar{\vm}_0, \bar{\MP}_0\}$,
		\STATE Model matrices:$\{\MA, \MQ, \MH\} $
		\STATE \COMMENT{Run filter:}
		\FOR{$\timeindex=1:N_t$}
		\STATE \COMMENT{Filter predict:}
		\STATE $\bar{\vm}_\timeindex = \MA_\timeindex \bar{\vm}_{\timeindex-1}, \quad \bar{\MP}_\timeindex = \MA_\timeindex \bar{\MP}_{\timeindex-1} \MA_\timeindex^\T + \MQ_\timeindex$
		\STATE $\MLambda_\timeindex = \MH \bar{\MP}_\timeindex \MH^\T + \apxv_\timeindex, \quad \MW_\timeindex = \bar{\MP}_\timeindex \MH^\T \MLambda_\timeindex^{-1}$
		\STATE \COMMENT{Compute log likelihood:}
		\STATE $\ell_\timeindex = \log \N(\apxy_\timeindex \mid \MH \bar{\vm}_\timeindex, \MLambda_\timeindex)$ 
		\STATE \COMMENT{Filter update:}
		\STATE $\bar{\vm}_\timeindex = \bar{\vm}_\timeindex + \MW_\timeindex (\apxy - \MH \bar{\vm}_\timeindex)$
		\STATE $\bar{\MP}_\timeindex = \bar{\MP}_\timeindex - \MW_\timeindex \MLambda_\timeindex \MW_\timeindex^\T$
		\ENDFOR
		\STATE \COMMENT{Run smoother:}
		\FOR{$\timeindex=N_t-1:1$}
		\STATE $\MG_\timeindex = \bar{\MP}_\timeindex \MA_{\timeindex+1} \bar{\MP}_\timeindex^{-1}$
		\STATE $\MR_{\timeindex+1} = \MA_{\timeindex+1} \bar{\MP}_\timeindex \MA_{\timeindex+1}^\T + \MQ_{\timeindex+1}$
		\STATE $\bar{\vm}_\timeindex = \bar{\vm}_\timeindex + \MG_\timeindex(\bar{\vm}_{\timeindex+1} - \MA_{\timeindex+1} \bar{\vm}_\timeindex)$
		\STATE $\bar{\MP}_\timeindex = \bar{\MP}_\timeindex + \MG_\timeindex (\bar{\MP}_{\timeindex+1} - \MR_{\timeindex+1})\MG_\timeindex^\T$
		\ENDFOR
		\STATE \COMMENT{Return marginals and log likelihood:}
		\RETURN $q(\vf_\timeindex)=\N(\vf_\timeindex \mid \MH\bar{\vm}_\timeindex, \MH \bar{\MP}_\timeindex \MH^\T) ,\, \forall \timeindex$ \\
		\hspace{3em}
		$\ell=\sum_\timeindex \ell_\timeindex$%
	\end{algorithmic}
\end{algorithm}
\end{minipage}
\hfill
\begin{minipage}[t]{0.51\textwidth}
	\begin{algorithm}[H]%
		\footnotesize
		\caption{Parallel filtering and smoothing}
		\label{alg:parallel-filter}
		\begin{algorithmic}
			\STATE {\bfseries Input:} $\!$Likelihood:$\{\apxy, \apxv\}$, Initial state:$\{\bar{\vm}_0, \bar{\MP}_0\}$,
			\STATE Model matrices:$\{\MA, \MQ, \MH\} $
			\STATE \COMMENT{Initialise filtering elements:} 
			\STATE $\MA_0 = \MI$, \qquad $\MQ_0 = \bar{\MP}_0$
			\FOR{$\timeindex=1:N_t$ \textbf{in parallel}}
			\STATE $\MT_\timeindex = \MH \MQ_{\timeindex-1} \MH^\T + \apxv_\timeindex$, 
			\STATE $\MK_\timeindex = \MQ_{\timeindex-1} \MH^\T \MT_\timeindex^{-1}$,
			\STATE $\MB_\timeindex = \MA_{\timeindex-1} - \MK_\timeindex \MH \MA_{\timeindex-1}$, 
			\STATE $\filtmean_\timeindex = \MK_\timeindex \apxy_\timeindex$, \qquad $\filtcov_\timeindex = \MQ_{\timeindex-1} - \MK_\timeindex \MH \MQ_{\timeindex-1}$, 
			\STATE $\vphi_\timeindex = \MA_{\timeindex-1}^\T \MH^\T \MT_\timeindex^{-1} \apxy_\timeindex$,
			\STATE $\MJ_\timeindex = \MA_{\timeindex-1}^\T \MH^\T \MT_\timeindex^{-1} \MH \MA_{\timeindex-1}$
			\ENDFOR
			\STATE $\filtmean_1 = \bar{\vm}_0 + \MK_1 (\apxy_1 - \MH \bar{\vm}_0 )$
			\STATE \COMMENT{Run associative scan:} 
			\STATE $\filtmean, \filtcov=$\textbf{associative$\_$scan}$((\MB, \filtmean, \filtcov, \vphi, \MJ), \stackrel{\textrm{filter}}{*})$ 
			\STATE where operator $\stackrel{\textrm{filter}}{*}$ defined by \cref{eq:assoc_op1} and \cref{eq:assoc_op2}
			\STATE \COMMENT{Compute log likelihood:} 
			\FOR{$\timeindex=1:N_t$ \textbf{in parallel}}
			\STATE $\MLambda_\timeindex = \MH (\MA_\timeindex \filtcov_{\timeindex-1} \MA_\timeindex^\T + \MQ_\timeindex ) \MH^\T $
			\STATE $\ell_\timeindex = \log \N(\apxy_\timeindex \mid \MH \MA_\timeindex \filtmean_{\timeindex-1}, \MLambda_\timeindex)$ 
			\ENDFOR
			\STATE \COMMENT{Initialise smoothing elements:} 
			\STATE $\ME_{N_t}=\bm{0}$, \quad $\smoothmean_{N_t}=\filtmean_{N_t}$, \quad $\smoothcov_{N_t}=\filtcov_{N_t}$
			\FOR{$\timeindex=1:N_t-1$ \textbf{in parallel}}
			\STATE $\ME_\timeindex = \filtcov_\timeindex \MA_\timeindex^\T (\MA_\timeindex \filtcov_\timeindex \MA_\timeindex + \MQ_\timeindex )^{-1}$
			\STATE $\smoothmean_\timeindex =  \filtmean_\timeindex - \ME_\timeindex \MA_\timeindex \filtmean_\timeindex$
			\STATE $\smoothcov_\timeindex = \filtcov_\timeindex - \ME_\timeindex \MA_\timeindex \filtcov_\timeindex$
			\ENDFOR
			\STATE \COMMENT{Run associative scan:} 
			\STATE $\smoothmean, \smoothcov=$\textbf{associative$\_$scan}$((\ME, \smoothmean, \smoothcov), \stackrel{\textrm{smoother}}{*})$ 
			\STATE where operator $\stackrel{\textrm{smoother}}{*}$ defined by \cref{eq:assoc_op_smooth1} and \cref{eq:assoc_op_smooth2} \hspace{-1em}
			\STATE \COMMENT{Return marginals and log likelihood:}
			\RETURN $q(\vf_\timeindex)=\N(\vf_\timeindex \mid \MH\bar{\vm}_\timeindex, \MH \bar{\MP}_\timeindex \MH^\T) ,\, \forall \timeindex$ \\
			\hspace{3em}
			$\ell=\sum_\timeindex \ell_\timeindex$%
		\end{algorithmic}
	\end{algorithm}
\end{minipage} 
\section{Sparse Kronecker Decomposition}
\label{sec_app:sparse_kronecker_decomposition}

The input locations and inducing points are ordered such that they lie on a space time grid, and we define $\MX = \vectext(\gridX)$. See \cref{sec:background} and \cref{app:kronecker} for details. We have assumed a separable kernel between space and time, hence we can decompose $\MK$ as a Kronecker product:
\begin{align}
	\Kxx &= \tKxx \kron \sKxx , \label{eqn:def_kxx} \\
	\Kxz &= \tKxz \kron \sKxz = \tKxx \kron \sKxz , \label{eqn:def_kxz} \\
	\Kzz &= \tKzz \kron \sKzz = \tKxx \kron \sKzz . \label{eqn:def_kzz}
\end{align}
The sparse conditional is,
\begin{equation}
	p(\vf \mid \vu) = \N(\vf \mid \MK_{\MX, \MZ} \MK^{-1}_{\MZ, \MZ} \vu, \MK_{\MX, \MX} - \MK_{\MX, \MZ} \MK^{-1}_{\MZ, \MZ} \MK_{\MZ, \MX}) ,
\end{equation}
which we can decompose using the Kronecker formulation of $\MK$. Starting with the mean term:
\begin{align}
	\mu &= \Kxz \iKzz \vu &  \nonumber \\
	&=  \left[ (\tKxx \kron \sKxz) (\tKxx \kron \sKzz )^{-1} \right] \vu & \text{sub \cref{eqn:def_kxz} and \cref{eqn:def_kzz}} \nonumber \\
	&= \left[ (\tKxx \kron \sKxz) (\tiKxx \kron \siKzz ) \right] \vu & \text{apply \cref{eqn:kron_inv}} \nonumber \\
	&= \left[ (\tKxx \tiKxx) \kron (\sKxz \siKzz) \right] \vu & \text{apply \cref{eqn:kron_mixed_property}} \nonumber \\
	&= \left[ \MI \kron (\sKxz \siKzz) \right] \vu, & 
\end{align}
And now the covariance term:
\begin{align}
	\Sigma &= \Kxx - \Kxz\iKzz \Kzx & \nonumber \\
	&= (\tKxx \kron \sKxx) - (\tKxx \kron \sKxz)(\tiKxx \kron \siKzz)(\tKxx \kron \sKzx) \,\, \text{sub \cref{eqn:def_kxx}--\eqref{eqn:def_kzz}, \cref{eqn:kron_inv}} \nonumber \\
	&= (\tKxx \kron \sKxx) - (\tKxx  \tiKxx  \tKxx) \kron (\sKxz  \siKzz  \sKzx) \qquad\quad\,\,\,\, \text{apply \cref{eqn:kron_mixed_property}} \nonumber \\
	&= (\tKxx \kron \sKxx) - (\tKxx) \kron (\sKxz  \siKzz  \sKzx)  \nonumber \\
	&= \tKxx \kron  (\sKxx - \sKxz  \siKzz  \sKzx). \qquad\qquad\qquad\qquad\qquad\qquad\,\,\, \text{apply \cref{eqn:kron_addition_lhs}}
\end{align}
Substituting this back into the conditional we have:
\begin{align}
	p(\vf \mid \vu) &= \N(\vf \mid \left[ \MI \kron (\sKxx \kron \siKzz) \right]\vu, \tKxx \kron  (\sKxx - \sKxz  \siKzz  \sKzx) ) .
\label{eqn:simplified_conditional}
\end{align}
At this point the covariance matrix is dense and so we cannot decompose any further. 

\section{Parallel Filtering and Smoothing for Spatio-Temporal VGP}
\label{sec_app:parallel}

Here we provide more details of the parallel filtering and smoothing method for \STSVGP as well as performance profiling and discussion of the benefits and drawbacks of the parallel and sequential approaches in practice.

\citet{sarkka_temporal2021} derive a new (equivalent) form of the linear filtering and smoothing operations that are \emph{associative}. This property states that for an operator $*$ we have $(a * b) * c = a * (b * c)$. Associativity allows for application of the associative scan algorithm \cite{blelloch1989scans}, which uses a divide-and-conquer approach to construct a computational `tree', each level of which involves applying the operator $*$ to pairs of states in parallel before propagating the result up the tree. The height of this tree is $\log N_t$, and hence the computational span complexity is $\bigO(\log N_t)$ if there is enough parallel compute capacity to fully parallelise all of the required operations.

The associative filtering operator acts on a sequence of five elements $(\MB_\timeindex, \filtmean_\timeindex, \filtcov_\timeindex, \vphi_\timeindex, \MJ_\timeindex)$, where the elements correspond to the parameters of the following quantities,
\begin{gather}
	\begin{aligned}
	p(\state_\timeindex \mid \state_{\timeindex-1}, \MY_\timeindex) &= \N(\state_\timeindex \mid \MB_\timeindex \state_{\timeindex-1} + \filtmean_\timeindex, \filtcov_\timeindex) \, , \\
	p(\MY_\timeindex \mid \state_{\timeindex-1}) &= \N(\state_{\timeindex-1} \mid \MJ_\timeindex \vphi_\timeindex , \MJ_\timeindex^{-1}) .
	\end{aligned}
\end{gather}
$\vphi_\timeindex$, $\MJ_\timeindex$ are the precision-adjusted mean and precision of $p(\MY_\timeindex \mid \state_{\timeindex-1})$. The elements are first initialised as follows (letting $\MA_0 = \MI$ and $\MQ_0 = \MP_\infty$),
\begin{gather}
	\begin{aligned}
		\MT_\timeindex & = \MH \MQ_{\timeindex-1} \MH^\T + \MV_\timeindex , \\
		\MK_\timeindex & = \MQ_{\timeindex-1} \MH^\T \MT_\timeindex^{-1} , \\
		\MB_\timeindex & = \MA_{\timeindex-1} - \MK_\timeindex \MH \MA_{\timeindex-1} , \\
		\filtmean_\timeindex &= \MK_\timeindex \MY_\timeindex , \\
		\filtcov_\timeindex &= \MQ_{\timeindex-1} - \MK_\timeindex \MH \MQ_{\timeindex-1} , \\
		\vphi_\timeindex &= \MA_{\timeindex-1}^\T \MH^\T \MT_\timeindex^{-1} \MY_\timeindex , \\
		\MJ_\timeindex &= \MA_{\timeindex-1}^\T \MH^\T \MT_\timeindex^{-1} \MH \MA_{\timeindex-1} ,
	\end{aligned}
\end{gather}
and we set $\filtmean_1 = \bar{\vm}_0 + \MK_1 (\MY_1 - \MH \bar{\vm}_0 )$ to account for the initial mean $\bar{\vm}_0$. Here $\MA_\timeindex$, $\MQ_\timeindex$ and $\MH$ are the model matrices defining the \gp prior, as laid out in the main paper. Once initialised, the associative operator $\stackrel{\textrm{filter}}{*}$ is defined as,
\begin{equation} \label{eq:assoc_op1}
	(\MB_{i,j}, \filtmean_{i,j}, \filtcov_{i,j}, \vphi_{i,j}, \MJ_{i,j}) = (\MB_i, \filtmean_i, \filtcov_i, \vphi_i, \MJ_i) \stackrel{\textrm{filter}}{*} (\MB_j, \filtmean_j, \filtcov_j, \vphi_j, \MJ_j) ,
\end{equation}
where,
\begin{gather} \label{eq:assoc_op2}
	\begin{aligned}
		\MW_{i,j} &= (\filtcov^{-1}_i + \MJ_j)^{-1} \filtcov^{-1}_i , \\
		\MB_{i,j} &= \MB_j \MW_{i,j} \MB_i , \\
		\filtmean_{i,j} &= \MB_j \MW_{i,j} (\filtmean_i + \filtcov_i \vphi_j ) + \filtmean_j , \\
		\filtcov_{i,j} &= \MB_j \MW_{i,j} \filtcov_i \MB^\T_j + \filtcov_j, \\
		\vphi_{i,j} &= \MB_i^\T \MW_{i,j}^\T (\vphi_j - \MJ_j \filtmean_i ) + \vphi_i , \\
		\MJ_{i,j} &= \MB_i^\T \MW_{i,j}^\T \MJ_j \MB_i + \MJ_i . \\
	\end{aligned}
\end{gather}
The associative scan algorithm (which is implemented in various machine learning frameworks) is then applied to the operator defined by \cref{eq:assoc_op1} and \cref{eq:assoc_op2} to obtain the filtered elements, of which $\filtmean$ and $\filtcov$ correspond to the filtering means and covariances.

A similar approach leads to a parallel version of the Rauch-Tung-Striebel smoother by defining an associative operator which acts on the elements $(\MG_\timeindex, \smoothmean_\timeindex, \smoothcov_\timeindex)$. The elements are initialised as,
\begin{gather}
	\begin{aligned}
		\MG_\timeindex & = \filtcov_\timeindex \MA_\timeindex^\T (\MA_\timeindex \filtcov_\timeindex \MA_\timeindex + \MQ_\timeindex )^{-1} , \\
		\smoothmean_\timeindex & =  \filtmean_\timeindex - \MG_\timeindex \MA_\timeindex \filtmean_\timeindex , \\
		\smoothcov_\timeindex & = \filtcov_\timeindex - \MG_\timeindex \MA_\timeindex \filtcov_\timeindex,
	\end{aligned}
\end{gather}
for $n<N_t$ and we set $\MG_{N_t}=\bm{0}$, $\smoothmean_{N_t}=\filtmean_{N_t}$, $\smoothcov_{N_t}=\filtcov_{N_t}$. Note that the initial value of $\MG_\timeindex$ corresponds to the standard smoothing gain.
The associative smoothing operator $\stackrel{\textrm{smoother}}{*}$ is defined as,
\begin{equation} \label{eq:assoc_op_smooth1}
	(\MG_{i,j}, \smoothmean_{i,j}, \smoothcov_{i,j}) = (\MG_i, \smoothmean_i, \smoothcov_i) \stackrel{\textrm{smoother}}{*} (\MG_j, \smoothmean_j, \smoothcov_j) ,
\end{equation}
where
\begin{gather} \label{eq:assoc_op_smooth2}
	\begin{aligned}
		\MG_{i,j} & = \MG_i \MG_j , \\
		\smoothmean_{i,j} & = \MG_i \smoothmean_j + \smoothmean_i , \\
		\smoothcov_{i,j} & = \MG_i \smoothcov_j \MG_i^\T + \smoothcov_i .
	\end{aligned}
\end{gather}
After applying the associative scan to these elements using the operator defined by \cref{eq:assoc_op_smooth1} and \cref{eq:assoc_op_smooth2}, $\smoothmean$ and $\smoothcov$ correspond to the smoothed means and covariances. The full filtering and smoothing algorithm is given in \cref{alg:parallel-filter}.

\subsection{Profiling the Parallel Filter and Smoother} \label{sec_app:parallel_profiling}

Here we provide detailed analysis of \STVGP applied to a spatial log-Gaussian Cox process where the bin widths can be altered to modify the number of temporal and spatial points. %

\cref{fig:profiling} (a) plots the ratio of mean iteration wall-times obtained with the parallel and sequential filter approaches on a single NVIDIA Tesla V100 GPU. We can see that the parallel filter outperforms the sequential filter when the number of spatial points is $N_\Space \leq 20$. In the low-dimensional case, where $N_\Space = 10$ and the number of time steps $N_t = 3000$, the parallel filter achieves over 29x speed-up relative to the sequential filter. In contrast, the sequential filter outperforms the parallel filter when $N_\Space \geq 50$. When $N_\Space = 300$ and $N_t = 200$ the parallel filter is approximately $2{\times}$ slower than the sequential filter.

\cref{fig:profiling} (b) plots the ratio of mean iteration wall-times obtained with the sequential filter on a single Intel Xeon CPU E5-2698 v4 CPU and the parallel filter on a single NVIDIA Tesla V100 GPU. The parallel filter GPU runs outperform the CPU runs in all settings, with speed-ups as high as 16x for large $N_\Space$. Surprisingly, higher speed-ups relative to the CPU runs are obtained with $N_\Space = 10$ than with $N_\Space = 20$. This suggest that the parallel filter is particularly efficient at very low $N_\Space$. %

\cref{fig:profiling} (c) plots the ratio of mean iteration wall-times obtained with the sequential filter on a single Intel Xeon CPU E5-2698 v4 CPU and the sequential filter on a single NVIDIA Tesla V100 GPU. The sequential filter GPU runs outperform the CPU runs in all cases apart from those at $N_\Space = 10$, with speed-up factors as has high as 37x. On the whole, these results suggest that the parallel filter is crucial to achieving good performance at low $N_\Space$. However, it is not as effective as the sequential filter at high $N_\Space$ and comes with increased memory footprint. Having both filtering options appears to be the best solution for a general-purpose spatio-temporal GP algorithm. All data associated with \cref{fig:profiling} are tabulated in \cref{table:profiling1,table:profiling2,table:profiling3}, where the dashes (---) denote configurations that run out of GPU memory.

In addition to the wall-time experiments, we also studied the characteristics of the parallel and sequential filtering approaches on GPUs via performance profiling. The primary reason causing the sequential filtering approach to be ill-suited for very low $N_\Space$ when using a GPU is decreased computational intensity of operations. JAX is unable to fuse all operations in the algorithm and therefore, with low-dimensional data, the CUDA kernel execution times can become smaller than the kernel launch overheads. This issue is alleviated with increasing $N_\Space$ as soon as the kernel executions take more time than kernel launches, since then the kernel launches can be completely overlapped with the execution of the previous kernel. With the parallel filtering approach, we are able to tackle the kernel launch overhead issue by introducing the associative scan operation, which JAX is able to combine as a single (but more computationally demanding) kernel. The reason the parallel filter is outperformed by the sequential filter at high $N_\Space$ is more intricate. One of the reasons appears to be that the associative scan operation introduces more reads and writes to global GPU memory, and with high-dimensional data the operation becomes increasingly limited by memory bandwidth.

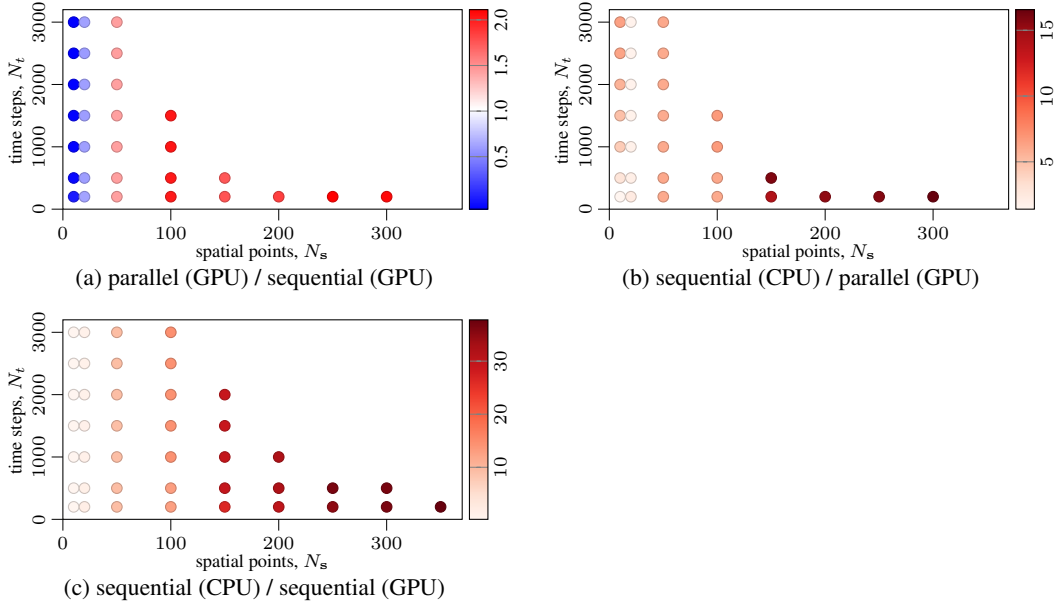
\begin{figure*}[t]
	\scriptsize
	\centering
	\pgfplotsset{yticklabel style={rotate=90}, ylabel style={yshift=-2pt}, title style={yshift=-4pt}, xlabel style={yshift=3pt},scale only axis,axis on top,clip=true,clip marker paths=true}
	\pgfplotsset{legend style={inner xsep=1pt, inner ysep=1pt, row sep=0pt},legend style={at={(0.98,0.95)},anchor=north east},legend style={rounded corners=1pt}, colorbar style={width=7, xshift=-6pt}}
	\setlength{\figurewidth}{.38\textwidth}
	\setlength{\figureheight}{.5\figurewidth}
	\begin{subfigure}{.48\textwidth}
\begin{tikzpicture}

\begin{axis}[
colorbar,
colorbar style={ytick={0.61,1.11,1.61,2.11},yticklabels={0.5,1.0,1.5,2.0},ylabel={}},
colormap={mymap}{[1pt]
  rgb(0pt)=(0,0,1);
  rgb(1pt)=(1,1,1);
  rgb(2pt)=(1,0,0)
},
height=\figureheight,
point meta max=2.22239314336454,
point meta min=0.0344219012255231,
tick align=outside,
tick pos=left,
width=\figurewidth,
x grid style={white!69.0196078431373!black},
xlabel={spatial points, \(\displaystyle N_\Space\)},
xmin=0, xmax=370,
xtick style={color=black},
y grid style={white!69.0196078431373!black},
ylabel={time steps, \(\displaystyle N_t\)},
ymin=0, ymax=3200,
ytick style={color=black}
]
\addplot [only marks, scatter, scatter src=explicit, colormap={mymap}{[1pt]
  rgb(0pt)=(0,0,1);
  rgb(1pt)=(1,1,1);
  rgb(2pt)=(1,0,0)
}]
table [x=x, y=y, meta=colordata]{%
x                      y                      colordata
10 200 0.149764885003304
50 200 1.53732880927631
100 200 2.11194214124245
150 200 1.86382240020898
200 200 1.95170938136085
250 200 2.22239314336454
300 200 2.18874022561368
20 200 0.733685726563444
10 500 0.0676457241133448
50 500 1.54128932640639
100 500 2.11596571418653
150 500 1.84818389915223
20 500 0.712783255871181
10 1000 0.0511480940088568
50 1000 1.54132480312013
100 1000 2.13612973701079
20 1000 0.715263804635496
10 1500 0.0430527058155519
50 1500 1.53882383084844
100 1500 2.12592903468639
20 1500 0.709679706720979
10 2000 0.0380381147388468
50 2000 1.53734985971155
20 2000 0.714987017563605
10 2500 0.0344698659621282
50 2500 1.54410397108976
20 2500 0.701055689034195
10 3000 0.0344219012255231
50 3000 1.54200320866304
20 3000 0.724070904655641
};
\end{axis}

\end{tikzpicture}
 \vspace{-2em} \caption{parallel (GPU) / sequential (GPU)}
	\end{subfigure}\label{fig:pargpu_vs_sqegpu}
	\hfill
	\begin{subfigure}{.48\textwidth}
\begin{tikzpicture}

\begin{axis}[
colorbar,
colorbar style={ylabel={}},
colormap={mymap}{[1pt]
  rgb(0pt)=(1,0.96078431372549,0.941176470588235);
  rgb(1pt)=(0.996078431372549,0.87843137254902,0.823529411764706);
  rgb(2pt)=(0.988235294117647,0.733333333333333,0.631372549019608);
  rgb(3pt)=(0.988235294117647,0.572549019607843,0.447058823529412);
  rgb(4pt)=(0.984313725490196,0.415686274509804,0.290196078431373);
  rgb(5pt)=(0.937254901960784,0.231372549019608,0.172549019607843);
  rgb(6pt)=(0.796078431372549,0.0941176470588235,0.113725490196078);
  rgb(7pt)=(0.647058823529412,0.0588235294117647,0.0823529411764706);
  rgb(8pt)=(0.403921568627451,0,0.0509803921568627)
},
height=\figureheight,
point meta max=16.5654907907927,
point meta min=1.40143815067336,
tick align=outside,
tick pos=left,
width=\figurewidth,
x grid style={white!69.0196078431373!black},
xlabel={spatial points, \(\displaystyle N_\Space\)},
xmin=0, xmax=370,
xtick style={color=black},
y grid style={white!69.0196078431373!black},
ylabel={time steps, \(\displaystyle N_t\)},
ymin=0, ymax=3200,
ytick style={color=black}
]
\addplot [only marks, scatter, scatter src=explicit, colormap={mymap}{[1pt]
  rgb(0pt)=(1,0.96078431372549,0.941176470588235);
  rgb(1pt)=(0.996078431372549,0.87843137254902,0.823529411764706);
  rgb(2pt)=(0.988235294117647,0.733333333333333,0.631372549019608);
  rgb(3pt)=(0.988235294117647,0.572549019607843,0.447058823529412);
  rgb(4pt)=(0.984313725490196,0.415686274509804,0.290196078431373);
  rgb(5pt)=(0.937254901960784,0.231372549019608,0.172549019607843);
  rgb(6pt)=(0.796078431372549,0.0941176470588235,0.113725490196078);
  rgb(7pt)=(0.647058823529412,0.0588235294117647,0.0823529411764706);
  rgb(8pt)=(0.403921568627451,0,0.0509803921568627)
}]
table [x=x, y=y, meta=colordata]{%
x                      y                      colordata
10 200 1.40143815067336
50 200 5.98584666783343
100 200 5.99061120242822
150 200 14.0543730652866
200 200 15.343049554267
250 200 15.6923367778909
300 200 16.5654907907927
20 200 2.31649618035934
10 500 2.8289163654795
50 500 6.06391524108263
100 500 6.08611375432671
150 500 15.8115379659326
20 500 1.94202784327073
10 1000 4.3448211119794
50 1000 5.99248807340588
100 1000 6.67149925293302
20 1000 1.76805695896848
10 1500 4.94013238129984
50 1500 5.91395387783365
100 1500 6.72144765141171
20 1500 1.69725231666337
10 2000 5.55484610435323
50 2000 5.98659885407062
20 2000 1.66540621925705
10 2500 6.27679828820824
50 2500 5.99957028615988
20 2500 1.63341888110838
10 3000 6.38449877796227
50 3000 6.09570503547349
20 3000 1.64130360225254
};
\end{axis}

\end{tikzpicture}
 \vspace{-2em} \caption{sequential (CPU) / parallel (GPU)}
	\end{subfigure}\label{fig:pargpu_vs_sqecpu}
	\\
	\vspace{0.5em}
	\begin{subfigure}{.48\textwidth}
\begin{tikzpicture}

\begin{axis}[
colorbar,
colorbar style={ylabel={}},
colormap={mymap}{[1pt]
  rgb(0pt)=(1,0.96078431372549,0.941176470588235);
  rgb(1pt)=(0.996078431372549,0.87843137254902,0.823529411764706);
  rgb(2pt)=(0.988235294117647,0.733333333333333,0.631372549019608);
  rgb(3pt)=(0.988235294117647,0.572549019607843,0.447058823529412);
  rgb(4pt)=(0.984313725490196,0.415686274509804,0.290196078431373);
  rgb(5pt)=(0.937254901960784,0.231372549019608,0.172549019607843);
  rgb(6pt)=(0.796078431372549,0.0941176470588235,0.113725490196078);
  rgb(7pt)=(0.647058823529412,0.0588235294117647,0.0823529411764706);
  rgb(8pt)=(0.403921568627451,0,0.0509803921568627)
},
height=\figureheight,
point meta max=37.7775763350719,
point meta min=0.191364095998952,
tick align=outside,
tick pos=left,
width=\figurewidth,
x grid style={white!69.0196078431373!black},
xlabel={spatial points, \(\displaystyle N_\Space\)},
xmin=0, xmax=370,
xtick style={color=black},
y grid style={white!69.0196078431373!black},
ylabel={time steps, \(\displaystyle N_t\)},
ymin=0, ymax=3200,
ytick style={color=black}
]
\addplot [only marks, scatter, scatter src=explicit, colormap={mymap}{[1pt]
  rgb(0pt)=(1,0.96078431372549,0.941176470588235);
  rgb(1pt)=(0.996078431372549,0.87843137254902,0.823529411764706);
  rgb(2pt)=(0.988235294117647,0.733333333333333,0.631372549019608);
  rgb(3pt)=(0.988235294117647,0.572549019607843,0.447058823529412);
  rgb(4pt)=(0.984313725490196,0.415686274509804,0.290196078431373);
  rgb(5pt)=(0.937254901960784,0.231372549019608,0.172549019607843);
  rgb(6pt)=(0.796078431372549,0.0941176470588235,0.113725490196078);
  rgb(7pt)=(0.647058823529412,0.0588235294117647,0.0823529411764706);
  rgb(8pt)=(0.403921568627451,0,0.0509803921568627)
}]
table [x=x, y=y, meta=colordata]{%
x                      y                      colordata
10 200 0.209886223474839
50 200 9.20221453037094
100 200 12.6518242502073
150 200 26.1948553399749
200 200 29.9451737537473
250 200 34.874541658552
300 200 36.257556050841
350 200 37.7775763350719
20 200 1.69958018316838
10 500 0.191364095998952
50 500 9.34624783731368
100 500 12.8780080367944
150 500 29.2226298894709
200 500 31.9580997699973
250 500 36.4651892382613
300 500 36.4718512931
20 500 1.384244929119
10 1000 0.222229318687188
50 1000 9.23637049994204
100 1000 14.2511879446355
150 1000 29.3505732654635
200 1000 31.9931636164138
20 1000 1.26462714728406
10 1500 0.212686066101984
50 1500 9.10053316174894
100 1500 14.2893207172608
150 1500 28.8867621937637
20 1500 1.20450552632117
10 2000 0.211295873474024
50 2000 9.20349690845478
100 2000 14.347139542643
150 2000 29.0351721196433
20 2000 1.19074382573848
10 2500 0.216360395665854
50 2500 9.2639603036916
100 2500 14.270900475984
20 2500 1.1451175991769
10 3000 0.21976658630949
50 3000 9.39959672376359
100 3000 14.5256631477435
20 3000 1.18842018409756
};
\end{axis}

\end{tikzpicture}
 \vspace{-2em} \caption{sequential (CPU) / sequential (GPU)}
	\end{subfigure}\label{fig:seqgpu_vs_seqcpu}
	\hfill
	\vspace*{-0.1cm}
	\caption{Comparison of relative runtime in seconds when running \STVGP with the sequential filter/smoother on CPU and GPU and the parallel filter/smoother on GPU. The parallel algorithm outperforms the others when the number of spatial points is small, otherwise the sequential (GPU) method is best. The sequential (CPU) algorithm is competitive when 20 spatial points are used.}
	\label{fig:profiling}
\end{figure*}

\begin{table}[h]
	\scriptsize
	\setlength{\tabcolsep}{5.5pt}
	\renewcommand{\arraystretch}{1.25}
	\begin{center}
		\begin{sc}
			\caption{\textbf{Sequential (CPU)}: average training step time (secs) for sequential filter/smoother on CPU.}
			\begin{tabularx}{\textwidth}{|p{0.25cm}|p{0.5cm}|p{0.65cm}|p{0.65cm}|p{0.65cm}|p{0.65cm}|p{0.65cm}|p{0.65cm}|p{0.65cm}|p{0.65cm}|p{0.65cm}|}
				\cline{1-11}
				\multicolumn{2}{|c|}{} & \multicolumn{9}{c|}{Spatial points, $N_\Space$} \\   \cline{3-11}
				\multicolumn{2}{|c|}{} & 10 & 20 & 50 & 100 & 150 & 200 & 250 & 300 & 350  \\ \cline{1-11}
				\multirow{8}{*}{\begin{sideways}$\,$Time steps, $N_t$ \end{sideways}} & 200 & 0.05 & 0.48 & 5.61  & 14.47 & 43.28 & 66.17 & 96.17  & 136.83 & 173.17  \\   \cline{2-11}
				& 500  & 0.12 & 0.97 & 14.23  & 36.93 & 121.43 & 176.67 & 252.40  & 344.02 &     \\ \cline{2-11}
				& 1000  & 0.27 & 1.77 & 28.15  & 81.34 & 239.29 & 353.62 &   &  &     \\ \cline{2-11}
				& 1500  & 0.40 & 2.54 & 41.69  & 122.86 & 357.36 &  &   &  &     \\ \cline{2-11}
				& 2000  & 0.52 & 3.32 & 56.24  & 164.59 & 475.51 &  &   &  &     \\ \cline{2-11}
				& 2500  & 0.65 & 4.06 & 70.62  & 203.48 &  &  &   &  &     \\ \cline{2-11}
				& 3000  & 0.78 & 4.91 & 86.14  & 247.33 &  &  &   &  &    \\ \cline{1-11}
			\end{tabularx}   	\label{table:profiling1}
			\caption{\textbf{Sequential (GPU)}: average training step time (secs) for sequential filter/smoother on GPU.}
			\begin{tabularx}{\textwidth}{|p{0.25cm}|p{0.5cm}|p{0.65cm}|p{0.65cm}|p{0.65cm}|p{0.65cm}|p{0.65cm}|p{0.65cm}|p{0.65cm}|p{0.65cm}|p{0.65cm}|}
				\cline{1-11}
				\multicolumn{2}{|c|}{} & \multicolumn{9}{c|}{Spatial points, $N_\Space$} \\   \cline{3-11}
				\multicolumn{2}{|c|}{} & 10 & 20 & 50 & 100 & 150 & 200 & 250 & 300 & 350  \\ \cline{1-11}
				\multirow{8}{*}{\begin{sideways}$\,$Time steps, $N_t$ \end{sideways}} & 200 & 0.24 & 0.28 & 0.61  & 1.14 & 1.65 & 2.21 & 2.76  & 3.77 & 4.58  \\   \cline{2-11}
				& 500  & 0.62 & 0.70 & 1.52  & 2.87 & 4.16 & 5.53 & 6.92  & 9.43 & ---    \\ \cline{2-11}
				& 1000  & 1.19 & 1.40 & 3.05  & 5.71 & 8.15 & 11.05 & ---  & --- & ---    \\ \cline{2-11}
				& 1500  & 1.86 & 2.11 & 4.58  & 8.60 & 12.37 & --- & ---  & --- & ---    \\ \cline{2-11}
				& 2000  & 2.44 & 2.79 & 6.11  & 11.47 & 16.38 & --- & ---  & --- & ---    \\ \cline{2-11}
				& 2500  & 3.01 & 3.55 & 7.62  & 14.26 & --- & --- & ---  & --- & ---    \\ \cline{2-11}
				& 3000  & 3.56 & 4.13 & 9.16  & 17.03 & --- & --- & ---  & --- & ---    \\ \cline{1-11}
			\end{tabularx}   \label{table:profiling2}
			\caption{\textbf{Parallel (GPU)}: average training step time (secs) for parallel filter/smoother on GPU.}
			\begin{tabularx}{\textwidth}{|p{0.25cm}|p{0.5cm}|p{0.65cm}|p{0.65cm}|p{0.65cm}|p{0.65cm}|p{0.65cm}|p{0.65cm}|p{0.65cm}|p{0.65cm}|p{0.65cm}|}
				\cline{1-11}
				\multicolumn{2}{|c|}{} & \multicolumn{9}{c|}{Spatial points, $N_\Space$} \\   \cline{3-11}
				\multicolumn{2}{|c|}{} & 10 & 20 & 50 & 100 & 150 & 200 & 250 & 300 & 350  \\ \cline{1-11}
				\multirow{8}{*}{\begin{sideways}$\,$Time steps, $N_t$ \end{sideways}} & 200 & 0.04 & 0.21 & 0.94  & 2.42 & 3.08 & 4.31 & 6.13  & 8.26 & --  \\   \cline{2-11}
				& 500  & 0.04 & 0.50 & 2.35  & 6.07 & 7.68 & --- & --- & ---  & ---     \\ \cline{2-11}
				& 1000  & 0.06 & 1.00 & 4.70  & 12.19 & --- & --- & ---  & --- & ---    \\ \cline{2-11}
				& 1500  & 0.08 & 1.49 & 7.05  & 18.28 & --- & --- & ---  & --- & ---   \\ \cline{2-11}
				& 2000  & 0.09 & 2.00 & 9.39  & --- & --- & --- & ---  & --- & ---    \\ \cline{2-11}
				& 2500  & 0.10 & 2.49 & 11.77  & --- & --- & --- & ---  & --- & ---    \\ \cline{2-11}
				& 3000  & 0.12 & 2.99 & 14.13  & --- & --- & --- & ---  & --- & ---    \\ \cline{1-11}
			\end{tabularx}   \label{table:profiling3}
		\end{sc}
	\end{center}
\end{table}

\section{Reformulation of the Spatio-Temporal State Space Model for Spatial Mean-Field}
\label{sec_app:sde_reformulation}

The process noise covariance of the state $\state(t)$ for a spatio-temporal GP is (see \cref{sec:state-space-spatio-temporal}) $\MQ_\timeindex=\MK_{\SPACE\SPACE}^{(\Space)} \otimes \MQ_\timeindex^{(t)}$, where $\MK_{\SPACE\SPACE}^{(\Space)}$ is the spatial kernel evaluated at inputs $\MR$, and $\MQ_\timeindex^{(t)}$ is the process noise covariance of the state-space model induced by the temporal kernel. Similarly, the stationary distribution is given by $\MP_\infty=\MK_{\SPACE\SPACE}^{(\Space)} \otimes \MP_{\infty}^{(t)}$. Letting $\MC_{\SPACE\SPACE}^{(\Space)}$ be the Cholesky decomposition of $\MK_{\SPACE\SPACE}^{(\Space)}$, we can use the Kronecker identities from \cref{app:kronecker} to rewrite the stationary state covariance as,
\begin{align}
\SpaceCov \otimes \MP_{\infty}^{(t)} &= (\SpaceCov \MI_{N_\Space}) \otimes (\MI_{d_t} \MP_{\infty}^{(t)}) \nonumber \\
&= (\SpaceCov \otimes \MI_{d_t}) (\MI_{N_\Space} \otimes \MP_{\infty}^{(t)}) \nonumber \\
&= (\MC_{\SPACE\SPACE}^{(\Space)} \otimes \MI_{d_t}) (\MI_{N_\Space} \otimes \MP_{\infty}^{(t)}) (\MC_{\SPACE\SPACE}^{(\Space)} \otimes \MI_{d_t})^\top,
\end{align}
and hence, recalling that the measurement model is $\MH=\MI_{N_\Space} \otimes \MH^{(t)}$, the prior covariance of the function, $\vf_\timeindex=\MH \vs(t_\timeindex)$, is given by,
\begin{align}
\text{Cov}[\vf_\timeindex] &= (\MI_{N_\Space} \otimes \MH^{(t)}) (\SpaceCov \otimes \MP_{\infty}^{(t)} ) (\MI_{N_\Space} \otimes \MH^{(t)})^\top \nonumber \\
&= (\MI_{N_\Space} \otimes \MH^{(t)}) (\MC_{\SPACE\SPACE}^{(\Space)} \otimes \MI_{d_t}) (\MI_{N_\Space} \otimes \MP_{\infty}^{(t)}) (\MC_{\SPACE\SPACE}^{(\Space)} \otimes \MI_{d_t})^\top (\MI_{N_\Space} \otimes \MH^{(t)})^\top \nonumber \\
&= \underbrace{(\MC_{\SPACE\SPACE}^{(\Space)} \otimes \MH^{(t)})}_{\text{Measurement model, } \MH} \underbrace{(\MI_{N_\Space} \otimes \MP_{\infty}^{(t)})}_{\MP_{\infty}} (\MC_{\SPACE\SPACE}^{(\Space)} \otimes \MH^{(t)})^\top .
\end{align}

We see from the above that the contribution from the spatial kernel can be included as part of the measurement model, $\MH$, rather than the stationary state covariance. The process noise covariance $\MQ_\timeindex$ can be deconstructed in a similar way (for stationary kernels, $\MQ_\timeindex=\MP_{\infty} - \MA_\timeindex \MP_{\infty} \MA_\timeindex^\T$).
Arguably, as discussed in \cref{sec:mean-field}, this presentation of the model is more intuitive since it becomes clear that $N_{\Space}$ latent processes, each with an independent GP prior, are correlated via a measurement model in which the spatial covariance mixes the latent processes to generate the observations.

\paragraph{Sparse Spatial Mean-Field} A similar argument holds for the sparse version of the model (\STSVGP):
\begin{equation}
\MK_{\MZ_{\Space}\MZ_{\Space}}^{(\Space)} \otimes \MP_{\infty}^{(t)} = (\MC_{\MZ_{\Space}\MZ_{\Space}}^{(\Space)} \otimes \MI_{d_t}) (\MI_{M_\Space} \otimes \MP_{\infty}^{(t)}) (\MC_{\MZ_{\Space}\MZ_{\Space}}^{(\Space)} \otimes \MI_{d_t})^\top,
\end{equation}
where $\MC^{(\Space)}_{\MZ_\Space\MZ_\Space}$ is the Cholesky factor of $\MK^{(\Space)}_{\MZ_\Space\MZ_\Space}$. Then,
\begin{align}
\text{Cov}[\vf_\timeindex] &= ([\MK^{(\Space)}_{\SPACE\MZ_\Space}\MK^{-(\Space)}_{\MZ_\Space\MZ_\Space}] \otimes \MH^{(t)}) (\MK_{\MZ_{\Space}\MZ_{\Space}}^{(\Space)} \otimes \MP_{\infty}^{(t)} ) ([\MK^{(\Space)}_{\SPACE\MZ_\Space}\MK^{-(\Space)}_{\MZ_\Space\MZ_\Space}] \otimes \MH^{(t)})^\top \nonumber \\
&= ([\MK^{(\Space)}_{\SPACE\MZ_\Space}\MK^{-(\Space)}_{\MZ_\Space\MZ_\Space}] \otimes \MH^{(t)}) (\MC_{\MZ_{\Space}\MZ_{\Space}}^{(\Space)} \otimes \MI_{d_t}) (\MI_{M_\Space} \otimes \MP_{\infty}^{(t)}) (\MC_{\MZ_{\Space}\MZ_{\Space}}^{(\Space)} \otimes \MI_{d_t})^\top ([\MK^{(\Space)}_{\SPACE\MZ_\Space}\MK^{-(\Space)}_{\MZ_\Space\MZ_\Space}] \otimes \MH^{(t)})^\top \nonumber \\
&= \underbrace{([\MK^{(\Space)}_{\SPACE\MZ_\Space}\MC^{-(\Space)}_{\MZ_\Space\MZ_\Space}] \otimes \MH^{(t)})}_{\text{Measurement model, } \MH} \underbrace{(\MI_{M_\Space} \otimes \MP_{\infty}^{(t)})}_{\MP_{\infty}} ([\MK^{(\Space)}_{\SPACE\MZ_\Space}\MC^{-(\Space)}_{\MZ_\Space\MZ_\Space}] \otimes \MH^{(t)})^\top .
\end{align}
Again, this reformulation represents the same model as the standard form (and gives identical results), but enables the sparse and mean-field approximations to be combined since $\MP_{\infty}$ is now block-diagonal.

\section{Exponential Family -- Multivariate Gaussian Distribution}
\label{sec_app:exp_family}

A Gaussian distribution $q(\vu) = \N(\vu \mid \vm , \postcov)$ is part of the exponential family with
\begin{align}
	\param &= (\vm, \postcov), \\
	\natp &= (\postcov^{-1} \vm, -\frac{1}{2} \postcov^{-1} ), \\
	\meanp &= (\vm, \vm \vm^\top + \postcov),
\end{align}
where $\param$ are the moment parameters, $\natp$ are the natural parameters, and $\meanp$ are the expectation parameters. For further information see \citep{wainwright_exp_family:2008}. For completeness we provide a table to convert between the above parameterisations:

\begin{table}[h]
\caption{Table of conversions between exponential family parameterisations}
\label{table:appendix_gaussian_param_convert}
\vskip 0.15in
\begin{center}
\begin{scriptsize}
\begin{sc}
\begin{tabularx}{\textwidth}{lccc}
\toprule
	 & $\rightarrow \param$ & $\rightarrow \natp$ & $\rightarrow \meanp$ \\
\midrule
	$\param$  & -- & $[\param^{-1}_2 \param_1, - \frac{1}{2} \param^{-1}_2]$ & $[\param_1, \param_1 \param^T_1 + \param_2]$ \\
	$\natp$ & $[\left[ -2\natp_2 \right]^{-1} \natp_1, \left[ -2\natp_2 \right]^{-1} ]$  & -- & $[\left[ -2\natp_2 \right]^{-1} \natp_1, (\left[ -2\natp_2 \right]^{-1} \natp_1)^2 + \left[ -2\natp_2 \right]^{-1}]$\\
	$\meanp$    & $[\meanp_1, \meanp_2-\meanp_1 \meanp_1^T]$ & $[[\meanp_2-\meanp_1 \meanp_1^T]^{-1}\meanp_1, -\frac{1}{2} [\meanp_2-\meanp_1 \meanp_1^T]^{-1}]$ & -- \\
\bottomrule
\end{tabularx}
\end{sc}
\end{scriptsize}
\end{center}
\vskip -0.1in
\end{table}

\section{Alternative Derivation of \CVI Update Equations}
\label{sec_app:cvi_natgrad_equal}

We now show that after a natural gradient step the variational distribution can be decomposed as a conjugate Bayesian step with the model prior and an approximate likelihood. 

\subsection{\CVI Update}
Applying the chain rule to the  approximate likelihood parameters of the \CVI update in \cref{eq:cvi_approx_nat_update} results in:
\begin{align}
	\apxnatp^{(1)}_{t+1} &= (1-\beta) \apxnatp^{(1)}_{t} + \beta \cdot (\gradm - 2\grads \vm)  \label{eqn:app_cvi_update_1} \\
	\apxnatp^{(2)}_{t+1} &= (1-\beta) \apxnatp^{(2)}_{t} + \beta \cdot \grads \label{eqn:app_cvi_update_2}
\end{align}
where, as defined in the main paper,
\begin{align}
	\gradm &= \diff{\E_q \left[ \log p(\MY \mid \vf)\right]}{\vm} \label{eqn:app_grad_m}\\
	\grads &= \diff{\E_q \left[ \log p(\MY \mid \vf)\right]}{\postcov} \label{eqn:app_grad_s}
\end{align}
We now show that we recover \cref{eqn:app_cvi_update_1,eqn:app_cvi_update_2} from standard natural gradients from \cite{hensman_gp_for_big_data:2013}. Recall from \cref{eq:natgrad_hensman} the natural gradient is given by
\begin{equation}
	\natp_{t+1} = \natp_t + \beta \diff{\LL}{\meanp}
\end{equation}
and applying chain rule:
\begin{align}
	\natp^{(1)}_{t+1} &= \natp^{(1)}_t + \beta \left( \diff{\LL}{\vm} \ - 2 \diff{\LL}{\postcov}   \vm \right) \label{eqn:nat_update_1}\\
	\natp^{(2)}_{t+1} &= \natp^{(2)}_t + \beta \left(\diff{\LL}{\postcov} \right) \label{eqn:nat_update_2}
\end{align}
For ease of presentation we consider both natural parameters separately and for both will need the following result: 

\begin{lemma}
Recursions of the form:
\begin{align}
	R_{t+1} = (1-\beta)R_t + \beta b_t + \beta a
\end{align}
where $R_1 = a$ can be rewritten as:
\begin{equation}
	R_{t+1} = \widetilde{R}_{t+1} + a
\end{equation}
where
\begin{equation}
	\widetilde{R}_{t+1} = (1-\beta) \widetilde{R}_t + \beta b_t
\end{equation}
with $\widetilde{R}_1 = 0$.
\label{lemma:simplify_recursions}
\end{lemma}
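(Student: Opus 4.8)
The statement is a simple change of variables in a linear recursion, so the plan is to \emph{define} the shifted sequence and verify it satisfies the claimed recursion and initial condition. Concretely, I would set $\widetilde{R}_t := R_t - a$ for all $t \ge 1$ and show that this $\widetilde{R}_t$ is exactly the sequence described in the lemma.

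First I would check the base case: $\widetilde{R}_1 = R_1 - a = a - a = 0$, using the hypothesis $R_1 = a$. Next I would verify the recursion directly by substituting the definition of $\widetilde{R}_{t}$ into the recursion for $R_{t+1}$:
\begin{align*}
	\widetilde{R}_{t+1} = R_{t+1} - a = (1-\beta)R_t + \beta b_t + \beta a - a = (1-\beta)(R_t - a) + \beta b_t = (1-\beta)\widetilde{R}_t + \beta b_t,
\end{align*}
where the third equality just regroups $\beta a - a = -(1-\beta)a$ and folds it into the $(1-\beta)R_t$ term. This shows $\widetilde{R}_t$ obeys the stated recursion with $\widetilde{R}_1 = 0$, and by definition $R_{t+1} = \widetilde{R}_{t+1} + a$, which is the desired identity.

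There is essentially no obstacle here: the only thing to be careful about is bookkeeping the constant term $a$ (it is the unique fixed point of the map $x \mapsto (1-\beta)x + \beta a$ when $b_t = 0$, which is the conceptual reason the shift works), and noting that the argument is purely algebraic so it applies equally whether $R_t, b_t, a$ are scalars, vectors, or matrices — which is what is needed for the subsequent application to both natural parameters $\natp^{(1)}$ and $\natp^{(2)}$. If a fully rigorous induction is preferred over the change-of-variables phrasing, the same two computations (base case $t=1$ and the inductive step above) establish $R_{t} = \widetilde{R}_{t} + a$ for all $t$ by induction on $t$.
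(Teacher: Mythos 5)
Your proof is correct and is essentially the paper's argument: the key algebraic step $(1-\beta)R_t + \beta b_t + \beta a - a = (1-\beta)(R_t-a) + \beta b_t$ is exactly the inductive step in the paper's proof, which you have merely repackaged as a change of variables $\widetilde{R}_t := R_t - a$ (and you note the induction phrasing yourself). No gaps.
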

\begin{proof}
The proof follows by induction. Using the fact that $R_1=a$:
\begin{align}
	R_2 &= (1-\beta) a + \beta b_1 + \beta a \nonumber \\
	&= (1-\beta) \widetilde{R}_1 + \beta b_1 + a \nonumber \\
	&= \widetilde{R}_2 + a
\end{align}	
with $\widetilde{R}_2=(1-\beta) \widetilde{R}_1 + \beta b_1$, and $\widetilde{R}_1$ = 0. In the next step:
\begin{align}
	R_3 &= (1-\beta) R_2 + \beta b_2 + \beta a \nonumber \\
	&= (1-\beta) (\beta b_1 + a) + \beta b_2 + \beta a \nonumber \\
	&= (1-\beta) (\beta b_1) + \beta b_2 + a \nonumber \\
	&= (1-\beta) \widetilde{R}_2 + \beta b_2 + a \nonumber \\
	&= \widetilde{R}_3 + a
\end{align}
where $\widetilde{R}_3=(1-\beta) \widetilde{R}_2 + \beta b_2$. In the general case:
\begin{align}
	R_{t+1} &= (1-\beta) R_t + \beta b_t + \beta a \nonumber \\
	&= (1-\beta)  (\widetilde{R}_{t} + a) + \beta b_t + \beta a \nonumber \\
	&= (1-\beta)\widetilde{R}_{t} + \beta b_t + a \nonumber \\
	&= \widetilde{R}_{t+1} + a
\end{align}
We have shown that the lemma holds on the first step and in a general step and so the proof holds by induction.
\end{proof}

\subsubsection{First Natural Parameter}

We first show that $\natp^{(1)}_{t+1}$ can be computed efficiently using the \CVI updates. First; substituting \cref{eqn:app_grad_m,eqn:app_grad_s} into \cref{eqn:nat_update_1}:
\begin{align}
	\natp^{(1)}_{t+1} &= \natp^{(1)}_t + \beta \left( \gradm - \MK^{-1} \vm - 2 \left[ \grads - \frac{1}{2} \left[ - \postcov^{-1} + \MK^{-1} \right] \right] \vm\right) \nonumber \\
	&= \natp^{(1)}_t + \beta \left(\gradm - 2 \grads \vm  - \postcov^{-1} \vm \right)
\end{align}
Substituting $\natp^{(1)}_t = \postcov^{-1} \vm$ and adding $\priornatp^{(1)}=0$:
\begin{align}
	\natp^{(1)}_{t+1} &= (1-\beta) \natp^{(1)}_t + \beta \left(\gradm - 2 \grads \vm \right) + \beta \priornatp^{(1)}
\end{align}
Applying \ulemma \ref{lemma:simplify_recursions} we can directly rewrite the recursion as:
\begin{equation}
	\natp^{(1)}_{t+1} = \apxnatp^{(1)}_{t} + \priornatp^{(1)} ~~ \text{where} ~~ \apxnatp^{(1)}_t = (1-\beta) \apxnatp^{(1)}_{t-1} + \beta (\gradm - 2 \grads \vm)
\end{equation}
and $\apxnatp^{(1)}_1 = 0$ and  $\natp^{(1)}_1=\priornatp^{(1)} = 0$. This recovers the \CVI update in \cref{eqn:app_cvi_update_1}.

\subsubsection{Second Natural Parameter}

Following the steps for the first natural parameters we first substitute \cref{eqn:app_grad_s} into \cref{eqn:nat_update_2}:
\begin{align}
	\natp^{(2)}_{t+1} &= \natp^{(2)}_{t} + \beta \left( \grads - \frac{1}{2} \left[ - \postcov^{-1} + \MK^{-1} \right] \right) \nonumber \\
	&= \natp^{(2)}_{t} + \beta \frac{1}{2} \postcov^{-1} + \beta \left(\grads - \frac{1}{2}\MK^{-1} \right) 
\end{align}
substituting $\natp^{(2)}_t= - \frac{1}{2}\postcov^{-1}$ and $\priornatp^{(2)}_t = - \frac{1}{2}\MK^{-1}$:
\begin{equation}
	\natp^{(2)}_{t+1} = (1-\beta) \natp^{(2)}_t + \beta \grads + \beta \priornatp^{(2)}_t
\end{equation}
Applying \ulemma \ref{lemma:simplify_recursions} the above simplifies to:
\begin{equation}
	\natp^{(2)}_{t+1} = \apxnatp^{(2)}_t + \priornatp^{(2)} ~~ \text{where} ~~ \apxnatp^{(2)}_t = (1-\beta) \apxnatp^{(2)}_{t-1} + \beta \grads
\end{equation}
and $\apxnatp^{(2)}_1=0$ and $\natp^{(2)}_1=\priornatp^{(2)}$. This recovers the \CVI update in \cref{eqn:app_cvi_update_2}.

\section{Kronecker Structured Gaussian Marginals}\label{sec:appendix_kronecker_svgp_marginal}

The marginal $q(\vf) = \intg{p(\vf \mid \vu)q(\vu)}{\vu}{}{}$ is a Gaussian of the form:
\begin{equation}
	q(\vf) = \N(\vf \mid  \MA \qum, \Kxx - \MA \Kzx + \MA \qus \MA^T) ~~ \text{where} ~~ \MA = \Kxz \iKzz.
\end{equation}
When $\MK$ can be written as a Kronecker product the above can be simplified. 
\begin{lemma}
	Let $\Kxx = \tKxx \kron \sKxx$, $\Kxz = \tKxz \kron \sKxz$, $\Kzz = \tKxx \kron \sKzz$ then $q(\vf)$ can be decomposed as $q(\vf) = \N(\vf \mid \qfm, \qfs)$ where,
	\begin{align}
		\qfm &= \left[ \MI \kron (\sKxz \siKzz) \right] \qum \\
		\qfs &= \left[ \tKxx \kron \left( \sKxx - \sKxz \siKzz \sKzx \right) \right] + \left[ \MI \kron \left( \sKxz \siKzz \right) \right] \qus  \left[ \MI \kron \left( \siKzz \sKzx \right) \right]
	\end{align}
 
\label{lemma:kronecker_marginal}
\end{lemma}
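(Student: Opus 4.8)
The plan is to substitute the Kronecker factorizations of $\Kxx$, $\Kxz$, $\Kzz$ directly into the standard marginal formula $q(\vf) = \N(\vf \mid \MA\qum, \Kxx - \MA\Kzx + \MA\qfs^{(\vu)}\MA^\T)$ with $\MA = \Kxz\iKzz$, and simplify each of the three terms using only the Kronecker identities \eqref{eqn:kron_inv}--\eqref{eqn:kron_mixed_property} collected in \cref{app:kronecker}. The key observation is that the temporal kernel blocks cancel wherever an inverse of $\Kzz$ is applied, since $\Kzz = \tKxx \kron \sKzz$ shares the \emph{same} temporal factor $\tKxx$ as $\Kxz$; this is exactly the computation already carried out for the prior conditional in \cref{sec_app:sparse_kronecker_decomposition}, so most of the work is reusing \cref{eqn:simplified_conditional}.

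First I would handle the mean. Writing $\MA\qum = \Kxz\iKzz\qum$ and inserting $\Kxz = \tKxx \kron \sKxz$, $\Kzz = \tKxx \kron \sKzz$, applying \eqref{eqn:kron_inv} then \eqref{eqn:kron_mixed_property} gives $\MA = (\tKxx\tiKxx) \kron (\sKxz\siKzz) = \MI \kron (\sKxz\siKzz)$, hence $\qfm = [\MI \kron (\sKxz\siKzz)]\qum$ as claimed. This is identical to the mean computation in \cref{sec_app:sparse_kronecker_decomposition}. Second I would handle the term $\Kxx - \MA\Kzx = \Kxx - \Kxz\iKzz\Kzx$: this is precisely the conditional covariance already simplified in \cref{sec_app:sparse_kronecker_decomposition}, yielding $\tKxx \kron (\sKxx - \sKxz\siKzz\sKzx)$ after applying \eqref{eqn:kron_mixed_property} and \eqref{eqn:kron_addition_lhs}. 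Third I would handle $\MA\qfs^{(\vu)}\MA^\T$: using the form of $\MA = \MI \kron (\sKxz\siKzz)$ derived above and its transpose $\MA^\T = \MI \kron (\siKzz\sKzx)$ (noting $\sKzz$, hence $\siKzz$, is symmetric), this term is directly $[\MI \kron (\sKxz\siKzz)]\,\qfs^{(\vu)}\,[\MI \kron (\siKzz\sKzx)]$. Summing the three contributions gives the stated expression for $\qfs$.

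The only mild subtlety is bookkeeping around the symmetry of $\siKzz$ and the transpose of a Kronecker product, $(\MA\kron\MB)^\T = \MA^\T\kron\MB^\T$, which is needed to write $\MA^\T$ in the form $\MI \kron (\siKzz\sKzx)$ and to keep $\qfs^{(\vu)}$ sandwiched symmetrically. There is no genuine obstacle here — the lemma is essentially a restatement and mild extension of the prior-conditional decomposition in \cref{sec_app:sparse_kronecker_decomposition}, with the extra posterior-covariance term $\MA\qfs^{(\vu)}\MA^\T$ carried along unchanged because $\qfs^{(\vu)}$ does not factor further. I would therefore present the proof as: (i) recall the Gaussian marginal formula; (ii) quote the simplification of $\MA$ and of $\Kxx - \MA\Kzx$ from \cref{sec_app:sparse_kronecker_decomposition}; (iii) expand $\MA\qfs^{(\vu)}\MA^\T$ using $\MA = \MI\kron(\sKxz\siKzz)$; (iv) add the pieces.
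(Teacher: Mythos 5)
Your proposal is correct and follows essentially the same route as the paper's own proof: simplify $\MA = \Kxz\iKzz$ to $\MI \kron (\sKxz\siKzz)$ via the inverse and mixed-product Kronecker identities, reuse the conditional-covariance simplification for $\Kxx - \MA\Kzx$, and sandwich $\qus$ between $\MA$ and $\MA^\T$. The transpose/symmetry bookkeeping you flag is handled implicitly in the paper in exactly the way you describe.
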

\begin{proof}
	Starting with $\qfm$:
	\begin{align}
		\qfm &= \Kxz \iKzz \qum & \nonumber \\
			 &= \big[ ( \tKxx \tiKxx) \kron (\sKxz \siKzz) \big] \qum & \text{apply \cref{eqn:kron_mixed_property}} \nonumber \\
			 &= \big[ \MI \kron (\sKxz \siKzz) \big] \qum .
	\end{align}
	And now dealing with $\qfs$. Let $G = \Kxx - \MA \Kzx$, $K = \MA \qus \MA^T$. From the above we have shown that $\Kxz \iKzz = \big[ \MI \kron (\sKxz \siKzz) \big]$. First we substitute this into $G$:
	\begin{align}
		G &= \Kxx - \big[ \MI \kron (\sKxz \siKzz) \big] \Kzx   &  \nonumber \\
		&= (\tKxx \kron \sKxx) - \big[ \MI \kron (\sKxz \siKzz) \big] \big[ \tKxx \kron \sKzx \big] \nonumber \\
		&= (\tKxx \kron \sKxx) - \big[ \tKxx \kron (\sKxz \siKzz \sKzx) \big] & \text{apply \cref{eqn:kron_mixed_property}} \nonumber \\
		&= \tKxx \kron \big( \sKxx - \sKxz \siKzz \sKzx \big) & \text{apply \cref{eqn:kron_addition_lhs}}
	\end{align}
	And now substituting into $K$:
	\begin{align}
		K = \MA \qus \MA^T &= \big[ \MI \kron (\sKxz \siKzz) \big] \qus \big[ \MI \kron (\sKxz \siKzz) \big]^T \nonumber \\
		&= \big[ \MI \kron (\sKxz \siKzz) \big] \qus \big[ \MI \kron (\siKzz \sKzx ) \big]
	\end{align}
	Combining $G, K$:
	\begin{align}
		\qfs &= G + K \nonumber \\	
		&= \tKxx \kron \big( \sKxx - \sKxz \siKzz \sKzx \big) + \big[ \MI \kron (\sKxz \siKzz) \big] \qus \big[ \MI \kron (\siKzz \sKzx ) \big]
	\end{align}
	which completes the proof.
\end{proof}
When the likelihood factorises across observations only the marginal $q(\qfm_\dindex)$ is required to compute the expected log likelihood. The marginal can be efficiently computed by utilising the fact that $\MI \kron (\sKzx \siKzz)$ is block-diagonal, where there are $\Nt$ blocks each of size $\Ms \times \Ms$. 
\begin{lemma}
	Following \ulemma \ref{lemma:kronecker_marginal} the marginal $q(\vf_\dindex)$ is a Gaussian: $q(\vf_\dindex) = \N(\vf_\dindex \mid \qfm_\dindex, \qfs_\dindex)$ where
	\begin{align}
		\qfm_\dindex &= \sKxzn \siKzzn \qum_{\timeindex} \\
		\qfs_\dindex &= \sigma^2_t  \sKxxn + \sKxzn \siKzzn  \big[ - \sigma^2_t \sKzxn +  \qus_{\timeindex} \sKzzn \sKxzn \big] .
	\end{align}
\label{lemma:kronecker_marginal_n}
\end{lemma}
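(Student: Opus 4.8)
The plan is to specialise \cref{lemma:kronecker_marginal} to a single spatio-temporal input $\gridX_{\dindex}$ by reading off the appropriate block of the Kronecker-structured mean $\qfm$ and covariance $\qfs$. Recall that the data and the inducing points are ordered time-major, so $\vf$ partitions into $N_t$ contiguous blocks $\vf_\timeindex$ of length $N_\Space$ and $\vu$ into blocks $\vu_\timeindex$ of length $M_\Space$; marginalising a jointly Gaussian vector to one such block simply selects the corresponding sub-vector of the mean and principal sub-block of the covariance. Moreover, by the conditional-independence property of separable kernels (\cref{sec:state-space-spatio-temporal}) we have $p(\vf_{\dindex}\mid\vu)=p(\vf_{\dindex}\mid\vu_\timeindex)$, so $q(\vf_{\dindex})$ depends on $q(\vu)$ only through its time-marginal $q(\vu_\timeindex)=\N(\vu_\timeindex\mid\qum_\timeindex,\qus_\timeindex)$, which is exactly the filtering/smoothing output with $\qum_\timeindex=\MH\smoothmean_\timeindex$ and $\qus_\timeindex=\MH\smoothcov_\timeindex\MH^\T$.

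For the mean, \cref{lemma:kronecker_marginal} gives $\qfm=[\MI\kron(\sKxz\siKzz)]\qum$. Since $\MI\kron(\cdot)$ is block-diagonal with repeated diagonal block $\sKxz\siKzz$, the $\timeindex$-th block of $\qfm$ is $(\sKxz\siKzz)\qum_\timeindex$, and picking out the $\spaceindex$-th row (the spatial point $\SPACE_\spaceindex$) yields $\qfm_\dindex=\sKxzn\siKzzn\qum_\timeindex$, as claimed.

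For the covariance I would treat the two summands of $\qfs$ separately. The first, $\tKxx\kron(\sKxx-\sKxz\siKzz\sKzx)$, has $(\timeindex,\timeindex)$ diagonal block equal to the scalar $\sigma_t^2:=\tKxxn=[\tKxx]_{\timeindex\timeindex}=\kappa_t(t_\timeindex,t_\timeindex)$ times $(\sKxx-\sKxz\siKzz\sKzx)$, whose $(\spaceindex,\spaceindex)$ entry is $\sigma_t^2(\sKxxn-\sKxzn\siKzzn\sKzxn)$. The second, $[\MI\kron(\sKxz\siKzz)]\qus[\MI\kron(\siKzz\sKzx)]$, is a block-diagonal conjugation: its $(\timeindex,\timeindex)$ block is $(\sKxz\siKzz)\,\qus_\timeindex\,(\siKzz\sKzx)$ — the off-diagonal blocks of $\qus$ are annihilated and never enter the diagonal block — so its $(\spaceindex,\spaceindex)$ entry is $\sKxzn\siKzzn\,\qus_\timeindex\,\siKzzn\sKzxn$. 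Summing and collecting terms,
\begin{align*}
	\qfs_\dindex &= \sigma_t^2\big(\sKxxn-\sKxzn\siKzzn\sKzxn\big)+\sKxzn\siKzzn\,\qus_\timeindex\,\siKzzn\sKzxn\\
	&= \sigma_t^2\,\sKxxn + \sKxzn\siKzzn\big[-\sigma_t^2\,\sKzxn + \qus_\timeindex\,\siKzzn\sKzxn\big],
\end{align*}
which is the claimed expression. The only genuine obstacle is bookkeeping the time-major block structure — in particular checking that the $\MI\kron(\cdot)$ factors decouple the time blocks so that only the block $\qus_\timeindex$ of the (in general full) posterior covariance survives in $q(\vf_\timeindex)$; once that is established, everything else is a mechanical application of the Kronecker identities of \cref{app:kronecker}.
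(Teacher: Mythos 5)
Your proof is correct and follows essentially the same route as the paper: apply \cref{lemma:kronecker_marginal} and read off the $(\timeindex,\spaceindex)$ entry using the block-diagonal structure of the $\MI\kron(\cdot)$ factors, so that only the block $\qum_\timeindex$, $\qus_\timeindex$ survives. Your final expression $\sKxzn\siKzzn\,\qus_\timeindex\,\siKzzn\sKzxn$ for the second term is the intended one (the lemma's displayed formula writes $\sKzzn\sKxzn$ where $\siKzzn\sKzxn$ is meant), so no gap.
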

\begin{proof}
	To denote a single observation we subscript by $\dindex$ and let $\timeindex$ denote the matrix/vector of elements at  the $\timeindex$'th time step. First dealing with the mean:
	\begin{align}
		\qfm_\dindex &= \left[ \left[ \MI \kron (\sKxz \siKzz) \right] \qum \right]_\dindex \nonumber \\
		&= \left[ \MI \kron (\sKxz \siKzz) \right]_{\timeindex} \qum_{\timeindex} \nonumber \\
		&= \sKxzn \siKzzn \qum_{\timeindex} .
	\end{align}
	Where the second line holds due to $\MI \kron (\sKzx \siKzz)$ being block diagonal and so each block affects separate elements of $\qum$. The last line simply selects the relevant block from the block diagonal matrix. Deriving the form of the variance follows the same steps:
	\begin{align}
		\qfs_\dindex &= \left[ \qfs \right]_{\dindex} \nonumber \\
		&= \tKxxn \cdot \left[ \sKxx - \sKxz \siKzz \sKzx \right]_{\timeindex} + \left[ \MI \kron \left( \sKxz \siKzz \right) \right]_\timeindex \qus_\timeindex  \left[ \MI \kron \left( \siKzz \sKzx \right) \right]_\timeindex \nonumber \\
		&= \sigma^2_t \cdot \left[ \sKxxn - \sKxzn \siKzzn \sKzxn \right] + \sKxzn \sKzzn \qus_{\timeindex} \sKzzn \sKxzn \nonumber \\
		&= \sigma^2_t  \sKxxn + \sKxzn \siKzzn  \big[ - \sigma^2_t \sKzxn +  \qus_{\timeindex} \sKzzn \sKxzn \big] 
	\end{align}
	which completes the proof.
\end{proof}
\section{Block Diagonal Approximate Likelihood Natural Parameters} \label{appendix:cvi_block_diagional}

We now turn to the form of $\gradm$ and $\grads$. The exact value of these can be easily calculated in any automatic differentiation library, but to use \CVI we need to know where the non-zero elements of $\grads$ are.
\begin{lemma}
	The form of $\grads$ is block diagonal where there are $\Nt$ blocks each of size $\Ms \times \Ms$.
	\label{lemma:grads_block_diagonal}
\end{lemma}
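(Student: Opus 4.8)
The plan is to reduce the claim to a structural fact about the objective: the expected log-likelihood $\E_{q(\vu)}\big[\E_{p(\vf\mid\vu)}[\log p(\MY\mid\vf)]\big]$, viewed as a function of the posterior covariance $\postcov^{(\vu)}\in\R^{M\times M}$ (with $M=\Nt\Ms$), depends on $\postcov^{(\vu)}$ only through its $\Nt$ diagonal blocks $\postcov^{(\vu)}_{[\timeindex,\timeindex]}\in\R^{\Ms\times\Ms}$. Once this is established, block-diagonality of $\grads$ is immediate, since a partial derivative of this objective with respect to an entry lying in an off-diagonal block of $\postcov^{(\vu)}$ vanishes.

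First I would use that the likelihood factorises over the data, $\log p(\MY\mid\vf)=\sum_{\timeindex=1}^{\Nt}\sum_{\spaceindex=1}^{\Ns}\log p(\MY_{\timeindex,\spaceindex}\mid\vf_{\timeindex,\spaceindex})$, so that
\begin{equation*}
\E_{q(\vu)}\big[\E_{p(\vf\mid\vu)}[\log p(\MY\mid\vf)]\big]=\sum_{\timeindex=1}^{\Nt}\sum_{\spaceindex=1}^{\Ns}\E_{q(\vf_{\timeindex,\spaceindex})}\big[\log p(\MY_{\timeindex,\spaceindex}\mid\vf_{\timeindex,\spaceindex})\big].
\end{equation*}
Next I would invoke the conditional-independence property for separable kernels recalled in \cref{sec:state-space-spatio-temporal}: the conditional mean map in \cref{eqn:kronecker_conditional} is block diagonal in time, hence $p(\vf_{\timeindex,\spaceindex}\mid\vu)=p(\vf_{\timeindex,\spaceindex}\mid\vu_\timeindex)$, so the univariate marginal $q(\vf_{\timeindex,\spaceindex})=\int\!\int p(\vf_{\timeindex,\spaceindex}\mid\vu)\,q(\vu)\intd{\vu}$ depends on $q(\vu)=\N(\vu\mid\vm^{(\vu)},\postcov^{(\vu)})$ only through the marginal $q(\vu_\timeindex)=\N(\vu_\timeindex\mid\vm^{(\vu)}_\timeindex,\postcov^{(\vu)}_\timeindex)$ — this is exactly the content of \cref{lemma:kronecker_marginal_n}. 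Therefore
\begin{equation*}
\E_{q(\vu)}\big[\E_{p(\vf\mid\vu)}[\log p(\MY\mid\vf)]\big]=\sum_{\timeindex=1}^{\Nt}h_\timeindex\big(\vm^{(\vu)}_\timeindex,\postcov^{(\vu)}_\timeindex\big),\qquad h_\timeindex:=\sum_{\spaceindex=1}^{\Ns}\E_{q(\vf_{\timeindex,\spaceindex})}\big[\log p(\MY_{\timeindex,\spaceindex}\mid\vf_{\timeindex,\spaceindex})\big].
\end{equation*}
Writing $\postcov^{(\vu)}$ as an $\Nt\times\Nt$ array of $\Ms\times\Ms$ blocks, each summand $h_\timeindex$ sees only the block $\postcov^{(\vu)}_{[\timeindex,\timeindex]}$, so the derivative of the sum with respect to the $[i,j]$ block equals $\partial h_i/\partial\postcov^{(\vu)}_{[i,i]}$ when $i=j$ and $\bm{0}$ when $i\neq j$. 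Hence $\grads$ is supported on the $\Nt$ diagonal blocks of size $\Ms\times\Ms$; the symmetry constraint on $\postcov^{(\vu)}$ does not change this, because an index pair and its transpose lie in the same block. (This is in turn why the sparse approximate likelihood covariance $\apxv$ is block diagonal across time, as used in \cref{sec:sparsity}.)

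The step I expect to require the most care is the reduction $q(\vf_{\timeindex,\spaceindex})\leftrightarrow q(\vu_\timeindex)$: the conditional mean in \cref{eqn:kronecker_conditional} is plainly block diagonal in time, but the conditional covariance $\tKxx\kron\widetilde{\MQ}_\Space$ is \emph{not}, so one must note that only its $(\timeindex,\timeindex)$ diagonal entry enters the univariate marginal $q(\vf_{\timeindex,\spaceindex})$ — which is precisely what \cref{lemma:kronecker_marginal_n} makes explicit. The remaining steps are routine bookkeeping with block matrices.
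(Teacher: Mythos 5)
Your proposal is correct and follows essentially the same route as the paper's proof: both reduce the claim to the fact (made explicit in \cref{lemma:kronecker_marginal_n}) that the marginal $q(\vf_{\timeindex,\spaceindex})$, and hence each summand of the expected log-likelihood, depends on $\qus$ only through its $\timeindex$-th diagonal block, so the gradient is supported on the $\Nt$ diagonal blocks of size $\Ms\times\Ms$. The paper phrases this via the chain-rule factor $\partial\qfs_{\dindex}/\partial\qus$ being zero outside the $(\timeindex,\timeindex)$ block, which is the same observation you make when restricting attention to the diagonal entries of the conditional covariance.
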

\begin{proof}
	The partial derivative of the expected log likelihood is:
	\begin{equation}
		\grads = \diff{\E_{q(\vf)} \left[ \log p(\MY \mid \vf) \right]}{\qus} = \sum^{\Nt}_\timeindex \sum^{\Ns}_\spaceindex \diff{\E_{q(\vf_\dindex)} \left[ \log p(\MY_\dindex \mid \vf_\dindex) \right]}{\qus} .
	\end{equation}
	Applying chain rule:
	\begin{equation}
		\grads = \sum^{\Nt}_\timeindex \sum^{\Ns}_\spaceindex  \diff{\E_{q(\vf_\dindex)} \left[ \log p(\MY_\dindex \mid \vf_\dindex) \right]}{\qfs_{\dindex}} \diff{\qfs_{\dindex}}{\qus}.
	\label{eqn:grad_ell_grad_s}
	\end{equation}
	The first term is a scalar and so does not affect the final form. The second term is scalar-matrix derivative:
	\begingroup
	\renewcommand*{\arraystretch}{2} %
	\begin{equation}
		\diff{\qfs_{\dindex}}{\MS} = \left[ \begin{matrix}
			\diff{\qfs_{\dindex}}{\qus_{1,1}} & \diff{\qfs_{\dindex}}{\qus_{2,1}} &  \cdots & \diff{\qfs_{\dindex}}{\qus_{M,1}} \\
			\diff{\qfs_{\dindex}}{\qus_{1,2}} & \diff{\qfs_{\dindex}}{\qus_{2,2}} & \cdots & \diff{\qfs_{\dindex}}{\qus_{M,2}}\\
			\vdots & \vdots & \ddots & \vdots \\
			\diff{\qfs_{\dindex}}{\qus_{M,1}} & \diff{\qfs_{\dindex}}{\qus_{M,2}} & \cdots  & \diff{\qfs_{\dindex}}{\qus_{M,M}}
			\end{matrix} \right]
	\label{eqn:grad_ell_full_partial_s_s}
	\end{equation}
	\endgroup
	The inducing locations $\MZ=\vec([\MZ_\timeindex]^{\Nt}_\timeindex)$ are organised in time blocks and thus $\qus$ is organised into time blocks. It is clear that only elements in \cref{eqn:grad_ell_full_partial_s_s} that correspond to the same time index as $\timeindex$ will be non-zero, because $\qfs_{\dindex}$ only depends on $\qus_\timeindex$. Due to the structure of $\MZ$ these non-zero elements will be one of the $\Nt \times \Nt$ blocks on the block diagonal. The sum in \cref{eqn:grad_ell_grad_s} iterates over every $\dindex$ and so the resulting matrix with have non-zero entries only in the block diagonal.

\end{proof} 
\section{Further Details on Experiments} \label{sec:app_further_experiment_details}

\subsection{Metrics Used} \label{sec:appendix_metrics}

Let $\mathbf{Y} \in \R^{N \times 1}$ be the true value of the test data and $\vmu \in \R^{N \times 1}$, $\vxi \in \R^{N \times 1}$ be the predicted mean and variance, then we report,
\begin{align}
	\text{Root mean square error (RMSE)} &= \sqrt{\frac{1}{N} \sum^N_{n=1} (\mathbf{Y}_n - \vmu_n)^2 } , \\
	\text{Negative log predictive density (NLPD)} &= \frac{1}{N} \sum^N_{n=1} \log \int p(\MY_n \mid \vf_n) \N(\vf_n \mid \vmu_n, \vxi_n) \, \text{d}\vf_n .
\end{align}
With a Gaussian likelihood we make use of closed form solutions to the NLPD, otherwise we rewrite the NLPD as a LogSumExp function and approximate using quadrature with 100 points.

\subsection{Pseudo-periodic Functions}
\label{sec:app_pseudo_periodic_function}

We construct toy datasets based on pseudo-periodic functions \cite{pseudo_periodic_data}:
\begin{equation}\label{eq:per-func}
	\phi(t, c) = \sum^7_{i=3} \frac{1}{2^i} \sin \left( 2 \pi \cdot (2^{2+i} + s_i) \cdot t \cdot c \right) ,
\end{equation}
where $s_i$ are samples from a uniform distribution between 0 and $2^i$ . These functions appear periodic but are never exactly repeating. The ground truth generative model is then defined as $f(t, r) = 50 \phi(t, 3) \sin (4 \pi r)$, with a Gaussian likelihood $y  = f(t, r) + \varepsilon$, $\varepsilon \sim \N(0, 0.01)$.

\subsection{Computational Infrastructure}

The experiments were run across various computational infrastructures.

\paragraph{Run time experiment (\cref{fig:scalability})} These experiments were run on an Intel Xeon E5-2698 v4 2.2 GHz CPU. 

\paragraph{Synthetic Experiment} These experiments were run using 8, Intel Xeon CPU E5-2680 v4 @ 2.4 GHz, CPUs.

\paragraph{Real World Experiments} These experiments were run on an Intel Xeon Gold 6248 @ 2.5 GHz CPU or an NVIDIA Tesla V100 GPU.

\subsection{Baselines}
 
We compare against two baselines, \SVGP \citep{hensman_gp_for_big_data:2013}, \SKI \citep{wilison_ski_gp:2015}:
 
\paragraph{\SVGP:} We use the implementation provided in GPFlow \cite{gpflow:2017}.

\paragraph{\SKI:} We use the implementation provided in GPyTorch \cite{gardner_gpytorch:2018}. We construct a \textit{GridInterpolationKernel} and run with the default grid size or by matching the dimensions of the grid to the corresponding \SVGP.
 
\subsection{Synthetic Experiment}

For all models we use 6 spatial inducing points (or an equivalent grid of inducing points), with the spatial locations initialised through K-means. We initialise the likelihood noise to $0.1$, use a Mat\'ern-$\nicefrac{3}{2}$ kernel with lengthscale and variance of $0.1$ and $1.0$ respectively across all input dimensions, and run for 500 training iterations or one hour, whichever is shortest.

\begin{figure}[H]
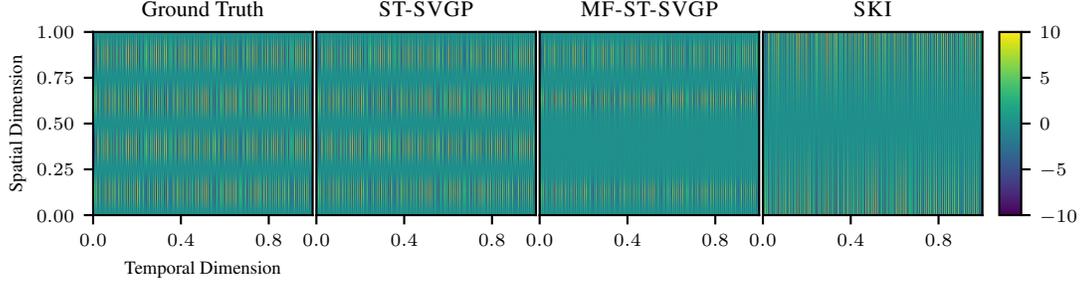

	 \inputpgf{pgf_fig/shutters}{shutter_spatial.pgf}
	 
	 \vspace{-0.4cm}
	 \caption{Test predictions on the synthetic experiment, dataset number 5. The ground truth (first panel) displays rich structure in the temporal dimension, but is smooth in the spatial dimension. Most of the models are able to capture some temporal structure but only \STSVGP is able to accurately recover the ground truth.}
	 \label{fig:app_shutters}
	\vspace*{-0.5em}	 
\end{figure}

We report the RMSE results on the synthetic experiment, detailed in the main paper, in \cref{table:shutters_rmse}. Each model and dataset combination is run five times with a different random seed for the data generation, and the reported means and standard deviations are calculated across these runs in \cref{table:shutters_rmse}. All experiments improve with the increasing training dataset size. The \SVGP does not improve at the same rate as \STSVGP because it very quickly reaches the one hour time limit and so is not trained beyond this. %
\cref{fig:app_shutters} shows the posterior predictive mean for various models on dataset number 5.

\begin{table}[H]
\caption{Synthetic experiment: test RMSE. The size of the training data and its similarity to the test data increase with the dataset number. The mean and standard deviation across five runs is shown, with the data given by five random draws from the generative model given in \cref{eq:per-func}.}
\label{table:shutters_rmse}
\vskip 0.05in
\footnotesize
\setlength{\tabcolsep}{3pt}
\begin{center}
\begin{scriptsize}
\begin{sc}
\begin{tabular}{lccccccc}
\toprule
	Model & 1 & 2 & 3 & 4 & 5 & 6 & 7\\
\midrule
\STCVI  &  4.86 $\pm$ 0.38 & 4.59 $\pm$ 0.21 & 4.42 $\pm$ 0.29 & 3.22 $\pm$ 0.45 & 2.49 $\pm$ 0.07 & 0.45 $\pm$ 0.11 & 0.85 $\pm$ 0.03  \\
\SVGP  &  4.95 $\pm$ 0.38 & 4.61 $\pm$ 0.28 & 4.30 $\pm$ 0.45 & 3.92 $\pm$ 0.10 & 3.78 $\pm$ 0.25 & 3.56 $\pm$ 0.03 & --  \\
\MFSTCVI  &  4.91 $\pm$ 0.38 & 4.63 $\pm$ 0.21 & 4.52 $\pm$ 0.24 & 3.14 $\pm$ 0.36 & 2.71 $\pm$ 0.19 & 1.39 $\pm$ 0.82 & 2.13 $\pm$ 0.04  \\
\SKI  &  3.73 $\pm$ 0.07 & 3.69 $\pm$ 0.03 & 3.71 $\pm$ 0.01 & 3.57 $\pm$ 0.07 & 3.46 $\pm$ 0.02 & 3.34 $\pm$ 0.01 & 3.58 $\pm$ 0.01  \\
\bottomrule
\end{tabular}
\end{sc}
\end{scriptsize}
\end{center}
\vskip -0.1in
\end{table}

\subsection{Comparison of Approximations, \cref{fig:scalability}} \label{app:approx-comparison}

In \cref{fig:scalability} we study the density of a single tree species, \emph{Trichilia tuberculata}, from a 1000$\,$m $\times$ 500$\,$m region of a rainforest in Panama \citep{Condit:2005}. We use a 5$\,$m binning ($N_t=200$) for the first spatial dimension (which we treat as time, $t$), and a varying bin size for the second spatial dimension (which we treat as space, $\Space$). The total number of data points is therefore $N=N_tN_\Space=200N_\Space$. We model the resulting count data use a log-Gaussian Cox process (approximated via a Poisson likelihood with an exponential link function). The spatio-temporal GP has a separable Mat\'ern-$\nicefrac{3}{2}$ kernel. The results show high-resolution binning is required to make accurate predictions on this dataset (the test NLPD falls as the number of spatial bins increases). \cref{fig:rainforest} plots the data for this task, alongside the posterior mean given by the full model.

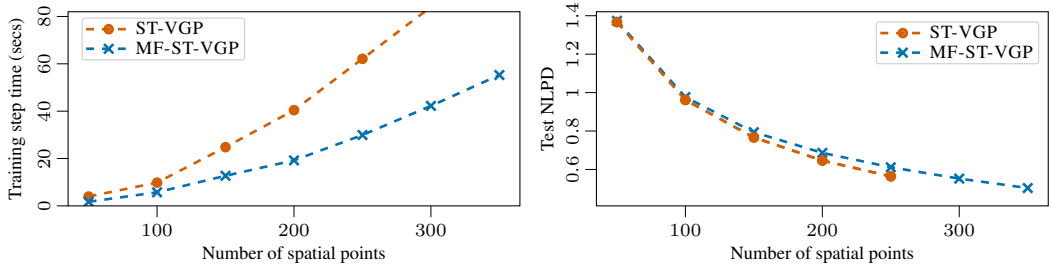
\begin{figure*}[t]
	\scriptsize
	\centering
	\pgfplotsset{yticklabel style={rotate=90}, ylabel style={yshift=0pt}, xlabel style={yshift=1pt},scale only axis,axis on top,clip=true,clip marker paths=true}
	\pgfplotsset{legend style={inner xsep=1pt, inner ysep=1pt, row sep=0pt},legend style={at={(0.98,0.95)},anchor=north east},legend style={rounded corners=1pt}}
	\setlength{\figurewidth}{.43\textwidth}
	\setlength{\figureheight}{.43\figurewidth}
\begin{tikzpicture}

\definecolor{color0}{rgb}{0.835294117647059,0.368627450980392,0}
\definecolor{color1}{rgb}{0,0.447058823529412,0.698039215686274}
\definecolor{color2}{rgb}{0,0.619607843137255,0.450980392156863}

\begin{axis}[
height=\figureheight,
legend cell align={left},
legend style={fill opacity=0.8, draw opacity=1, text opacity=1, at={(0.03,0.97)}, anchor=north west, draw=white!80!black},
tick align=outside,
tick pos=left,
width=\figurewidth,
x grid style={white!69.0196078431373!black},
xlabel={Number of spatial points},
xmin=35, xmax=365,
xtick style={color=black},
y grid style={white!69.0196078431373!black},
ylabel={Training step time (secs)},
ymin=-0, ymax=82,
ytick style={color=black}
]
\addplot [line width=1pt, color0, dashed, mark=*, mark size=1.5, mark options={solid}]
table {%
50 3.94942712783813
100 9.82986462116241
150 24.7969438791275
200 40.4197266817093
250 62.1134148836136
300 84.0306412220001
350 107.727702450752
};
\addlegendentry{\STVGP}
\addplot [line width=1pt, color1, dashed, mark=x, mark size=2.5, mark options={solid}]
table {%
50 1.72235972881317
100 5.72492461204529
150 12.7170865297318
200 19.2151525497437
250 29.9252356767654
300 42.2280537128448
350 55.2650584936142
};
\addlegendentry{\MFSTVGP}
\end{axis}

\end{tikzpicture}
	\hfill
\begin{tikzpicture}

\definecolor{color0}{rgb}{0.835294117647059,0.368627450980392,0}
\definecolor{color1}{rgb}{0,0.447058823529412,0.698039215686274}
\definecolor{color2}{rgb}{0,0.619607843137255,0.450980392156863}

\begin{axis}[
height=\figureheight,
legend cell align={left},
legend style={fill opacity=0.8, draw opacity=1, text opacity=1, draw=white!80!black},
tick align=outside,
tick pos=left,
width=\figurewidth,
x grid style={white!69.0196078431373!black},
xlabel={Number of spatial points},
xmin=35, xmax=365,
xtick style={color=black},
y grid style={white!69.0196078431373!black},
ylabel={Test NLPD},
ymin=0.410543533886014, ymax=1.42083968278815,
ytick style={color=black}
]
\addplot [line width=1pt, color0, dashed, mark=*, mark size=1.5, mark options={solid}]
table {%
50 1.36718791559265
100 0.961850800176402
150 0.766516691730963
200 0.646394969493459
250 0.564309963842669
300 nan
350 nan
};
\addlegendentry{\STVGP}
\addplot [line width=1pt, color1, dashed, mark=x, mark size=2.5, mark options={solid}]
table {%
50 1.37354278062849
100 0.976561477453564
150 0.79400055114041
200 0.686440264902753
250 0.611466298490523
300 0.552460237755402
350 0.503365688238174
};
\addlegendentry{\MFSTVGP}
\addplot [line width=1pt, color0, dashed, mark=*, mark size=1.5, mark options={solid}, forget plot]
table {%
50 1.36718791559265
100 0.961850800176402
150 0.766516691730963
200 0.646394969493459
250 0.564309963842669
300 nan
350 nan
};
\end{axis}

\end{tikzpicture}
	\vspace*{-0.1cm}
	\caption{Comparison of \STVGP and \MFSTVGP. A two-dimensional grid of count data is binned with 200 time steps, $N_t$, and a varying number of spatial bins, $N_\Space$. A Mat\'ern-$\nicefrac{3}{2}$ prior is used ($d_t=2$, so $d=2N_\Space$). We show the time taken to perform one training step, averaged across 10 runs (\textbf{left}), and the test negative log predictive likelihood using 10-fold cross-validation (\textbf{right}).} %
	\label{fig:scalability}
\end{figure*}

\begin{figure*}[t]
	\scriptsize
	\centering
	\pgfplotsset{yticklabel style={rotate=90}, ylabel style={yshift=0pt}, xlabel style={yshift=1pt},scale only axis,axis on top,clip=true,clip marker paths=true}
	\pgfplotsset{legend style={inner xsep=1pt, inner ysep=1pt, row sep=0pt},legend style={at={(0.98,0.95)},anchor=north east},legend style={rounded corners=1pt}}
	\setlength{\figurewidth}{.41\textwidth}
	\setlength{\figureheight}{.5\figurewidth}
\begin{tikzpicture}

\begin{axis}[
height=\figureheight,
tick align=outside,
tick pos=left,
width=\figurewidth,
x grid style={white!69.0196078431373!black},
xlabel={first spatial dimension, \(\displaystyle t\) (metres)},
xmin=0, xmax=1000,
xtick style={color=black},
y grid style={white!69.0196078431373!black},
ylabel={second spatial dimension, \(\displaystyle \Space\) (metres)},
ymin=0, ymax=500,
ytick style={color=black}
]
\addplot graphics [includegraphics cmd=\pgfimage,xmin=-227.028387096774, xmax=1191.75870967742, ymin=-103.309285714286, ymax=610.547857142857] {./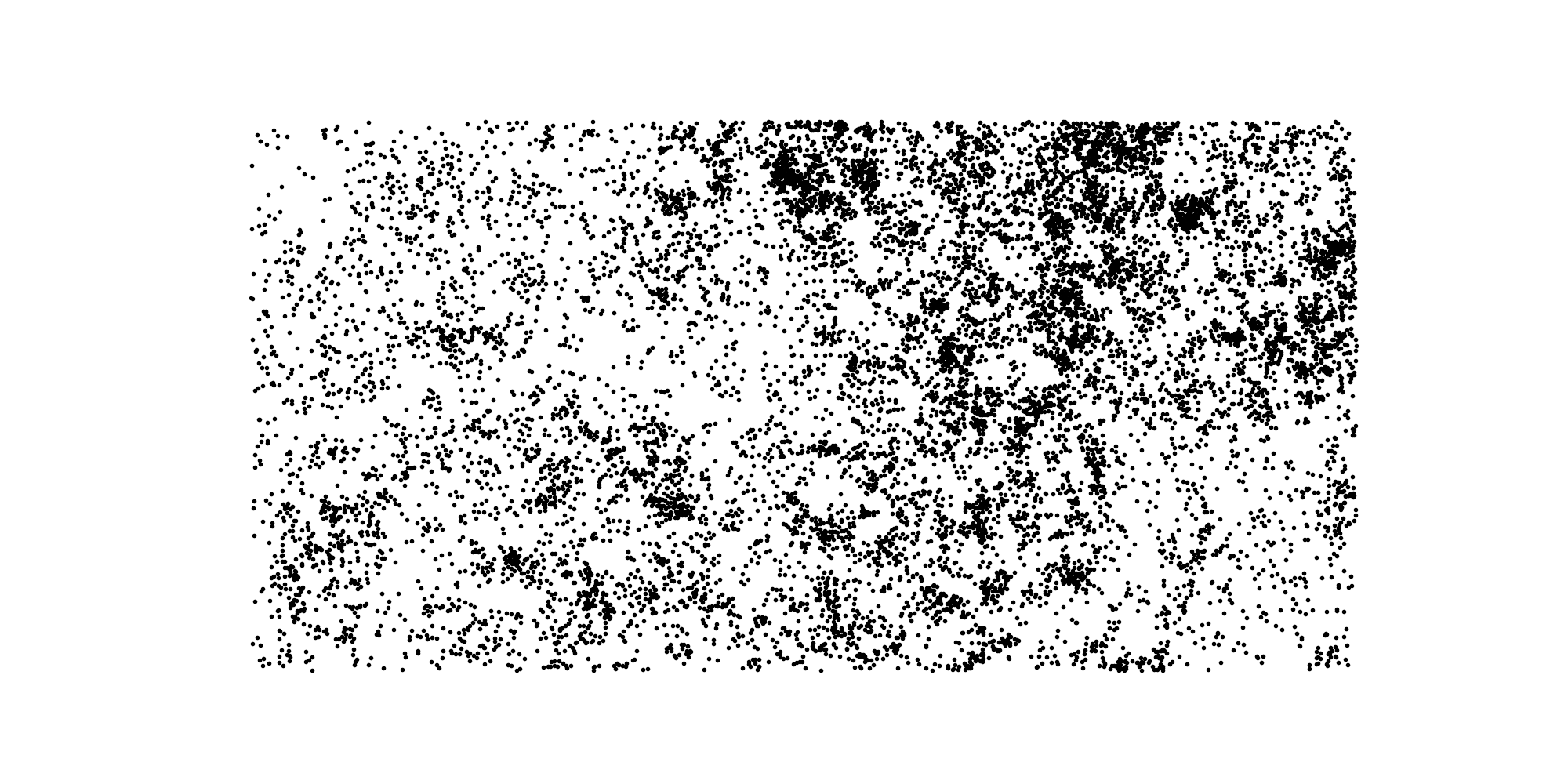};
\end{axis}

\end{tikzpicture}
 	\hfill
	\hspace{-1em}
\begin{tikzpicture}

\begin{axis}[
height=\figureheight,
tick align=outside,
tick pos=left,
width=\figurewidth,
x grid style={white!69.0196078431373!black},
xlabel={first spatial dimension, \(\displaystyle t\) (metres)},
xmin=0, xmax=1000,
xtick style={color=black},
y grid style={white!69.0196078431373!black},
ylabel={\phantom{second spatial dimension, \(\displaystyle \Space\) (metres)}},
ymin=0, ymax=500,
ytick style={color=black}
]
\addplot graphics [includegraphics cmd=\pgfimage,xmin=0, xmax=1000, ymin=0, ymax=500] {./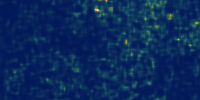};
\end{axis}

\end{tikzpicture}
	\caption{The tree count data used for the comparison of spatial approximations. The tree locations (\textbf{left}), are binned with temporal resolution $N_t=200$ and spatial resolution $N_\Space=100$, and a log-Gaussian Cox process is applied. The posterior mean given by the full model (\STVGP) is shown (\textbf{right}). See text for further details.}\label{fig:rainforest}
\end{figure*}

\subsection{Air Quality}

For all models we initialise the likelihood noise to $5.0$, use a Mat\'ern-$\nicefrac{3}{2}$ kernel with lengthscales initialised to $[0.01, 0.2, 0,2]$ and variance to $1.0$ and run for 300 epochs. See \cref{fig:air_quality_timeseries} for an example of the posterior obtained for a single spatial location over the course of three months.

\subsection{\nyccrime}

For all models we use a Mat\'ern-$\nicefrac{3}{2}$ kernel with lengthscales initialised to $[0.001, 0.1, 0,1]$ and variance to $1.0$ and run for 500 epochs. We use a natural gradient step size of $0.1$. See \cref{fig:teaser} for demonstrative plots of the predicted crime counts over NYC given by \STVGP across eight days in 2015.

\subsection{Downloading Data}

We have published the exact train-test folds for each dataset in \citet{hosted_data}.

 }
\fi

\end{document}